\documentclass[twoside,11pt]{article}

\usepackage{blindtext}
\usepackage{xcolor}
\usepackage{jmlr2e}
\usepackage{cancel}

\usepackage{lastpage}
\jmlrheading{23}{2026}{1-\pageref{LastPage}}{7/26; Revised 6/26}{}{21-0000}{Jeanie Schreiber, Tyrus Berry, and Zeeshan Ahmed}

\ShortHeadings{ODIN}{Schreiber, Berry, and Ahmed}
\firstpageno{1}

\begin{document}

\title{Orthogonal Dendritic Intrinsic Networks: \\ An Architecture for Significance-Ordered, Orthogonal Latent Spaces}

\author{\name Jeanie Schreiber \email jschrei@gmu.edu \\
       \addr Department of Mathematical Sciences\\
       George Mason University\\
       Fairfax, VA 22030, USA
       \AND
       \name Tyrus Berry \email tberry@gmu.edu \\
       \addr Department of Mathematical Sciences\\
       George Mason University\\
       Fairfax, VA 22030, USA
       \AND
       \name Zeeshan Ahmed \email zeeshan.ahmed@nist.gov \\
       \addr Sensor Science Division, Physical Measurement Laboratory\\
       NIST\\
       Gaithersburg, MD 20899, USA}

\editor{}

\maketitle

\begin{abstract}

\end{abstract}
Principal Component Analysis or PCA-like properties (orthogonality, variance ranking) are seldom realized in deep autoencoder architectures. In this work, we present ODIN (Orthogonal Dendritic Intrinsic Network), a novel autoencoder architecture that recovers PCA-like latent structure in a fully non-linear regime. By incorporating a set of geometric constraints directly into the training objective, ODIN encourages latent dimensions to be mutually orthogonal and ordered by explained variance, mirroring the interpretable decomposition of PCA while retaining the expressive power of deep networks. We provide theoretical grounding for these constraints and demonstrate their compatibility with standard encoder-decoder frameworks. We also establish empirical results for both synthetic and real world datasets, establishing a principled path toward interpretable, structured feature learning and dimensionality reduction.

\begin{keywords}
  Autoencoder, PCA, Non-linear PCA, Dimensionality Reduction, Feature Learning, interpretable ML
\end{keywords}

\section{Introduction}
Traditional autoencoders consisting of an encoder and a decoder compress data into a latent space of lower dimensionality before the decoder attempts to reconstruct the original input from the latent representation. The loss function, usually based on reconstruction error, drives the model to learn efficient latent embeddings encoding important features of the data. While effective, the standard autoencoder architecture places no structural constraints on the latent space, often utilizing the entire latent representation for inference and understanding. With no notion of ordering or separability, the resulting features are uniformly smeared across the latent space, making it challenging to interpret which latent dimensions correspond to meaningful input characteristics. 

The inherent ambiguity of latent space organization presents a common challenge in applying autoencoders for dimensionality reduction. The traditional autoencoder architecture produces latent representations with entangled, non-orthogonal directions that vary unpredictably across training runs. This instability complicates both reproducibility and interpretation.

The Orthogonal Dendritic Intrinsic Network (ODIN) architecture addresses these limitations through a key innovation inspired by biology. First, ODIN employs a dendritic structure that enforces hierarchical importance ranking of latent dimensions. By restricting decoder access to cumulative subsets of latent variables (e.g., only the first $k$ components), the network learns to prioritize directions that maximize reconstruction fidelity when progressively accrued. Additionally, strict latent space orthogonality is enforced through geometric constraints in the learning function, ensuring consistent axis alignment across training sessions. Together, these mechanisms enforce orthogonal and importance-ordered latent dimensions, mirroring PCA's eigenvalue ranking while maintaining neural networks' non-linear expressive power.

Critically, ODIN achieves this disentanglement in a completely unsupervised manner, unlike methods requiring labeled data or predefined feature hierarchies. The orthogonalized latent directions separate intrinsic attributes of the input data corresponding to statistically independent variation modes, enabling researchers to:

\begin{enumerate}
    \item Identify dominant variation modes through variance-ranked latent dimensions.
    \item Perform stable feature ablation studies by systematically removing less important axes.
    \item Achieve reproducible latent assignments across independent training runs, enabling reliable cross-experiment comparisons using consistent latent space coordinate systems.
\end{enumerate}

For AI explainability, ODIN's architecture creates an interpretable bridge between raw data and latent representations. The orthogonalized dimensions permit linear decomposition analyses typically reserved for PCA, while the non-linear encoding preserves complex relationships that linear methods might miss. This dual capability makes ODIN particularly valuable for domains requiring both model transparency and high-dimensional pattern recognition, such as biomedical signal processing or materials science characterization.

\section{Background}
Autoencoders are a  powerful tool for dimensionality reduction, offering flexibility in capturing non-linear patterns that traditional PCA cannot. A comprehensive review of autoencoder architectures and their variants by \cite{li2023comprehensive} highlights the breadth of advancements in this field. However, the review mentions a notable gap: most autoencoder designs do not attempt to replicate key features of PCA, such as orthogonality and variance-based sorting of latent space components. These properties are crucial for interpretability and systematic feature extraction, yet they remain largely unaddressed in autoencoder research.

The fundamental connection between linear autoencoders and PCA was rigorously established by \cite{plaut2018principal}, who demonstrated that single-hidden-layer linear autoencoders span principal subspaces equivalent to PCA loading vectors. However, just as PCA solutions are unique only up to orthogonal transformations, gradient descent can converge to multiple equivalent subspaces that all achieve the same minimal reconstruction error. Moreover, this equivalence breaks completely when introducing non-linear activations, leaving open the question of how to preserve PCA-like interpretability in deep architectures.

Kernel PCA may be regarded as the most natural extension of PCA to non-linear data by way of kernel transformations rather than activation functions. Kernel PCA projects data into higher-dimensional spaces where non-linear patterns become linear, enabling variance-based sorting akin to PCA while accommodating complex relationships \cite{pei2018study, majumdar2021kernelized, pei2017autoencoder}. Despite its effectiveness in handling non-linear data, Kernel PCA lacks the generative capabilities of autoencoders and requires explicit kernel selection, limiting its adaptability across diverse datasets.

Recent advances in neural implementations of PCA-like dimensionality reduction reveal persistent challenges in combining non-linear expressivity with structured latent spaces. \cite{ren2024learnable} developed an autoencoder architecture using Cayley transforms to enforce latent space orthogonality, achieving non-linear feature extraction comparable to Kernel-PCA. Parallel efforts by \cite{wang2019clustering} introduced orthogonal constraints through loss function regularization, minimizing the discrepancy between latent vector correlations and the identity matrix. While these methods successfully separate intrinsic data attributes into distinct latent components and can be effective for clustering tasks, they lack mechanisms for importance-based feature ordering.

Recent work by \cite{pham2022pca} directly addresses latent component independence and importance ordering through their PCA-Autoencoder framework. Their method introduces two innovations:
\begin{enumerate}
    \item Covariance minimization: A loss term penalizing off-diagonal elements of the latent space covariance matrix to enforce statistical independence.
    \item Sequential latent expansion: Training a progression of autoencoders where each iteration adds new latent dimensions while freezing previously learned components.
\end{enumerate}

While this approach successfully separates latent features on datasets like CelebA and synthetic ellipse images, it suffers from critical limitations. The sequential training protocol requires carefully scheduled training stages and can be sensitive to transition timing, potentially limiting global optimization. Because each stage depends on the previous one, the method is not fully end-to-end differentiable in the traditional sense and can introduce inefficiencies or overfitting if stage boundaries are not tuned precisely. More fundamentally, the implicit assumption that frozen components retain maximal variance contributions breaks down in practice, as later training stages often reallocate explanatory power to newly added dimensions.

\cite{perozo2024principal} recent introduction of POLCA Net represents a significant advancement in neural implementations of PCA-like dimensionality reduction. This architecture makes use of a multi-objective loss function that includes orthogonality loss and variance regularization terms. Similar to the work of Pham and Ladjal, the orthogonality loss term penalizes off-diagonal elements of the latent space covariance matrix to enforce independence while maintaining non-linearity. However, ordering of latent space components is achieved through statistical estimates and indirect variance maximization. In this respect, POLCA Net shares a fundamental characteristic with other variance-based regularization approaches such as the Autoencoder with Ordered Variance (AEO)~\cite{augustine2024autoencoder}, which modifies the reconstruction loss with a variance regularization term that explicitly enforces monotonically decreasing variance across latent dimensions. Through the lens of model regularization, ODIN's dendritic structure may be viewed as an implicit structural prior\footnote{See also~\cite{chen2024compressing}}, while its orthogonality penalty serves as an explicit constraint on latent geometry. We argue in the next section that this coupling yields latent representations that are both more generalizable and more interpretable in the non-linear setting than those produced by variance-based approaches.

Another approach for constraining neural network weights involves Riemannian gradient descent, which leverages Riemannian geometry and optimization techniques on matrix manifolds to generalize standard gradient descent methods~\cite{harandi2016generalized,fei2025survey}. This framework enables the intrinsic enforcement of desirable structural properties such as orthogonality, positive definiteness, or fixed-rank constraints by performing optimization directly on the constrained manifold rather than in unconstrained Euclidean space. For example,~\cite{harandi2016generalized} enforce orthogonality in neural network weight matrices by optimizing on the Stiefel manifold (the set of matrices with orthonormal columns) using retraction operators such as QR decomposition to maintain the constraint after each gradient step. While such Riemannian optimization methods can strictly enforce orthogonality during training and have demonstrated convergence guarantees in both convex and nonconvex settings, for the purposes of comparison to more standard backpropagation-based autoencoders, here we opt for simpler loss function penalties to enforce orthogonality.

An alternative school of thought for implementing PCA-like operations involves Hebbian learning, a biologically plausible network strategy that requires only pre- and post-synaptic activity correlations. The fundamental update rule follows $\Delta w\propto y\langle x_i\rangle$, where synaptic weights increase proportional to correlations between pre-synaptic $(x_i)$ and post-synaptic $(y)$ activities~\cite{olshausen1998linear}. Extensions such as Sanger's rule generalize this framework to extract multiple principal components through a hierarchical deflation procedure, sequentially learning components ordered by eigenvalue magnitude. In contrast to autoencoder approaches which impose orthogonality as explicit constraints (\emph{e.g.} cosine similarity or Stiefel layers in Riemannian Optimization), Hebbian methods learn through local correlation-based rules with exponential convergence. However, Hebbian networks are consistently outperformed by standard backpropagation-trained counterparts and remain insufficient for the demands of modern deep learning pipelines~\cite{amato2019hebbian, miconi2021hebbian, kierkegaard2023comparison}.

\subsection*{Variational Autoencoders}
Variational Autoencoders (VAEs) are widely recognized as versatile generative models and may be considered as a natural point of comparison for ODIN, primarily because of their ability to map high-dimensional data into a structured latent space. VAEs operate by transforming latent variables drawn from a simple prior distribution, typically a multivariate Gaussian, into data-like samples through a learned generative network. During training, an encoder approximates the posterior distribution over latent variables, while a decoder learns to reconstruct input data from these latent representations. These probabilistic constraints generally lead to what are called ``disentangled'' representations, in which a single latent variable is sensitive to changes in a single generative factor and is relatively invariant to changes in other factors \cite{bengio2013representation}. In particular, recent findings, such as by \cite{ichikawa2024learning}, demonstrate that VAEs begin learning entangled representations and gradually acquire disentangled latent factors during training. In addition, contemporary work by \cite{ghojogh2021factor} revealed theoretical connections between PCA, factor analysis, and variational autoencoders (VAEs). See also \cite{rolinek2019variational} who demonstrate that the diagonal posterior approximation in VAEs induces local orthogonality patterns resembling non-linear PCA.

VAEs model the data generation process as $p(x) = \int p(x|z) p(z) dz$, where the latent variable $z$ is drawn from a prior distribution $p(z)$, typically $\mathcal{N}(0, I)$. During training, exact inference of the posterior $p(z|x)$ is intractable, necessitating approximation through a variational distribution $q(z|x)$. The encoder network parameterizes this approximate posterior by outputting distribution parameters (specifically, $\mu(x)$ and $\sigma^2(x)$ for the Gaussian case) such that $q(z|x) = \mathcal{N}(z \mid \mu(x), \sigma^2(x))$. To enable gradient-based optimization while maintaining stochasticity, the reparameterization trick expresses the latent code as $z = \mu(x) + \sigma(x) \odot \epsilon$, where $\epsilon \sim \mathcal{N}(0, I)$ represents external randomness independent of the network parameters. This formulation allows differentiation through the sampling operation, as gradients can flow through $\mu(x)$ and $\sigma(x)$ while treating $\epsilon$ as constant. The training objective maximizes the Evidence Lower Bound (ELBO),
\[\text{ELBO} = \mathbb{E}_{q(z|x)}[\log p(x|z)] - \text{KL}(q(z|x)||p(z))\]
where the first term maximizes data log-likelihood and the second term (called the Kullback–Leibler (KL) divergence) penalizes deviation of the approximate posterior $q(z|x)$ from the prior $p(z)$. This stochastic framework is essential for the generative capacity of VAEs, enabling both principled uncertainty quantification and the ability to sample novel data points from the learned latent space~\cite{kingma2014semi, Liu_VAE_UQ_2025}.

In practice, VAEs are often optimized using the $\beta$-VAE objective, which introduces a tunable parameter $\beta$ to the KL divergence term in the ELBO loss to balance the two competing objectives~\cite{Higgins2016betaVAELB, higgins2017beta}. This adjustment allows for controlled trade-offs between error fidelity and latent space regularization. $\beta$-VAE's  are capable of providing significant improvements over traditional VAE's on tasks like image generation, and are today widely considered as state-of-the-art models for representation learning~\cite{ higgins2017beta, locatello2019challenging}.

However, this flexibility comes with a well-documented drawback: as $\beta$ increases, the variational posterior tends to converge toward the prior during optimization, effectively rendering the latent variables uninformative of the input data. This detrimental effect, known as ``posterior collapse'', undermines the expressive power of the latent space and limits the effectiveness of VAEs in representation learning tasks~\cite{He2019LaggingIN}. VAEs struggle particularly with capturing low-variance directions, as these dimensions are most susceptible to collapse. Identifying an appropriate $\beta$ value remains a nontrivial challenge, as this hyperparameter must be carefully tuned to achieve an effective compromise between two competing objectives~\cite{fil2021beta}.

In addition, while implicit probabilistic constraints enable partial disentanglement, they do not enforce any explicit ordering of latent variables. The learned latent space exhibits permutation symmetry, meaning dimensions can be arbitrarily reordered without affecting reconstruction quality. Empirically, this manifests as inconsistent feature assignments across training runs. For example, dimension 3 in one instantiation might capture a high-importance feature, while in another run this same information migrates to dimension 5. This inconsistency severely limits utility for comparing results across experiments or performing systematic ablation studies.

Despite their prominence in generative modeling and representation learning, VAEs are not natively designed for supervised downstream tasks such as regression, and their application in these settings remains comparatively under-explored~\cite{zhao2019variational}. A key contributing factor is that the latent representations learned by VAEs are not guaranteed to be discriminative or predictively informative; uninformative latent features are a well-documented failure mode that arises when the balance between reconstruction and regularization is not carefully managed~\cite{zhao2017towards}. Benchmarking studies have also demonstrated that the large number of hyperparameters requiring tuning in VAE-based models (including $\beta$) directly impacts downstream predictive performance, making consistent and reliable results difficult to achieve across varied datasets and tasks~\cite{eltager2023benchmarking}.

ODIN, by contrast, avoids excessive hyperparameter tuning and achieves latent space disentanglement through architectural constraints rather than probabilistic regularization. Moreover, the dendritic decoder structure provides explicit variance-based hierarchical ranking of latent features, eliminating permutation symmetry and allowing for consistent, reproducible latent mode assignments across independent training runs.  

\section{ODIN}

The architecture introduced in ODIN modifies the conventional autoencoder setup by incorporating multiple `dendritic' decodings alongside the standard full decoding. Each dendritic decoding is restricted to a cumulative subset of the full latent representation. The first decoding reconstructs the data using only the first latent component, the second decoding uses the first two, and so on until all components are included ( Figure \ref{fig:Dend Net}). Specifically, given a dataset $X\in \mathbb{R}^{N\times n}$ consider an encoding into a $k$-dimensional latent space, yielding latent representation $Z = [z_1, z_2, \dots z_k]\in \mathbb{R}^{N\times k}$. For each $j \in \{1,\dots,k\}$, let $Z_{[j]} = Z[:,1:j] = [z_1, \dots, z_j]$ denote the first $j$ columns of $Z$. If $Dec$ denotes a decoder mapping from the latent space to the data space, then the $j^{th}$ reconstruction $\hat{X}_j$ produced by the decoder is defined as:
\begin{equation}
    \hat{X}_j = Dec(Z_{[j]}).
\end{equation}
In other words, at decoding step $j$, the decoder is presented with only the first $j$ learned features so that $\hat{X}_j$ approximates $X$ using only the first $j$ latent components.

\begin{figure}[h!]
    \centering
    \includegraphics[width=\linewidth]{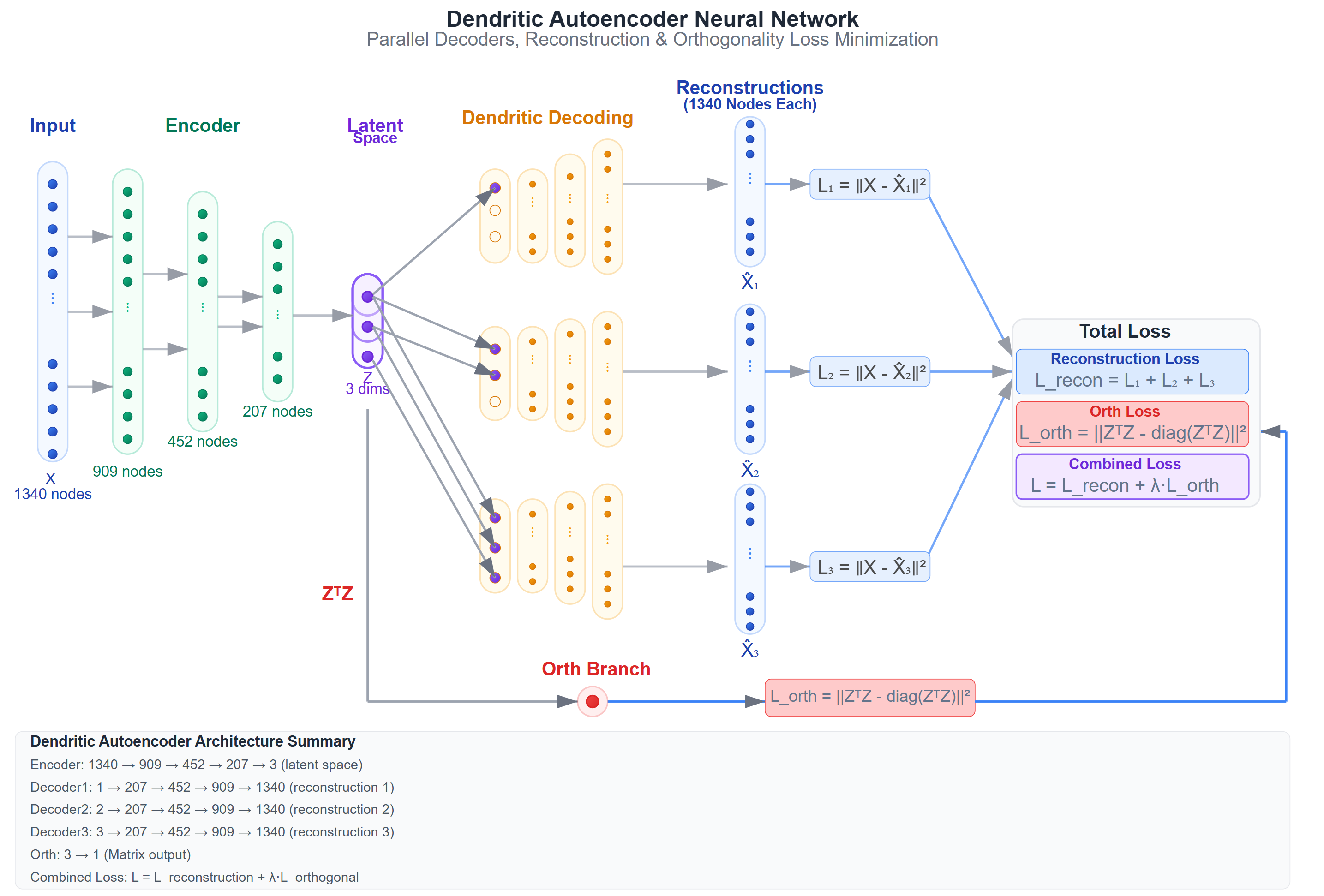}
    \caption{Architecture of ODIN. The encoder maps input data to a latent representation $Z = [z_1, \dots, z_k]$, from which each dendritic branch produces a reconstruction using cumulatively larger number of latent variables. The total loss combines each reconstruction term with an orthogonality penalty that encourages de-correlation among latent dimensions.}
    \label{fig:Dend Net}
\end{figure}

Each dendritic decoding in the ODIN network produces output reconstructions of the same size as the original input data, regardless of how many latent dimensions it accesses. During training, the objective error (according to the loss function) is calculated for each dendritic decoding individually, and these errors are then aggregated into a combined loss. Specifically, given reconstructions $\hat{X}_j$ from subsets $Z_{[j]}$ of the latent space, the mean squared error aggregate is
\begin{equation}
    \mathcal{L}_{\text{Dend}} = \sum_{j=1}^k L_j =  \sum_{j=1}^{k} \mathcal{L}_{\text{MSE}}(X, \hat{X}_j)
\end{equation}
This total loss guides the backpropagation process, forcing each decoding to maximize reconstruction quality given its available features. Through this hierarchical setup, the network learns to allocate representational capacity such that earlier decodings capture the most meaningful variation, enforcing an implicit ordering of latent importance such that the earliest are most critical for faithful reconstruction. In effect, this approach mimics the PCA variance sorting in which the principal components give a sorted set of directions with maximal variance.

Although the total $\mathcal{L}_{\mathrm{Dend}}$ error used during backpropagation increases with the introduction of dendritic loss terms, bounding it away from zero (see appendix~\ref{app:ODIN}), faithful reconstructions remain attainable by relying on the output of the final (full) decoding. When given access to all latent variables (\emph{i.e.}, all degrees of variation) the full decoding can, in principle, reconstruct the entire data distribution with a theoretical lower bound of zero reconstruction error. This ensures that, despite the added regularization, the ODIN model continues to produce reconstructions that are both highly accurate and visually consistent with the original data.

To implement dendritic decodings in practice, there are generally two approaches. The first uses an ensemble of $k$ independent decoders ${Dec_1, Dec_2, \dots, Dec_k}$, where decoder $Dec_j$ receives a fixed $j$-dimensional input and produces reconstruction $\hat X_j = Dec_j(Z_{[j]})$. The second approach employs a single shared decoder $Dec$ called $k$ times on cumulatively zero-masked versions of the full latent representation, \emph{i.e.} $\hat X_j = Dec(Z_{[j]}\oplus \mathbf{0}_{j+1:k})$, where $\oplus$ denotes concatenation with the zero-padding matrix $\mathbf{0}_{j+1:k}\in \mathbb{R}^{N\times (k-j)}$.

In informal comparisons across both linear and non-linear experiments, we observe empirically that the single-decoder formulation achieves similar reconstruction performance to the independent decoder ensemble while reducing the total number of trainable parameters by a factor of $k$. This parameter efficiency improves training stability, accelerates convergence, and facilitates scaling to higher latent dimensionalities without excessive memory overhead.

To further emulate PCA principle components, the cumulative structure of dendritic decoding works in concert with orthogonality constraints to enforce non-overlapping features and further stabilize axis alignment. Following \cite{perozo2024principal}, ODIN introduces an explicit orthogonality loss term:
\begin{equation}
    \mathcal{L}_{\text{orth}} = \sum_{i<j}S_{ij}^2
\end{equation}
Where $S_{ij}$ is the $(i,j)^{th}$ component of $S = Z^TZ$. This term penalizes off-diagonal entries in the latent similarity matrix, thereby enforcing feature disentanglement. In addition, we incorporate a moving average into this process to account for batching of large datasets. This geometric approach ensures that latent dimensions remain statistically independent without relying on probabilistic approximations. 

The total training objective is thus defined as a multi-objective loss,
\begin{equation}
\mathcal{L}(W,X) = \mathcal{L}_{\text{Dend}}+ \lambda_{\text{orth}} \cdot \mathcal{L}_{\text{orth}}
    \label{eq:total_loss}
\end{equation}
where $\lambda_{\text{orth}}$ weights the orthogonality constraint. The result is a latent space whose dimensions are both disentangled and importance-ranked, facilitating downstream tasks such as ablation studies and robust explainability analyses.

In terms of latent space ordering, ODIN adopts architectural restrictions similar to~\cite{pham2022pca} sequential training approach but eliminates the need for training multiple networks or freezing previously learned components. ODIN achieves significance-based ordering within a single, jointly trained network where structural restrictions on decoder access encourage the model to prioritize the most informative latent dimensions.

Compared to the $\beta$-VAE’s method of enforcing independence probabilistically, ODIN's approach with $\lambda_{\text{orth}}$ directly penalizes correlations in the latent space via measurable covariance properties. This makes it less prone to failures like posterior collapse and more robust to changes in the parameter. In practice, we found tuning the parameter $\lambda_{\text{orth}}$ to be less sensitive compared to $\beta$ in $\beta$-VAE's, making it easier to adjust. Since $\lambda_{\text{orth}}$ directly scales the orthogonality loss, we often start with a moderate value $\approx 1$ and then adjust it based on the relative importance of orthogonality versus reconstruction accuracy. Larger batch sizes and datasets with higher intrinsic correlations usually require increasing $\lambda_{\text{orth}}$ to maintain effective decorrelation in the latent space.

We also remark that it is possible to augment a VAE with dendritic-style decoding to induce an importance ordering over latent factors. However, we found this approach to be less robust in practice than ODIN. Moreover, we were motivated to find a method that provably generalized PCA in the sense that it recovered PCA when applied to an affine network.  We were able to obtain these theoretical results for ODIN as shown below.

Exploring a dendritic approach to VAEs remains an interesting direction for further research.  On one hand, the VAE framework offers well-established theoretical grounding for latent space regularization and has been shown to be effective at disentangling latent dimensions in its own right~\cite{higgins2017beta, rolinek2019variational}. On the other hand, a direct cosine-similarity regularizer offers a transparent mechanism for enforcing orthogonality among latent directions, without requiring careful selection of the $\beta$ parameter across datasets and regimes. In practice, we found that combining VAE-style regularization with dendritic decoding leads to a significant degradation in the orthogonality otherwise present in either VAE or ODIN separately. As such, for practitioners who simultaneously desire both importance ordering and latent disentanglement, ODIN represents a more natural and robust choice under the current framework.

\subsection{The True Loss Function for PCA}

The standard linear algebra view of PCA takes the form: \begin{equation}
    \max_W  \text{Tr}(W^\top X^\top X W), \quad \text{subject to  } W^\top W = I_{k\times k}.
    \label{eq:PCA_objective}
\end{equation}
Note that $\text{Tr}(W^T X^T X W) = ||XW||_F^2$ is the variance of the projection into the latent space. For data $X\in \mathbb{R}^{N\times n}$, PCA finds an orthogonal matrix $W = [w_1, w_2, \dots, w_k]\in \mathbb{R}^{n\times k}$ whose columns span the directions of maximal projected variance. The following standard result characterizes the solution set of (\ref{eq:PCA_objective}) being any projection onto the span of the first $k$ right singular vectors of $X$ (for completeness the proof is included in Appendix~\ref{app:PCA}).

\begin{theorem}\label{PCA1}
    Let $X \in \mathbb{R}^{N \times n}$, with singular value decomposition $X=USV^\top$, then the solution to equation~\ref{eq:PCA_objective} is $W=V_{[k]}Q=VI_{n\times k}Q$ where $Q \in \mathbb{R}^{k\times k}$ is any orthogonal matrix.
\end{theorem}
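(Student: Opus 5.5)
We reduce to a diagonal problem by changing coordinates with $V$. Set $\Sigma = S^\top S = \mathrm{diag}(\sigma_1^2,\dots,\sigma_n^2)$, with the singular values ordered $\sigma_1\ge\sigma_2\ge\cdots\ge\sigma_n\ge 0$ (padding with zeros if $N<n$). Then $X^\top X = V\Sigma V^\top$. For any feasible $W$, put $\tilde W = V^\top W\in\mathbb{R}^{n\times k}$. Since $V$ is orthogonal, $\tilde W^\top\tilde W = W^\top W = I_{k\times k}$, and the objective becomes
\[\mathrm{Tr}(W^\top X^\top X W)=\mathrm{Tr}(\tilde W^\top \Sigma \tilde W)=\sum_{i=1}^n \sigma_i^2\, p_i, \qquad p_i=\sum_{j=1}^k \tilde W_{ij}^2 .\]
So the problem is now a weighted sum of the squared row norms $p_i$ of a matrix with orthonormal columns.

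\textbf{Key step: constraints on the weights $p_i$.} We have $\sum_i p_i = \|\tilde W\|_F^2 = k$. Also $0\le p_i\le 1$: indeed $p_i = e_i^\top \tilde W\tilde W^\top e_i$, and $\tilde W\tilde W^\top$ is an orthogonal projection, so $p_i\le\|e_i\|^2=1$. Maximizing $\sum_i\sigma_i^2 p_i$ over this polytope is a fractional knapsack problem, and its maximum is $\sum_{i\le k}\sigma_i^2$, attained by $p_1=\cdots=p_k=1$. To see the bound directly, write
\[\sum_{i\le k}\sigma_i^2-\sum_i\sigma_i^2 p_i=\sum_{i\le k}\sigma_i^2(1-p_i)-\sum_{i>k}\sigma_i^2p_i\ge \sigma_k^2\Big(\sum_{i\le k}(1-p_i)-\sum_{i>k}p_i\Big)=0,\]
where the inequality uses $\sigma_i^2\ge\sigma_k^2$ for $i\le k$ and $\sigma_i^2\le\sigma_k^2$ for $i>k$. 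The value $\sum_{i\le k}\sigma_i^2$ is attained by $\tilde W = I_{n\times k}Q$ for any orthogonal $Q$, that is, by $W = VI_{n\times k}Q$. This proves that the claimed family consists of maximizers.

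\textbf{Converse and main obstacle.} The harder direction is showing that every maximizer has this form. Equality in the displayed inequality forces $p_i=0$ for every $i>k$ with $\sigma_i^2<\sigma_k^2$, and $p_i=1$ for every $i\le k$ with $\sigma_i^2>\sigma_k^2$. If $\sigma_k>\sigma_{k+1}$, this gives $p_i=0$ for all $i>k$. Hence the last $n-k$ rows of $\tilde W$ vanish, so $\tilde W = I_{n\times k}Q$ with $Q$ the top $k\times k$ block. That block satisfies $Q^\top Q=\tilde W^\top\tilde W=I$, so $Q$ is orthogonal, and therefore $W=V\tilde W = VI_{n\times k}Q$.

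When $\sigma_k=\sigma_{k+1}$ the solution set is genuinely larger: any orthonormal frame of the tied singular subspace is optimal. In that case the statement has to be read with $V$ ranging over all admissible SVDs of $X$, since the singular vectors are not unique. I will state that convention explicitly in the proof rather than claim uniqueness. The tie case is the only delicate point; everything else is routine.
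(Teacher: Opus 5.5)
Your proof is correct, but it takes a different route from the paper's. The paper works with the Lagrangian. It derives the stationarity condition $X^\top XW=W\Lambda$, uses the fact that only the symmetric part of $\Lambda$ matters to diagonalize $\Lambda=QDQ^\top$, and concludes that the columns of $WQ$ are eigenvectors of $X^\top X$. It then asserts that the maximum comes from taking the leading $k$ of them.

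That approach describes every critical point on the constraint set (any $k$ eigenvectors, up to a rotation $Q$). As a proof of global optimality, however, it leaves three things unstated: that a maximizer exists (compactness), that a constraint qualification holds so the maximizer is a stationary point, and that one must compare objective values across critical points. It also tacitly identifies ``eigenvector of $X^\top X$'' with ``column of the fixed $V$'', which fails when eigenvalues repeat.

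Your argument instead rotates by $V^\top$ and bounds $\sum_i\sigma_i^2p_i$ using only $0\le p_i\le 1$ and $\sum_ip_i=k$; this is essentially the standard proof of Ky Fan's maximum principle. It gives a direct, calculus-free certificate of global optimality. Its equality analysis delivers the converse and makes explicit the role of the gap $\sigma_k>\sigma_{k+1}$, a hypothesis the paper's statement omits.

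If you adopt the ``$V$ ranges over all SVDs'' reading in the tie case, two short steps finish it. First, $p_i=e_i^\top\tilde W\tilde W^\top e_i$ equals $1$ (resp.\ $0$) exactly when $e_i$ lies in (resp.\ is orthogonal to) $\mathrm{range}(\tilde W)$. So the range of any maximizer contains every $v_i$ with $\sigma_i>\sigma_k$ and lies in the span of those with $\sigma_i\ge\sigma_k$. Second, any orthonormal eigenbasis of $X^\top X$ ordered by decreasing eigenvalue is the $V$ of some SVD: set $u_i=Xv_i/\sigma_i$ when $\sigma_i>0$ and complete the basis. You may therefore re-choose $V$ inside the tied eigenspace so that $\mathrm{span}(V_{[k]})=\mathrm{range}(W)$. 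Then the last $n-k$ rows of $V^\top W$ vanish, and your concluding step applies verbatim.
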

While Theorem \ref{PCA1} is typically taken as the theoretical backbone of PCA, it leaves an important degree of freedom unresolved: the solution $W$ is only unique up to an arbitrary orthogonal rotation $Q$. In this framework, the individual columns of $W$ need not align with the principal components $v_1, \dots, v_k$, nor are they ordered by importance. Here we address both issues by shifting to a reconstruction-based objective rather than variance in the projected latent space. We show that minimizing this new loss not only recovers the PCA subspace but also guarantees ordering within the projection by importance.  While achieving ordering in the PCA algorithm is trivial, it is critical that the loss function capture this ordering so that we can use the loss function to generalize PCA to the nonlinear setting.

As a starting point, we recall the well-known equivalence between the maximum-variance and minimum-reconstruction formulations of PCA (the proof is in appendix~\ref{app:PCA}):
\begin{proposition}
    For any $X\in \mathbb{R}^{N\times n}, W\in \mathbb{R}^{n\times k}$ we have the following equivalence:\[\max_{W^\top W = I}  \text{Tr}(W^\top X^\top X W) = \min_W \| X^\top - W W^\top X^\top \|_F^2\]
\end{proposition}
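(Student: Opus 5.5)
The statement should be read as an equivalence of optimization problems rather than a literal equality of numbers. The two optimal values differ by the constant $\|X\|_F^2$, and the maximizers of the left-hand side are minimizers of the right-hand side. Concretely, I will prove
\[
\min_W \| X^\top - W W^\top X^\top \|_F^2 \;=\; \|X\|_F^2 - \max_{W^\top W = I}\text{Tr}(W^\top X^\top X W),
\]
and show that every maximizer on the left is a minimizer on the right. The plan has two steps: an algebraic identity on the Stiefel manifold, followed by a reduction of the unconstrained minimum to that manifold.

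\emph{Step 1 (orthonormal $W$).} Assume $W^\top W = I$, so that $P = WW^\top$ is an orthogonal projection with $P^\top = P$ and $P^2 = P$. Expand the squared Frobenius norm as a trace:
\[
\|X^\top - PX^\top\|_F^2 = \text{Tr}\big(X(I-P)^\top(I-P)X^\top\big) = \text{Tr}(XX^\top) - \text{Tr}(XPX^\top).
\]
By cyclicity of the trace, $\text{Tr}(XWW^\top X^\top) = \text{Tr}(W^\top X^\top X W)$. Hence on the constraint set the reconstruction error equals $\|X\|_F^2 - \text{Tr}(W^\top X^\top XW)$. Minimizing one is therefore the same as maximizing the other, with identical optimizer sets.

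\emph{Step 2 (removing the constraint on the right).} This is the step I expect to be the main obstacle, because the minimization is over all $W \in \mathbb{R}^{n\times k}$. I need to show that dropping the constraint cannot lower the minimum.

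\begin{itemize}
\item \emph{Lower bound.} For arbitrary $W$, the matrix $WW^\top X^\top$ has rank at most $k$. By the Eckart--Young theorem applied to $X^\top = VSU^\top$, the error is at least $\sum_{i>k} s_i^2$.
\item \emph{Attainment.} By Theorem~\ref{PCA1}, the maximizers of the trace are $W = V_{[k]}Q$. For these, $\text{Tr}(W^\top X^\top XW) = \sum_{i\le k} s_i^2$. By Step 1 their reconstruction error is therefore $\|X\|_F^2 - \sum_{i\le k}s_i^2 = \sum_{i>k}s_i^2$, which matches the lower bound.
\end{itemize}

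So the unconstrained minimum equals the constrained one, and the PCA solutions $V_{[k]}Q$ are global minimizers of the reconstruction objective.

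If I wanted to avoid citing Eckart--Young, I would instead argue directly. Write $W = A\Sigma B^\top$ by its SVD, so $WW^\top = A\Sigma^2A^\top$. For each fixed eigenbasis $A$, minimizing over the eigenvalues $\sigma_i^2$ is a separable scalar quadratic, and its optimum is $\sigma_i^2 \in \{0,1\}$. This shows that the optimal $WW^\top$ is a projection, which places us back in Step 1. Either route completes the equivalence.
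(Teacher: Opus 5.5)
Your proof is correct. Your reading of the statement matches the paper: the literal ``$=$'' is false, the intended claim is an equivalence of problems with $\min = \|X\|_F^2 - \max$, and the paper's proof also ends with an $\iff$.

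Your Step 1 is the same trace identity the paper uses; the difference is Step 2. Instead of Eckart--Young, the paper proves Lemma~\ref{lem:orth_projector}. With $A = WW^\top$, $P$ the orthogonal projector onto $\mathrm{range}(W)$ and $S = X^\top X$, it splits $\mathrm{tr}((I-A)^2S) = \mathrm{tr}((I-P)S) + \mathrm{tr}((P-A)^2S) \ge \mathrm{tr}((I-P)S)$. It concludes that a minimizer's $WW^\top$ is a projector, so $W^\top W = I$ at full rank, and the identity then turns the problem into the trace maximization.

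\emph{What your route buys.} It is shorter and makes the value relation explicit, which the paper leaves implicit. It also avoids the equality case of that lemma, which forces $A = P$ only when $X^\top X$ is positive definite on $\mathrm{range}(W)$. You do not even need Theorem~\ref{PCA1}: Step 1 plus Eckart--Young already give $\mathrm{Tr}(W^\top X^\top XW) \le \sum_{i\le k}s_i^2$ on the constraint set, with equality at $V_{[k]}$.

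\emph{What the paper's route buys.} It gives the converse direction: unconstrained minimizers are themselves orthonormal frames for the PCA subspace. The ordered-PCA theorem in Appendix~\ref{app:PCA} relies on exactly this to get orthonormal prefixes $W^*_{[i]}$. As written, your argument only shows that maximizers are minimizers. You can recover the converse from the uniqueness clause of Eckart--Young:
\begin{itemize}
\item when $s_k > s_{k+1}$ (so $s_k > 0$), uniqueness forces $WW^\top X^\top = V_{[k]}V_{[k]}^\top X^\top$;
\item hence $WW^\top v_i = v_i$ for $i \le k$;
\item a rank count then gives $WW^\top = V_{[k]}V_{[k]}^\top$, and therefore $W^\top W = I$.
\end{itemize}

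There is one slip in your fallback SVD argument. For fixed left singular vectors $a_1,\dots,a_k$ of $W$, the objective is $\mathrm{Tr}(X^\top X) + \sum_i c_i(\sigma_i^4 - 2\sigma_i^2)$ with $c_i = a_i^\top X^\top X a_i \ge 0$. This is minimized at $\sigma_i^2 = 1$ whenever $c_i > 0$, never at $0$, and $\sigma_i$ is unconstrained when $c_i = 0$. So the correct conclusion is that the optimal $WW^\top$ can be taken to be a rank-$k$ orthogonal projection, not merely that $\sigma_i^2 \in \{0,1\}$.
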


In other words, the maximum trace PCA objective is equivalent to minimizing the squared residuals of the orthogonal projections of the data $\sum_{i=1}^N \|x_i-p_i\|^2$,  where $p_i = WW^\top x_i$ is the projection of $i$'th data point $x_i$ onto the $k$-dimensional subspace spanned by the columns of $W$. Both problems share the same solution set, recovering the subspace spanned by the top $k$ principal components but, as noted above, without pinning down the orientation within that subspace.


To resolve this ambiguity and recover the principal components in their natural order, we break the rotational symmetry of the solution space by introducing a sequence of nested reconstruction terms. Specifically, we augment the objective with terms involving $XW_{[i]}$ for $i=1, \dots, k$, where $W_{[i]}$ denotes the submatrix formed by the first $i$ columns of $W$. Each such term measures how well the first $i$ directions reconstruct $X$, penalizing solutions that fail to prioritize higher-variance directions first. This is exactly the dendritic idea in the linear setting; since each $XW_{[i]}$ represents the signal available to the $i$-th dendrite for reconstructing $X$. The resulting objective becomes,
\[\max_{W^\top W = I_{k\times k}} \sum_{i=1}^k ||X W_{[i]}||_F^2 =  \min_W \sum_{i=1}^k \|X^\top-W_{[i]}W_{[i]}^\top X^\top\|_F^2. \]

Under the assumption that the nonzero singular values of $X$ are distinct, the difference between optimizing over the full $k$-dimensional subspace compared to optimizing over each individual subspace simultaneously becomes explicit. When $s_1 > s_2 > \cdots > s_k > 0$,
each optimal $i$-dimensional principal subspace is uniquely determined as
$\mathrm{span}(V_{[i]})$. Since there is no rotational freedom within any degenerate
right singular space, the only way to simultaneously satisfy
\begin{equation}
    \mathrm{span}(W_{[i]}) = \mathrm{span}(V_{[i]}) \quad \text{for all } i = 1,
    \ldots, k,
\end{equation}
is for the columns of $W$ to align exactly with $v_1, \ldots, v_k$ up to individual
sign changes. That is, the distinct spectrum forces the rotational degrees of freedom
to collapse entirely, leaving sign flips as the only residual freedom in the joint
minimizer. 

The following theorem characterizes the minimizer as the correctly ordered PCA variables, with proof provided in appendix~\ref{app:PCA}:
\begin{theorem}
    Let $X \in \mathbb{R}^{N \times n}$, with singular value decomposition $X=USV^\top$ and distinct singular values $s_1>s_2>\cdots>s_k>0$, then 
\[ k s_1^2 + (k-1)s_2^2 + \cdots + 2 s_{k-1}^2 + s_k^2 = \min_{W \in \mathbb{R}^{n \times k}} \sum_{i=1}^k \|X^\top-W_{[i]}W_{[i]}^\top X^\top\|_F^2. \]
and the minimum is achieved when $W=V_{[k]}\Sigma=VI_{n\times k}\Sigma$, where $\Sigma = \mathrm{diag}(\varepsilon_1, \dots, \varepsilon_k)$, with $\varepsilon_j = \pm 1$.
\label{thm:dend_loss}
\end{theorem}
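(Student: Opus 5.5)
The plan is to read the displayed equality in the same sense as the earlier Proposition. There, ``$\max_{W^\top W=I}\mathrm{Tr}(W^\top X^\top XW)=\min_W\|X^\top-WW^\top X^\top\|_F^2$'' identifies the two problems and their common optimizer. Accordingly I will prove two things: that $ks_1^2+(k-1)s_2^2+\cdots+s_k^2$ is the optimal value of the nested variance problem $\max_{W^\top W=I}\sum_{i=1}^k\|XW_{[i]}\|_F^2$, and that this problem is equivalent to the nested reconstruction problem on the right-hand side with the same minimizers $W=V_{[k]}\Sigma$. I should be explicit that the equality holds only in this sense. A literal numerical equality fails: if $n=k$ and $X$ has full rank, $W=V$ gives reconstruction value $0$ while the weighted sum is positive. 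The constant relating the two values is $k\|X\|_F^2$, as the second step shows.

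\textbf{Step 1 (reduction to orthonormal $W$).} For each $i$, apply the Proposition to $W_{[i]}$. Expanding $\|X^\top-W_{[i]}W_{[i]}^\top X^\top\|_F^2$ when $W_{[i]}^\top W_{[i]}=I$ gives $\|X\|_F^2-\|XW_{[i]}\|_F^2$. Summing over $i$, the objective becomes $k\|X\|_F^2-\sum_i\|XW_{[i]}\|_F^2$. So on orthonormal $W$, minimizing the nested reconstruction loss is the same as maximizing $\sum_i\|XW_{[i]}\|_F^2$, with identical optimizers.

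For general $W$, I will argue as in the proof of the Proposition: for fixed column span, each term is minimized when $W_{[i]}W_{[i]}^\top$ is the orthogonal projector onto $\mathrm{span}(W_{[i]})$. This nested family of projectors is realized by Gram--Schmidt orthonormalization of the columns of $W$, which preserves every leading span. Hence restricting to $W^\top W=I$ loses nothing. This justification of ``$\min_W$'' over all $W\in\mathbb{R}^{n\times k}$ is the step I expect to be the main obstacle. For an arbitrary $W$, the matrix $W_{[i]}W_{[i]}^\top$ is not a projector, and one must check that a non-orthonormal $W$ cannot do better. I would first try to show that for fixed $A=W_{[i]}$, the quantity $\|X^\top-AA^\top X^\top\|_F^2$ is at least the projector residual for $\mathrm{span}(A)$. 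This should follow from the Pythagorean split of $X^\top$ into its components in and orthogonal to $\mathrm{span}(A)$: the orthogonal component is untouched by $AA^\top$, and the in-span component is best matched by the projector.

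\textbf{Step 2 (value of the nested maximum).} Rewrite $\sum_{i=1}^k\|XW_{[i]}\|_F^2=\sum_{j=1}^k(k-j+1)\|Xw_j\|_F^2$. By Abel summation this equals $\sum_{m=1}^k\mathrm{Tr}(W_{[m]}^\top X^\top XW_{[m]})$. By Theorem~\ref{PCA1}, applied with $k$ replaced by $m$, each summand is at most $s_1^2+\cdots+s_m^2$. Summing over $m$ gives the upper bound $ks_1^2+(k-1)s_2^2+\cdots+s_k^2$, and $W=V_{[k]}$ attains it.

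\textbf{Step 3 (characterizing the optimizers).} Equality in Step 2 forces equality in every application of Theorem~\ref{PCA1}. Because the singular values are distinct, the maximizing $m$-dimensional subspace is uniquely $\mathrm{span}(V_{[m]})$, so $\mathrm{span}(W_{[m]})=\mathrm{span}(V_{[m]})$ for every $m$. Inducting on $m$, the column $w_m$ must be a unit vector in $\mathrm{span}(V_{[m]})$ orthogonal to $\mathrm{span}(V_{[m-1]})$, so $w_m=\pm v_m$. This gives $W=V_{[k]}\Sigma$. Combining with Step 1 shows that these are exactly the minimizers of the nested reconstruction loss, and that its optimum corresponds to the weighted sum $ks_1^2+\cdots+s_k^2$ in the stated sense.
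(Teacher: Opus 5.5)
Your argument is correct in substance and follows essentially the paper's route. The paper bounds the nested loss termwise by the single-level PCA minima $m_i$, notes that $V_{[k]}$ attains every $m_i$ at once, and then uses uniqueness of the optimal $i$-dimensional subspace plus induction on columns to get $w_i^*=\pm v_i$. You run the same argument in variance form. Your Pythagorean split $\|X^\top-AA^\top X^\top\|_F^2=\|(I-P)X^\top\|_F^2+\|(P-AA^\top)X^\top\|_F^2$ is a cleaner substitute for the paper's projector lemma. You are also right that the displayed equality holds only in the ``equivalent problems'' sense: the true minimum is $k\|X\|_F^2-(ks_1^2+\cdots+s_k^2)$, and the paper's proof never checks the value.

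Two repairs are needed.

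First, your counterexample is miscomputed. For $n=k\ge 2$ and $W=V$, only the last dendritic term vanishes, so the loss is $\sum_{i=1}^{k-1}\sum_{j>i}s_j^2$; for $n=k=2$ this gives $s_2^2$ versus $2s_1^2+s_2^2$. Your conclusion still holds.

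Second, and more substantively, Step~1 only shows that a non-orthonormal $W$ cannot do \emph{better}, yet Step~3 claims the $V_{[k]}\Sigma$ are \emph{exactly} the minimizers. The claim ``for fixed span the projector is the unique minimizer'' is false when the span meets $\ker X$. You can close this with the equality case of your own split, which also lets you drop Gram--Schmidt (it tacitly assumes full column rank):
\begin{itemize}
\item Write $J(W)\ge\sum_i\|(I-P_i)X^\top\|_F^2\ge\sum_i m_i$. At a minimizer, each $\mathrm{span}(W_{[i]})$ is then an optimal subspace of dimension at most $i$. Since $s_i>0$ and the spectrum is simple, it equals $\mathrm{span}(V_{[i]})$.
\item Equality also gives $(P_i-W_{[i]}W_{[i]}^\top)X^\top=0$. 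This symmetric matrix has range inside $\mathrm{span}(V_{[i]})\subseteq\mathrm{range}(X^\top)$, so it kills its own range and must vanish. Hence $W_{[i]}W_{[i]}^\top=P_i$, so $W_{[i]}$ has orthonormal columns.
\end{itemize}
This is the role of the ``$S$ positive definite on $\mathcal U$'' clause in the paper's lemma.

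Finally, uniqueness at level $m=k$ needs $s_k>s_{k+1}$. That follows from the main text's assumption that all nonzero singular values are distinct, not from $s_1>\cdots>s_k$ alone; the paper relies on this too.
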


\subsection{Equivalence to PCA}
Having established the reconstruction-based formulation of PCA and its ordered solution, we now turn to the central theoretical result of this work. That is, in the linear regime, ODIN is provably equivalent to PCA. Specifically, we show that when the encoder and decoder are constrained to linear maps, the ODIN training objective reduces exactly to the nested reconstruction loss derived above, and the resulting weight matrices converge to the principal components of the data in their natural order.

Consider that, for a purely linear autoencoder, the \emph{encoder} of the network can be represented as \( Z = X W + b^{\text{enc}} \), where \( W = [w_1, w_2,\dots, w_k] \in \mathbb{R}^{n \times k} \) is the weight matrix of the encoder, with bias term \( b^{\text{enc}} \in \mathbb{R}^k \). Likewise the \emph{decoder} reconstructs the input as \( \hat{X} = Dec(Z)= Z W^\top + b^{\text{dec}} \), where \( \hat{X} \in \mathbb{R}^{N \times n} \) is the reconstructed data and \( W^\top \in \mathbb{R}^{k \times n} \) is the weight matrix of the decoder.

To speed up convergence we mean-center our data prior to encoding, rendering the bias terms identically zero so that we may omit them without loss of generality~\cite{plaut2018principal}. The mappings simplify to: \[ Z = X W, \quad \hat{X} = ZW^\top = XWW^\top. \]
Using mean squared error objective, the loss function then becomes:
\begin{align*}
    \mathcal{L}_{\text{MSE}}(X,\hat{X}) &= \min_W \|X-XWW^\top\|^2\\
    &= \min_W \|X^\top-WW^\top X^\top\|^2.
\end{align*}

As outlined above, ODIN augments the standard autoencoder loss function with dendritic decodings $\hat{X}_i = Dec(Z_{[i]})$ produced by zeroing out columns of $Z$. That is, we have $Z_{[i]} = [XW]_{[i]} = X{[W]_{[i]}}$, and thus:
\[\hat{X}_i= XW_{[i]}W^\top = X\sum_{s=1}^iw_sw_s^\top = XW_{[i]}W_{[i]}^\top.\] The total reconstruction loss is then:
\begin{align*}
    \mathcal{L}_{\text{Dend}}= \sum_{i=1}^k\mathcal{L}_{\text{MSE}}(X,\hat{X}_i) &= \min_W \sum_{i=1}^k\|X-XW_{[i]}W_{[i]}^\top\|^2 \\&= \min_W \sum_{i=1}^k \|X^\top-W_{[i]}W_{[i]}^\top X^\top\|_F^2
\end{align*}

Thus we have shown that the loss objective of a dendritic autoencoder (with no orthogonality constraints) is equivalent to ordered PCA and recovers the true PCA solution in the purely linear case. 

It is interesting to note that, in place of dendritic decoding, an alternative way to recover principal components in the linear case is to augment standard reconstruction minimization with an orthogonality penalty, \emph{i.e.}
\[\mathcal{L} = \mathcal{L}_{\text{MSE}}(X,\hat{X})+\mathcal{L}_{\text{orth}}(Z).\] This approach also recovers the columns of $V$, but without any particular ordering. See appendix~\ref{app:ODIN} for a proof. 

The linear analysis reveals that the goals of orthogonality regularization and dendritic reconstruction are closely related in the linear regime. Both recover the principal components, but only the dendritic objective guarantees that components are ordered by importance. In fact, the combined objective incorporating both dendritic reconstruction and orthogonality regularization, \emph{i.e.} 
\begin{align*}
    \mathcal{L}_{\text{ODIN}} \equiv \mathcal{L}_{\text{Dend}}+ \mathcal{L}_{\text{orth}}&=  \min_W \left[ \sum_{i=1}^k \|X^T-W_{[i]}W_{[i]}^TX^T\|_F^2+ \sum_{i<j}(W^TX^TXW)_{ij}^2\right]\\&=\min_W \left[ \sum_{i=1}^k \|X^T-W_{[i]}W_{[i]}^TX^T\|_F^2+ \frac{1}{2}\|W^TX^TXW-\text{diag}(W^TX^TXW)\|_F^2\right],
\end{align*}
shares the same minimizer as dendritic reconstruction alone in the linear regime (see the proof in appendix~\ref{app:ODIN}):
\begin{theorem}
    Let $X \in \mathbb{R}^{N \times n}$, with singular value decomposition $X=USV^\top$ and distinct singular values $s_1>s_2>\cdots>s_k>0$, then 
    \[ k s_1^2 + (k-1)s_2^2 + \cdots + 2 s_{k-1}^2 + s_k^2 = \min_{W \in \mathbb{R}^{n \times k}}  \mathcal{L}_{\text{Dend}}+ \mathcal{L}_{\text{orth}}, \]
    and the minimum is achieved when $W=V_{[k]}\Sigma=VI_{n\times k}\Sigma$, where $\Sigma = \mathrm{diag}(\varepsilon_1, \dots, \varepsilon_k)$, with $\varepsilon_j = \pm 1$.
\label{thm:odin_loss}

\end{theorem}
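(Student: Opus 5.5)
The plan is to treat $\mathcal{L}_{\text{orth}}$ as a nonnegative perturbation that vanishes exactly at the minimizers already identified in Theorem~\ref{thm:dend_loss}. The result then follows from a sandwich argument: no new computation of the dendritic term is needed beyond what Theorem~\ref{thm:dend_loss} supplies.

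\emph{Lower bound.} For every $W\in\mathbb{R}^{n\times k}$ we have $\mathcal{L}_{\text{orth}}(W)=\sum_{i<j}(W^\top X^\top X W)_{ij}^2\ge 0$. Hence
\[
\mathcal{L}_{\text{Dend}}(W)+\mathcal{L}_{\text{orth}}(W)\;\ge\;\mathcal{L}_{\text{Dend}}(W)\;\ge\;\min_{W'}\mathcal{L}_{\text{Dend}}(W').
\]
By Theorem~\ref{thm:dend_loss}, the right-hand side equals $k s_1^2+(k-1)s_2^2+\cdots+s_k^2$. So the combined objective is bounded below by the claimed value.

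\emph{Attainment.} Take $W=V I_{n\times k}\Sigma$ with $\Sigma=\mathrm{diag}(\varepsilon_1,\dots,\varepsilon_k)$. Using $X=USV^\top$ and $V^\top V=I$, we get
\[
W^\top X^\top X W=\Sigma I_{n\times k}^\top S^\top S\, I_{n\times k}\Sigma=\mathrm{diag}(s_1^2,\dots,s_k^2),
\]
since $\Sigma^2=I$ and $\Sigma$ commutes with diagonal matrices. This matrix is diagonal, so $\mathcal{L}_{\text{orth}}(W)=0$. By Theorem~\ref{thm:dend_loss}, $\mathcal{L}_{\text{Dend}}(W)$ equals the claimed minimum value. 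The lower bound is therefore attained, which proves the value identity and shows these $W$ are minimizers.

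\emph{Characterizing all minimizers.} Suppose $W^\ast$ achieves the minimum. Both inequalities in the lower-bound chain must then be equalities. This forces $\mathcal{L}_{\text{orth}}(W^\ast)=0$ and forces $W^\ast$ to minimize $\mathcal{L}_{\text{Dend}}$. Theorem~\ref{thm:dend_loss} then pins $W^\ast$ to the form $V I_{n\times k}\Sigma$, and no spurious minimizers arise from the orthogonality term.

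The main obstacle is this last step. The argument needs Theorem~\ref{thm:dend_loss} to describe the \emph{complete} minimizer set over all unconstrained $W\in\mathbb{R}^{n\times k}$, not merely to exhibit one minimizer. If its proof only guarantees attainment, I would re-enter that proof at its equality case. Using distinctness of $s_1>\cdots>s_k>0$, I would argue that equality in each nested term forces $\mathrm{span}(W_{[i]})=\mathrm{span}(V_{[i]})$ with $W_{[i]}^\top W_{[i]}=I$, exactly as in the span-nesting discussion preceding Theorem~\ref{thm:dend_loss}. That forces the columns of $W$ to be $\pm v_i$.
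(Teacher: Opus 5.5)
Your argument is correct as a reduction, and it takes a genuinely different route from the paper.

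\textbf{The two routes.} The paper works by direct computation. It asserts that a minimizer must make $W^\top X^\top X W$ diagonal and have orthonormal columns. It then invokes Lemma~\ref{lem:diagonals} to make $A=V^\top W$ a signed permutation. Next it evaluates the nested loss as $k\sum_j s_j^2-\sum_{\ell=1}^k(k-\ell+1)\,a_\ell^\top S^2 a_\ell$. Finally it uses the strictly decreasing weights $k>k-1>\cdots>1$ to force $A=I_{n\times k}\Sigma$.

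You instead note that both terms share a minimizer. So the minimizers of the sum are exactly the $\mathcal{L}_{\text{Dend}}$-minimizers at which $\mathcal{L}_{\text{orth}}$ vanishes, and you outsource the rest to Theorem~\ref{thm:dend_loss}.

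\textbf{What each buys.} Your route is shorter. It also justifies a step the paper asserts without argument: that a minimizer of the sum simultaneously zeroes $\mathcal{L}_{\text{orth}}$ and has orthonormal columns. That step holds precisely because of the common-minimizer observation you make explicit. Your argument also works verbatim for any weight $\lambda_{\text{orth}}\ge 0$, or for any nonnegative penalty vanishing on $\{V_{[k]}\Sigma\}$. The paper's route, in exchange, is self-contained and shows concretely how the nested weights produce the ordering.

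\textbf{Your flagged obstacle.} Your worry about needing the complete minimizer set is resolved. The paper's proof of Theorem~\ref{thm:dend_loss} does characterize every minimizer, by the same equality-case argument as your fallback. One caveat: at $i=k$ that argument needs $s_k>s_{k+1}$, not just $s_1>\cdots>s_k>0$. Otherwise $w_k$ can rotate within the $s_k^2$-eigenspace of $X^\top X$ orthogonal to $v_1,\dots,v_{k-1}$ without changing either term.

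\textbf{The constant is wrong.} You should not take the value on trust. The value $ks_1^2+(k-1)s_2^2+\cdots+s_k^2$, quoted in the statement and in Theorem~\ref{thm:dend_loss}, is the \emph{maximum} of the nested variance $\sum_i\|XW_{[i]}\|_F^2$. It is not the minimum of the nested reconstruction error. Evaluating directly at $W=V_{[k]}\Sigma$ gives $\|X^\top-W_{[i]}W_{[i]}^\top X^\top\|_F^2=\sum_{j>i}s_j^2$, so the true minimum is
\[
\sum_{i=1}^k\sum_{j>i}s_j^2 \;=\; k\|X\|_F^2-\bigl(ks_1^2+(k-1)s_2^2+\cdots+s_k^2\bigr).
\]
This is also what the paper's own intermediate identity yields. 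For instance, with $n=k=1$ the minimum is $0$, not $s_1^2$.

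Your sandwich and your minimizer characterization go through unchanged with the corrected constant. In the attainment step, however, you should compute $\mathcal{L}_{\text{Dend}}(V_{[k]}\Sigma)$ rather than cite it; doing so would have exposed the discrepancy.
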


In the non-linear regime, however, this similarity is no longer guaranteed, and the two objectives could potentially be considered as independent design choices serving distinct purposes. The dendritic structure enforces concentration of information across the latent dimensions, while the orthogonality penalty promotes decorrelated representations. By default, we adopt the joint loss function as a principled extension of the linear theory in all non-linear variants of ODIN, but the two can in principle be decoupled. Dendrites alone suffice when ordered concentration is the primary goal, and the orthogonality term should be added only when decorrelated latent variables are explicitly desired.

\subsection{Relationship between ODIN and POLCA/AEO}

Our analysis of ODIN is grounded in the equivalence between PCA's maximum variance criterion and minimum reconstruction loss for linear networks, a result we exploit to derive several properties of ODIN's loss minimizer. These results establish that ODIN's linear special case is not merely PCA-like but is provably optimal in the same sense that PCA is optimal.

The critical question then is how the approach generalizes when linearity is relaxed. Variance-based methods such as AEO~\cite{augustine2024autoencoder} and POLCA-Net~\cite{perozo2024principal} carry their ordering principle explicitly into the non-linear setting by retaining direct penalties on activation variance or covariance statistics in the latent space. This is a natural extension in form, but not in function: Variance, as a second-order statistic defined in the ambient Euclidean space, does not in general correspond to intrinsic geometric importance along directions of a non-linear manifold~\cite{abuqrais2026riemannian}. As the degree of non-linearity increases, the variance of latent activations becomes an increasingly indirect proxy for the information content of each latent dimension, and the ordering it induces loses its connection to any well-defined optimality criterion analogous to PCA. This problem is further compounded by the fact that a sufficiently expressive decoder can rescale latent dimensions arbitrarily (a dimension assigned low variance by the encoder may carry high reconstructive weight, and vice versa), rendering the measured variance in the latent space a gauge-dependent quantity with no intrinsic meaning. The regularization term remains well-defined as a scalar penalty, but its geometric interpretation, and therefore the interpretability of the resulting latent space, degrades.

ODIN takes a different route. Rather than extending the variance criterion directly, it retains the \emph{reconstruction} route to importance ordering and instead relies on the dendritic architecture to break the symmetry of the solution space in a manner that recovers ordering as a structural consequence of the model class. This architectural prior imposes an ordering that is intrinsic to the model rather than appended to it through a statistical penalty, and it does so in a way that remains geometrically meaningful regardless of the degree of non-linearity. The orthogonality penalty then operates on a solution space that is already structured, enforcing independence between the components that the dendritic architecture has already separated. We argue that this division of labor (architecture for ordering, loss terms for independence) is both more principled and more interpretable than asking potentially numerous scalar loss penalties to accomplish similar objectives in a regime where geometric meaning is no longer guaranteed.

\section{Experiments}
The goal of this section is to systematically evaluate the ODIN architecture in comparison to both standard and variational autoencoder models, with direct reference to Principal Component Analysis (PCA) where appropriate. To establish a performance baseline, in section 4.1 we begin with experiments on synthetic 3D Gaussian point cloud data, a setting in which PCA is known to provide the optimal solution for linear dimensionality reduction. We verify that the ODIN architecture is capable of recovering the principal components given by PCA, confirming its correct behavior in simple, well-understood scenarios.

Having validated ODIN's ability to recover principal components in the controlled setting of linear transformations, we then examine its performance on the more complex dataset of MNIST handwritten digits. The objective in this experimental setting is not to exactly replicate PCA's solution, as doing so would constrain the model to linear relationships and forfeit the expressive advantages of deep architectures. Instead, we seek to demonstrate that ODIN achieves PCA-like properties—specifically, orthogonal latent dimensions with explicit importance ordering—while simultaneously leveraging non-linear expressivity to capture complex features inherent in human handwriting. This benchmark dataset provides an ideal testbed for evaluating ODIN's capacity to capture non-linear structure while maintaining interpretability and consistency. 

Finally, in section 4.3 ODIN is applied to a real-world scientific task of analyzing NV diamond's photoluminescence (PL) spectroscopy data. In this domain, interpretability and explainability of latent representations are especially crucial, as the ability to resolve distinct latent modes has direct implications for ability to develop data-driven strategies for material characterization and development of novel sensors. Comparisons in this section emphasize ODIN’s unique strengths in exposing latent dimensions that correspond to meaningful physical changes e.g. changes in lattice spacing, outperforming traditional networks in terms of scientific insight and practical utility.

\subsection*{Latent Modes} 
Throughout this work, we refer to latent modes as the characteristic directions of variation captured by individual latent dimensions. Mathematically, these modes can be extracted as 
\begin{equation}
    R = Z^TX
\end{equation}
where $Z\in \mathbb{R}^{N\times k}$ is the $k$-dimensional latent representation for $N$ data points, and $X\in \mathbb{R}^{N\times n}$ is the matrix of original input data with $n$ features for each sample. Each row of $R$ represents the projection pattern that a specific latent dimension encodes. In ODIN, these modes are both independent and importance-ranked, enabling researchers to identify which patterns account for the most significant data variations.

\subsection{Point Cloud Data}

In the first set of experiments, we investigate ODIN’s ability to recover principal components using a three-dimensional Gaussian point cloud as a controlled, synthetic dataset. The data for this experiment are generated by sampling $10,000$ points from a standard three-dimensional normal distribution, which is then linearly transformed by applying rotation and anisotropic scaling. This procedure produces an ellipsoidal point cloud in three-dimensional space, where each principal axis corresponds to a distinct variance direction dictated by the stretching operation.

PCA is provably optimal for such datasets because it identifies orthogonal directions in the data that maximize variance, corresponding exactly to the axes of the ellipsoid after transformation. For any dataset produced by a linear transformation of Gaussian noise, PCA provides the best linear dimensionality reduction and recovers the original transformation up to orthogonal rotation. As such, this synthetic scenario allows for direct and unambiguous benchmarking by comparing against the ground truth solution given by the principal axes computed by PCA.

For this experiment, the base `standard' autoencoder architecture is constructed as a purely linear network, consisting of a linear encoder followed by a linear decoder, and explicitly omitting any non-linear activation functions. In this setting, the latent variables and output reconstructions are just linear functions of the input, which ensures that the network operates within the space of linear transformations. This is desirable as the underlying structure of the ellipsoidal point cloud is itself defined completely by linear transformations.

To implement ODIN, the standard network is modified by introducing dendritic decodings, each producing reconstructions using different cumulative subsets of the latent variables. We also introduce the orthogonality loss term in equation~\ref{eq:total_loss} to enforce independence of latent dimensions, although it is not necessary in the linear setting. Indeed we observe experimentally that dendritic decoding alone was sufficient to achieve feature disentanglement and hierarchical importance ordering in the linear regime, though often taking longer to converge to the global minimum when including the orthogonality loss term.

The evaluation of each model is made possible by comparing their extracted latent modes to the principal directions obtained from PCA. To assess alignment with the principal component axes, we utilize a three-dimensional singular value decomposition (SVD) representation:
\begin{equation}
    X = US V^T
\end{equation}
where $U\in \mathbb{R}^{3\times 3}$ contains the principal components (directions of maximal variance), facilitating a direct comparison with the extracted latent modes. The cross-correlation matrix $M$ between the computed latent modes $R$ and the PCA principal component matrix $U$ is given by:
\begin{equation}
    M = RU
\end{equation}
We measure how far this cross-correlation matrix deviates from the identity, quantifying the error as:
\begin{equation}
    e = ||M-I||_F
    \label{eq:correlation_error}
\end{equation}
where $I$ is the $3\times 3$ identity matrix and $||\cdot ||_F$ denotes the Frobenius norm.

\subsubsection*{Results}

For the standard linear autoencoder, results reveal a significant lack of consistency in the orientation of the learned latent modes across different training runs. Empirically, the first latent mode aligns reasonably well with the first principal direction given by PCA in two out of seven runs; however, the second and third latent modes are frequently miscalculated, as lower-variance directions are often subsumed within higher-variance axes. This can be attributed to the comparatively small variance in the third principal direction and the unordered, entangled nature of the learned latent modes. This instability is evident when examining the cross-correlation of the learned latent directions with the PCA basis: the resulting alignment varies markedly between model initializations. See Figure~\ref{fig:PC_latent_modes}.

In contrast, the ODIN model demonstrates remarkable stability and reproducibility in the orientation of its latent modes across all training runs. For every model initialization and training instance, the extracted latent directions are nearly identical and exhibit precise alignment with the principal component axes prescribed by PCA. The cross-correlation error between ODIN's latent modes and the PCA basis is consistently close to the identity matrix, yielding minimal Frobenius norm error. See Figure~\ref{fig:correlation_error}. Thus we can say with confidence that ODIN matches PCA’s optimal solution in this linear regime.

In parallel experiments, variational autoencoders (VAEs) were evaluated as a leading competitor to ODIN in terms of latent space structure. In our simple synthetic 3D Gaussian experiments, we found that VAE's consistently struggled to capture directions of low variance due to posterior collapse. This limitation manifests as a pronounced flattening of dimensions with subtle variation, effectively disregarding them. Lowering the $\beta$ parameter in the ELBO loss can partially mitigate this issue, allowing some recovery of low variance directions at the cost of reduced overall disentanglement. In our experiments, we found that after tuning the $\beta$ parameter it was possible to get reconstructions on par with those from ODIN; however, the latent representations still suffered from variability across training runs. While the probabilistic constraints enforcing disentanglement led to the emergence of three distinct latent modes that qualitatively corresponded to the principal component directions, these modes lacked any inherent ordering or hierarchy. Each latent dimension contributed equally to the representation, resulting in frequent permutation of modes between training instances. See figure~\ref{fig:PC_latent_modes}. This frequent interchange of latent dimensions is especially problematic when recalibrating models performing downstream tasks using only a subset of the latent space.

Another interesting experiment was conducted using non-linear point cloud data, presenting a more challenging scenario for latent space analysis. In this setting, both ODIN and VAE architectures were augmented with activation layers\footnote{We opted to use leaky ReLU activation functions in place of standard ReLU for the VAE network to address the issue of suppressed gradients in low-variance dimensions, which can lead to the complete zeroing out of subtle latent features.} to enhance their non-linear expressivity and better accommodate the data’s underlying structure. Despite these modifications, VAEs consistently failed to fully disentangle latent factors, regardless of the degree of variation in the data or the level of randomness introduced in the model. The latent variables in VAE's remained entangled, and reconstructions often appeared distorted or failed to reflect the true manifold of the original data. In contrast, ODIN demonstrated robust disentanglement and successfully sorted the latent variables according to their importance, all while maintaining faithful and interpretable reconstructions. These results highlight ODIN’s unique ability to manage both non-linearity and interpretability in its latent representations, outperforming VAEs in this more complex regime.

\begin{figure}[h!]
    \centering
    \includegraphics[width=\linewidth]{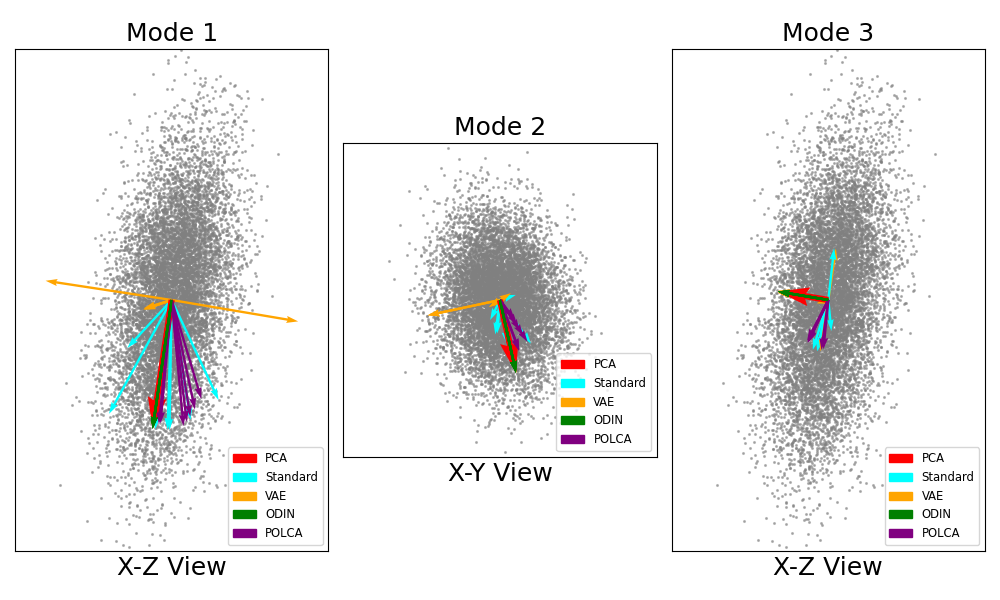}\\
    \caption{Four different autoencoder architectures are trained and evaluated seven times on a three-dimensional point cloud dataset. Cross-sectional views of raw point-cloud data are shown alongside PCA principal component directions in red. The corresponding latent modes as resolved by each network are shown alongside the PCA vectors, demonstrating high variability across training runs for both standard AE and VAE approaches. POLCA comes much closer to recovering the first two latent modes but has some variability and misses the third, lowest variance, latent mode.  In contrast, the latent directions learned by ODIN consistently align with the ground truth directions given by PCA.}
    \label{fig:PC_latent_modes}
\end{figure}
\begin{figure}
    \centering
    \includegraphics[width=\linewidth]{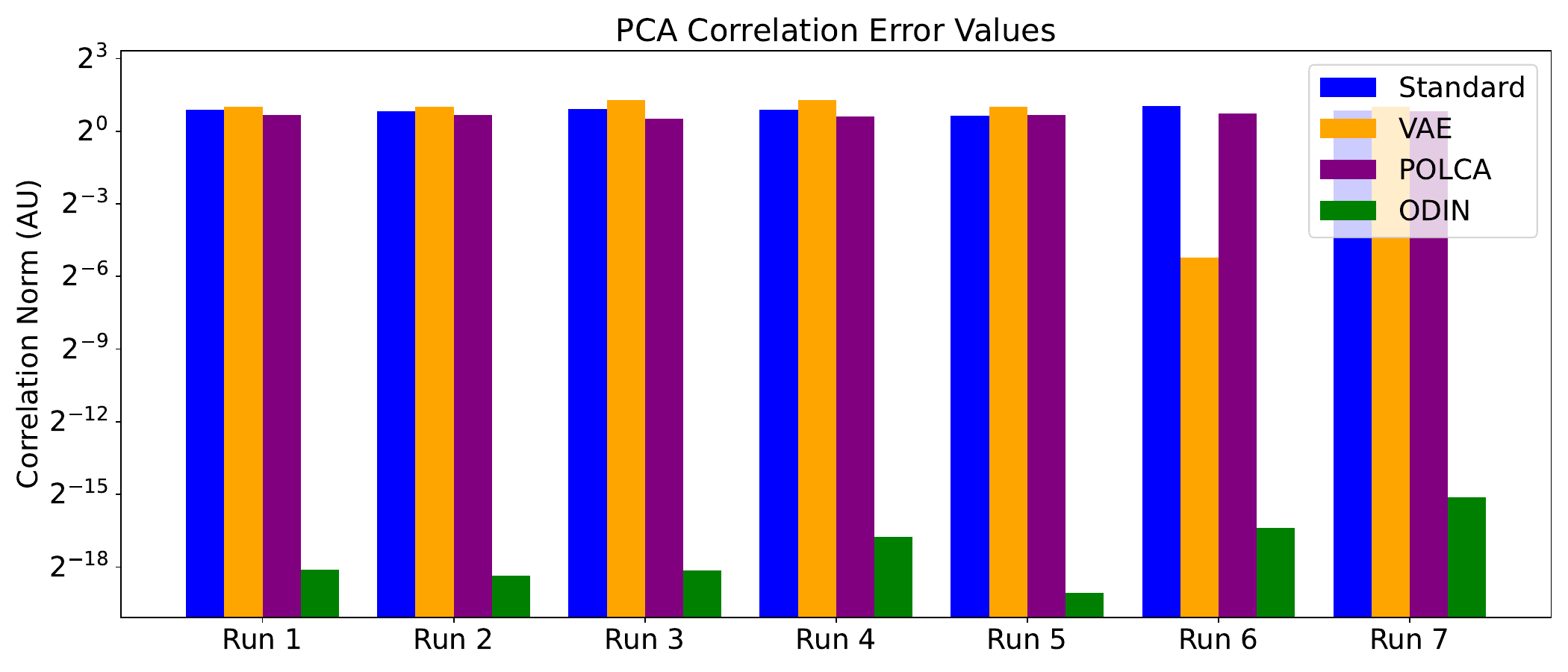}
    \caption{The error in the cross-correlation matrix as defined in equation~\ref{eq:correlation_error} for four different models across seven independent training runs. In each instance, the ODIN cross-correlation matrix is near-identity resulting in Frobenius norm error close to zero. This confirms that ODIN recovers the principal components in their correct importance order regardless of random initialization.}
    \label{fig:correlation_error}
\end{figure}

\subsection{MNIST Data}
The next dataset we evaluate is the MNIST set of handwritten digits, restricted to digits $1$ and $2$ for our primary experiments. In this setting, all three architectures under comparison—the standard autoencoder, the $\beta$-VAE, and the ODIN architecture—employ convolutional neural network structures, which represent the standard approach for processing image data. The encoder consists of two convolutional layers with $32$ and $64$ filters respectively, each followed by ReLU activations. The resulting feature maps are flattened and passed through a fully connected layer that projects to a $k=7$-dimensional latent space.

The decoder architectures mirror this structure in reverse. For the standard conv-AE and conv-$\beta$-VAE models, a single decoder maps from the full $7$-dimensional latent space through a fully connected layer to reshape into appropriate dimensions for transposed convolution (deconvolution) layers. These layers progressively upsample and reduce channel depth to reconstruct the original $28\times28$ pixel images. The ODIN architecture modifies this structure by incorporating $7$ dendritic decodings, where each decoding $\hat X_j$ receives only the first $j$ latent dimensions $Z_{[j]}$ and produces a full $28\times28$ reconstruction.

Training was conducted using the Adam optimizer with stepped learning rate for 2500 epochs on batches of 480 samples. For the $\beta$-VAE, we experimented with $\beta$ values ranging from $0$ to $1$, ultimately selecting $\beta=1$ as providing the best balance between reconstruction quality and disentanglement. The ODIN orthogonality weight was set to $\lambda=1$ after preliminary experiments showed this value effectively enforced latent orthogonality without overwhelming the reconstruction objective. Each architecture was trained $10$ times with different random initializations to assess consistency and reproducibility.

Unlike the point cloud experiments where ground truth principal directions provided an absolute reference, MNIST evaluation focuses on three complementary criteria: (1) consistency of latent modes across independent training runs, quantified by computing pairwise cross-correlations between modes extracted from different model instantiations; (2) disentanglement, assessed through targeted latent traversals where individual dimensions are ignored while others remain fixed; and (3) interpretability, evaluated by whether learned modes correspond to recognizable visual features that domain experts can meaningfully describe. We visualize latent modes both as images and through targeted reconstructions that isolate the contribution of individual dimensions to the overall data representation.

\subsubsection*{Standard Convolutional Autoencoder Results} The baseline convolutional autoencoder architecture, despite achieving low reconstruction error, exhibits severe deficiencies in latent space organization. Across the independent training runs, latent modes demonstrate high variability with no consistent correspondence between dimension indices and interpretable features. Computing the cross-correlation $|\langle R_i^{(a)}, R_j^{(b)}\rangle|$ for modes extracted from different runs $(a)$ and $(b)$, we find that while dimensions occasionally align by chance, there is no systematic pattern ensuring that, for example, dimension $3$ in one training run captures similar features to dimension $3$ in another run. Visualizations of the learned modes as images show that they uniformly exhibit distributed, non-localized patterns that fail to isolate coherent visual features.

Ablation experiments further confirm the absence of hierarchical structure. When reconstructing test images using only the first $k$ latent dimensions (by zeroing out dimensions $k+1$ through 7), reconstruction quality does not improve monotonically or predictably as $k$ increases. Instead, we observe irregular performance curves where omitting certain seemingly arbitrary dimensions causes larger degradation than omitting others, and this pattern varies across training runs (Fig ~\ref{fig:MNIST_error}). This behavior indicates that the network has not learned to prioritize information allocation; all dimensions contribute roughly equally to reconstruction, precluding interpretable importance ranking.

\begin{figure}[h!]
    \centering
    \includegraphics[width=0.45\linewidth]{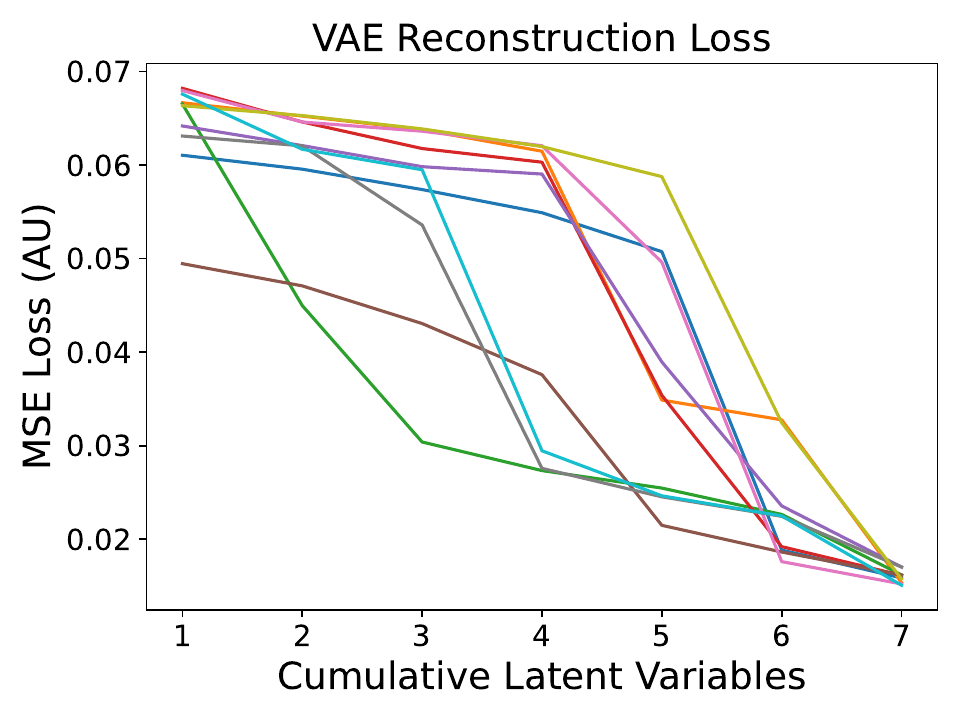}
    \includegraphics[width=0.45\linewidth]{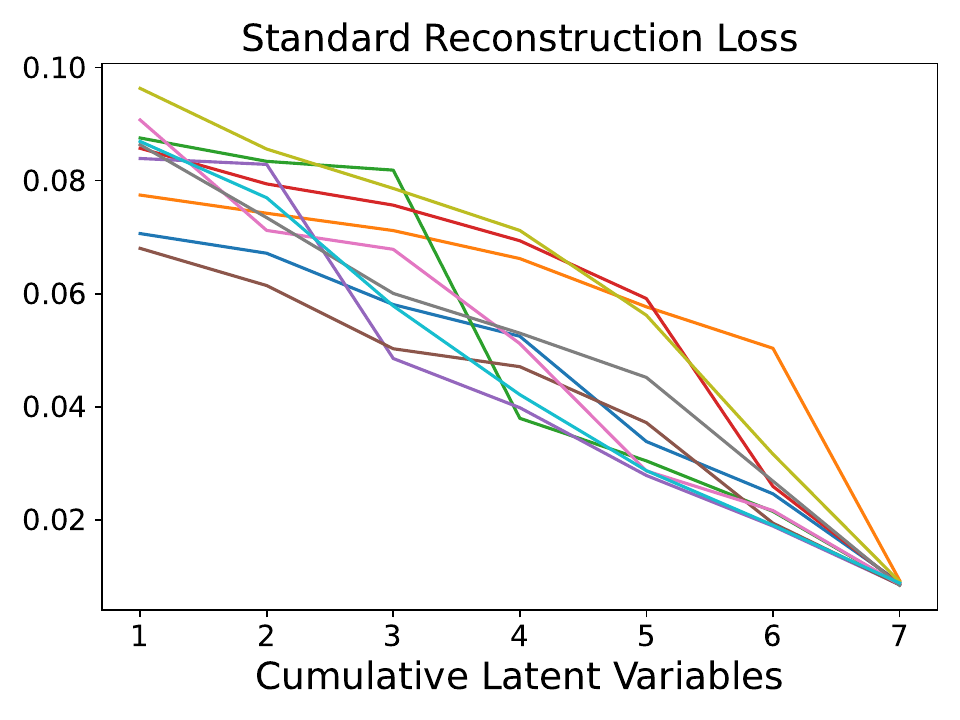}
    \includegraphics[width=0.45\linewidth]{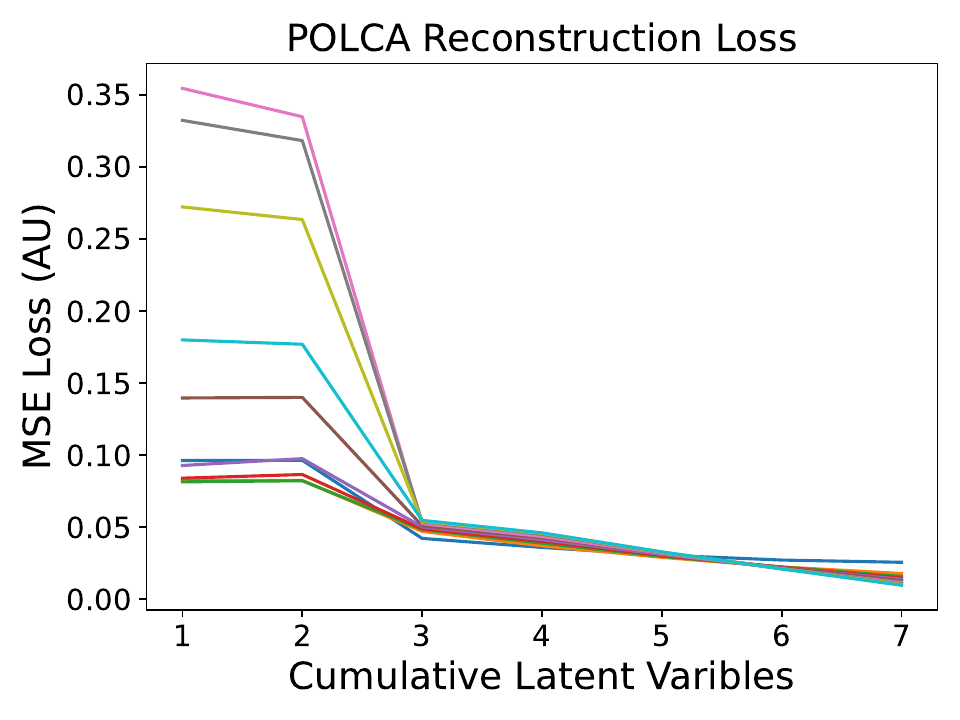}
    \includegraphics[width=0.45\linewidth]{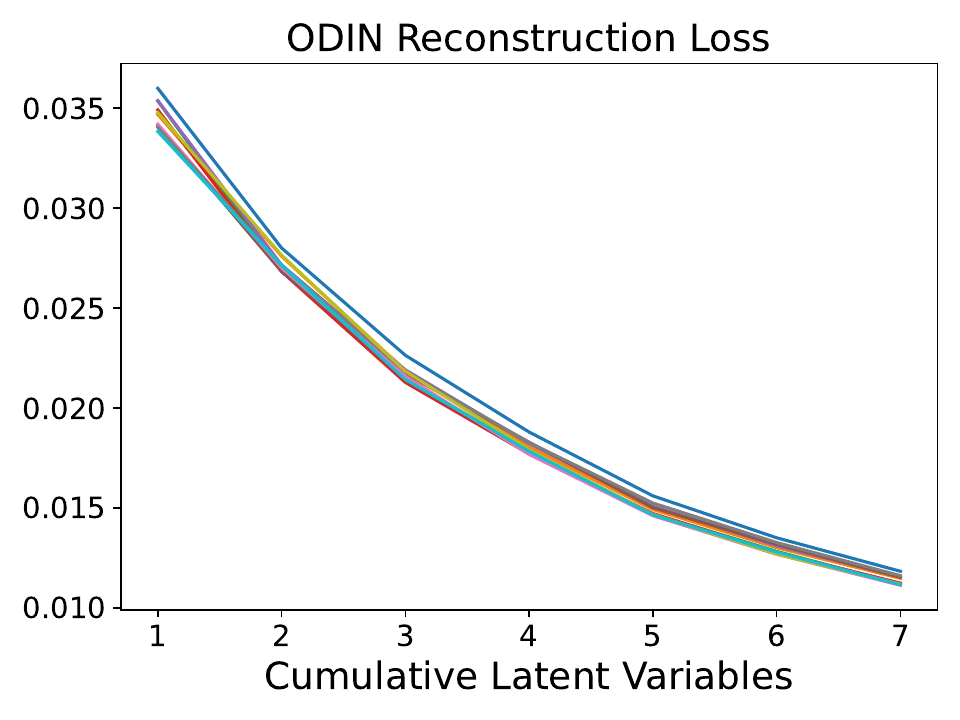}
    \caption{Reconstruction loss as a function of the number of latent dimensions (dendrites) across 10 independent runs for the VAE, Standard-AE, POLCA and ODIN architectures. Each curve corresponds to an independent training run. While Standard and VAE networks exhibit non-monotonic, run-dependent loss trajectories, ODIN produces nearly identical, monotonically decreasing curves across all runs, demonstrating substantially greater stability and predictability in latent space utilization. POLCA is able to concentrate more information into earlier latent variables due to its center of mass loss term, however, the reconstruction and classification accuracy is slightly worse when using all latent features compared to the other three models.}
    \label{fig:MNIST_error}
\end{figure}

\subsubsection*{$\beta$-Variational Autoencoder Results} The $\beta$-VAE architecture demonstrates marked improvement in latent space structure compared to the standard autoencoder, successfully achieving feature disentanglement through its probabilistic regularization framework. With $\beta=1$, the model learns latent dimensions that respond selectively to distinct generative factors. Systematic traversals along individual dimensions produce interpretable variations in reconstruction. For example, varying dimension 2 while holding others constant might systematically alter the slant of digit strokes, while dimension 3 controls stroke thickness. This disentanglement represents a significant advance for interpretability, as researchers can now identify which latent variables encode which semantic attributes.

However, while dimensions are successfully disentangled, they lack any inherent ordering or hierarchy. The learned latent space exhibits permutation symmetry—dimensions can be arbitrarily reordered without affecting reconstruction quality. Empirically, this manifests as frequent permutations across training runs. For example, dimension $3$ in one instantiation might capture digit identity (a high-importance feature), while in another run, this same information migrates to dimension $5$. Cross-correlation analysis of modes extracted from independent training runs reveals that although each run produces well-disentangled representations, the assignment of features to specific dimension indices varies unpredictably. This inconsistency severely limits the model's utility for comparing results across experiments or performing systematic ablation studies, as one cannot reliably identify the ``third most important feature'' when this designation changes with each training run.

Ablation experiments with $\beta$-VAE confirm the absence of importance ordering. When plotting reconstruction accuracy using progressive latent dimensions, the improvement curve is neither smooth nor consistent across training runs. See figure~\ref{fig:MNIST_error}. More critically, the subset of dimensions that proves most informative varies between model instantiations, confirming that the architecture provides no mechanism for consistently prioritizing high-variance directions into early latent dimensions.

\subsubsection*{ODIN Architecture Results} The ODIN architecture demonstrates exceptional performance across all evaluation criteria, successfully combining the disentanglement advantages of $\beta$-VAE with robust importance ordering and cross-run consistency that neither baseline achieves. Across all 10 independent training runs, ODIN produces latent modes that are both remarkably stable in their assignments and interpretably organized in hierarchical order of importance. The cross-correlation analysis shows that for any two independently trained ODIN models $(a)$ and $(b)$, the mode alignment satisfies $|\langle R_i^{(a)}, R_i^{(b)}\rangle|>0.999$ for all $i \in \{1,...,10\}$, indicating that dimension $i$ captures essentially identical features across all training runs. This consistency enables reproducible science as researchers can meaningfully compare ``dimension $3$'' across different experiments with confidence that this dimension represents the same underlying data variation pattern.

The importance ordering emerges naturally from ODIN's dendritic decoder architecture. Reconstruction error curves for the dendritic outputs $Dec(Z_{[1]}), \dots ,Dec(Z_{[7]})$, show smooth, monotonic improvement as additional latent dimensions are incorporated, with the steepest error reduction occurring when the first few dimensions are added. See figure~\ref{fig:MNIST_error}. Specifically, the first latent dimension alone achieves mean squared error of $0.036$, incorporating the second reduces this to $0.027$, the third to $0.021$, and by the fifth dimension, the error has dropped to $0.015$, nearly matching the full model's performance. This curve validates that ODIN successfully and reliably allocates high-variance, reconstruction-critical information to early dimensions, with later dimensions contributing progressively finer details.

\subsubsection*{Explainability Through Hierarchical Feature Analysis}

The evaluation framework extends the latent mode analysis methodology from Section 4.1 to the image domain. For each trained model, we compute latent modes $R\in \mathbb{R}^{k\times n}$ via the matrix operation $ R = Z^T X$, where $Z\in \mathbb{R}^{N\times k}$ represents the encoded latent representations for $N$ training samples and $X\in \mathbb{R}^{N\times n}$ is the matrix of flattened image vectors with $n = 784$ pixels $(28\times 28)$. Each row of $R$ corresponds to a latent dimension and can be reshaped into a $28\times 28$ image representing that dimension's characteristic pattern or ``mode'' of variation.

In the case of ODIN restricted to digits $1$ and $2$ only, the hierarchical feature structure becomes particularly transparent and reveals crisp, interpretable patterns (see Figure~\ref{fig:ODIN_modes}). The first mode captures the dominant source of variation in the dataset—the fundamental differences between digit classes. When visualized as a $28\times 28$ image, this mode shows clear digit-like structure, representing the average contrast between high-variance digit pairs. Subsequent modes isolate progressively more subtle features such as orientation angles, stroke widths, loop curvatures, and positional offsets.

\begin{figure}[h!]
    \centering
    \includegraphics[width=\linewidth]{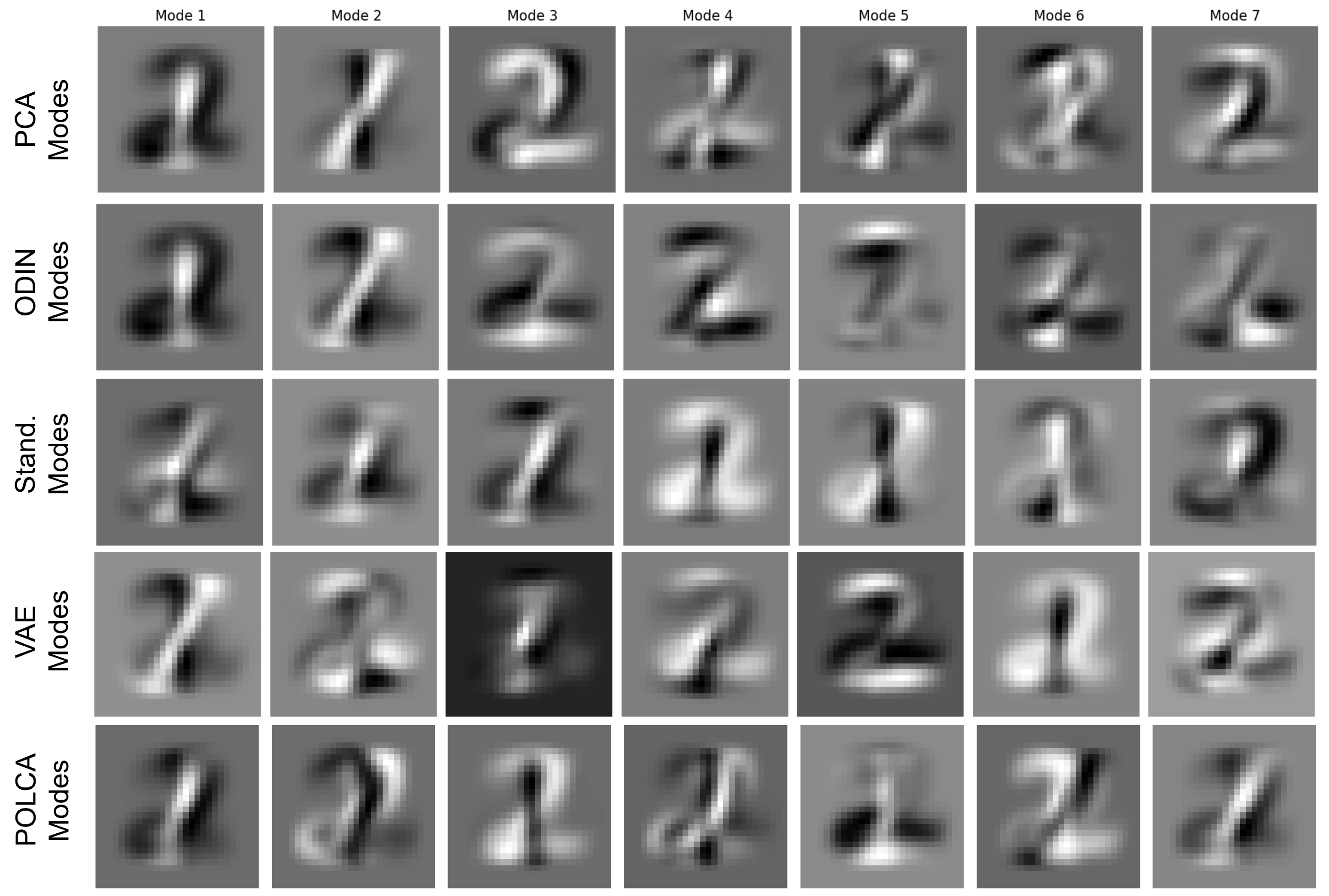}
    \caption{Panels display the first seven latent modes $R = Z^\top X$ (reshaped as $28\times 28$ images) on representative training runs for each method. ODIN modes demonstrate remarkable stability, closely aligning with PCA principal components in early dimensions while revealing hierarchical non-linear structure in later dimensions, with consistent assignment across all runs. Standard autoencoder modes spread dominant digit class separations diffusely across latent features, exhibiting high inconsistency across runs with smeared patterns that fail to isolate coherent digit characteristics. $\beta$-VAE modes achieve disentanglement with recognizable independent features, but suffer from inconsistent feature assignments across initializations. 
    }
    \label{fig:ODIN_modes}
\end{figure}

Visualizing the first latent mode by computing the average images for samples with $z_1>0$ versus $z_1<0$ reveals clear separation, with positive values corresponding to digit $2$ (with its characteristic curved top and bottom), and negative values corresponding to digit $1$ (vertical stroke structure). To quantify this, we evaluate the discriminative capacity of $z_1$ using the area under the receiver operating characteristic curve (AUC-ROC), treating $z_1$ directly as a ranking score without any learned classifier. This metric measures the probability that a randomly selected digit 2 sample receives a lower score than a randomly selected digit 1 sample, and yields an average AUC of $0.9942$, confirming near-perfect linear separability along this single dimension (see table~\ref{table:AUC_score}). Reconstructions using only $Dec(Z_{[1]})$ produce crude but recognizable digit shapes that capture the basic structural templates—vertical for $1$, curved for $2$—while omitting stylistic details.

\begin{table}[h!]
\centering
\begin{tabular}{|c|c|c|c|c|} 
 \hline
 ODIN & PCA & Standard-AE & VAE & POLCA\\ [0.5ex] 
 \hline
 0.9942 & 0.9794 &  0.9638 & 0.7012 & 0.8595\\ 
 [1ex] 
 \hline
\end{tabular}
\caption{AUC-ROC scores achieved by the first latent dimension $z_1$ as a linear separator for digits $1$ and $2$ on held out test data, averaged across multiple runs of each unsupervised network. An AUC near $1.0$ indicates that the first latent dimension captures nearly all class-discriminative structure.}
\label{table:AUC_score}
\end{table}

Subsequent latent dimensions, having captured the primary inter-class variation in dimension 1, are free to encode the dominant intra-class variation. For digit $1$, dimension $2$ manifests as angular orientation or slant. By selecting samples of digit $1$ and sorting them by $z_2$ values, we observe a smooth progression from left-leaning strokes $(z_2<0)$ through vertical orientations $(z_2 \approx 0)$ to right-leaning slants $(z_2>0)$. For digit $2$, dimension $3$ distinguishes between ``curly'' versus ``flat'' variants, \emph{i.e.} samples with $z_3<0$ exhibit tightly curved bottom loops that extend below the baseline, while $z_3>0$ corresponds to flatter, more angular bottom segments that terminate at the baseline. Higher-order dimensions continue this hierarchical decomposition into progressively finer features.

\subsection{Nitrogen Vacancy Diamond Photoluminescence Data}
To evaluate ODIN's performance on real-world scientific data, we applied both standard autoencoder and ODIN architectures to nitrogen-vacancy (NV) center spectroscopy measurements from diamond samples. The experimental setup and physical interpretation of NV photoluminescence (PL) spectra is discussed in detail by Rajpal et al \cite{rajpal_PL} in a forthcoming manuscript. The authors carefully evaluated the suitability of ML models for mapping observed spectra onto temperature. They note that while all ML models outperform expert knowledge derived linear models, autoencoders performed worst amongst the ML models. This underperformance was reasoned to arise from the inability of autoencoder models to discriminate against stochastic ``dark variables'' that are not correlated with temperature. Here we test, Rajpal et al's \cite{rajpal_PL}dark variable hypothesis using ODIN. 

\subsubsection*{Data pre-processing and  Model}
Briefly, the dataset consisted of photoluminescence spectra collected over a time series during which the sample temperature was continuously increased from 243 K to 343 K. Each spectrum was acquired at 2 nm resolution over the 613-800 nm wavelength range, resulting in 1340-dimensional input vectors. The dataset comprised 144,000 consecutive measurements, with temperature recorded simultaneously via an independent thermister. The dataset was split into $80\%$ training and $20\%$ testing for all models.

Both autoencoder architectures employed identical network configurations with a $1340\to1350\to650\to350\to175\to56\to20\to5$ encoder structure and symmetric decoder, with ReLU (Rectified Linear Unit) activation functions. The 5-dimensional latent space was chosen to provide sufficient capacity for capturing spectral variations while maintaining meaningful dimensionality reduction. Networks were trained using the Adam optimizer with stepped learning rate with plateau stopping criteria. For ODIN, the orthogonality regularization weight $\lambda_{\text{orth}}$ in~\ref{eq:total_loss} was varied from $0-3.0$ to compare latent correlations. To assess consistency, we performed 10 independent training runs for each architecture with different random weight initializations.

To evaluate the degree to which each network encodes temperature information in its latent representation, we perform a post-training linear regression analysis using the independently measured temperatures as ground truth labels. Critically, these labels are never provided to either network during training; both architectures are trained purely to minimize spectral reconstruction error, with no supervised signal of any kind. The encoder is therefore used solely to project each spectrum onto its corresponding latent representation, and only after training is complete do we introduce the temperature labels for the purpose of evaluation. Linear regression models with different subsets of the latent space, from individual latent variables up to all dimensions simultaneously, are fit to assess how much temperature information each variable carries independently, as well as how many dimensions are collectively required to achieve accurate temperature prediction.

\subsubsection*{Results}
For standard autoencoders, the distribution of temperature-related information varied substantially between runs. The analysis revealed that temperature information was often scattered across latent dimensions, with the specific dimensions varying unpredictably across initializations (see Figure~\ref{fig:standard correlations}). In some runs, dimensions 1, 3, and 5 jointly explained $90\%$ of temperature variance; in others, dimensions 2 and 4 were required to achieve comparable performance.

ODIN, in contrast, consistently encoded temperature information in the second latent dimension across all 10 training runs, see Figure~\ref{fig:odin correlations}. ODIN consistently achieved $R^2>0.80$ using only the first two latent variable and $R^2>0.90$ using the first three latent variables in 9 of 10 runs. Standard autoencoders required an average of 3.2 latent variables to reach $R^2>0.80$ and 4.5 variables for $R^2>0.90$.

Despite the stark difference in latent space organization, both architectures achieve
comparable temperature predictions on testing data (see Figure~\ref{fig:regression errors}). This might initially seem to undercut the case for ODIN, but it is precisely what one would expect: if both networks successfully capture temperature-relevant information, a sufficiently flexible regression should recover it regardless of how that information is distributed.\footnote{We remark that incorporating pairwise interaction (cross) terms between latent variables reduces the prediction error to within approximately half a degree, with ODIN achieving marginally better performance than the standard autoencoder. While the inclusion of cross terms somewhat complicates direct interpretation, it is consistent with the physical expectation that mean spectral intensity and temperature-dependent shifts interact non-linearly.} The distinction lies not in raw information content but in the accessibility and reliability with which that information can be extracted and interpreted.

\begin{figure}[h!]
    \centering
    \includegraphics[width = 0.48\linewidth]{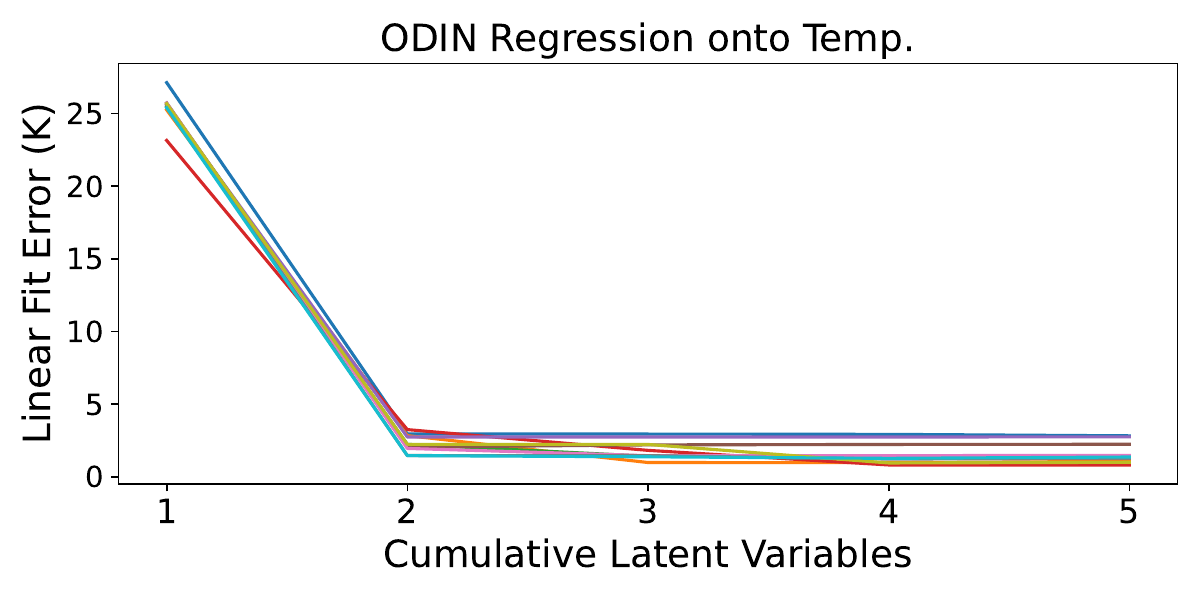}
    \includegraphics[width = 0.48\linewidth]{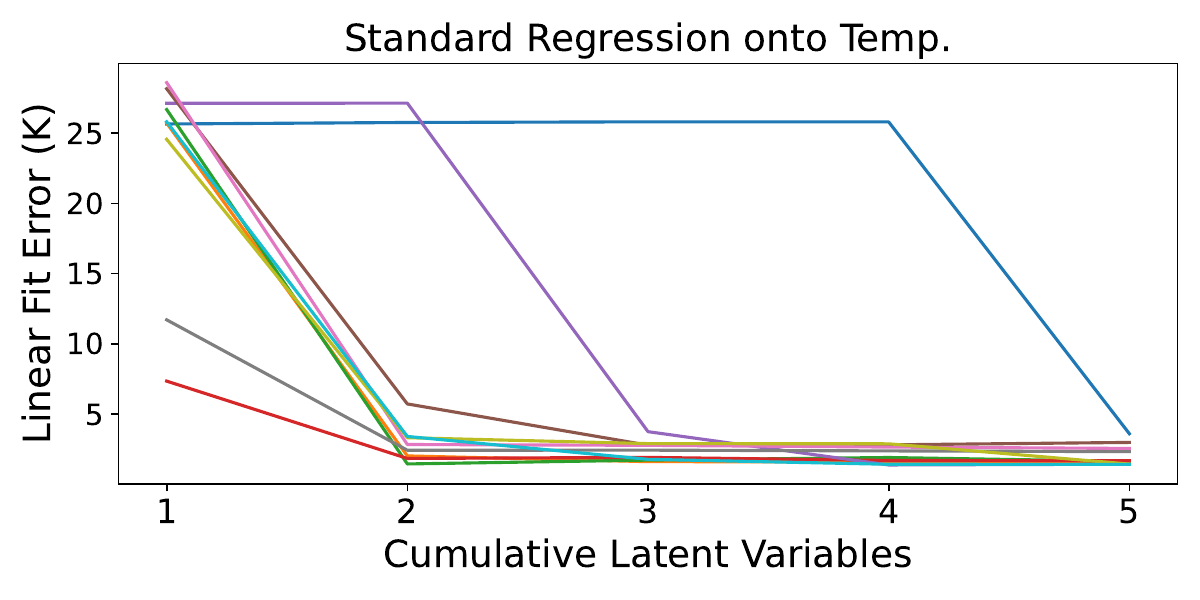}
    \caption{Linear regression error (in Kelvin) on held-out test data as a function of the number of latent dimensions included in the regression, for ODIN (left) and standard autoencoder (right) networks. Each curve corresponds to an independent training run. ODIN consistently achieves a sharp reduction in error at the second latent dimension, reflecting its stable assignment of temperature-dependent spectral variation to dimension two across all runs. The standard autoencoder exhibits substantial run-to-run variability, with temperature information distributed inconsistently across dimensions depending on initialization.}
    \label{fig:regression errors}
\end{figure}

\subsubsection*{Physical Interpretation of Latent Organization}

A closer examination of the learned latent representations reveals a striking contrast in how the two architectures resolve spectral features. Across all independent training runs, ODIN consistently assigns its first latent dimension to encode changes that are strongly correlated with the mean of the spectrum observations over time. We refer to this mode as the ``mean latent mode''. The mean latent mode represents the single largest source of variance in the dataset \emph{in terms of reconstruction}. We hypothesize that this mode is capturing overall intensity changes in the overall PL profile. The second latent dimension is strongly correlated with temperature changes (see Figure~\ref{fig:odin correlations}).  Temperature-dependent shift in the zero-phonon line (ZPL) and broadening of the ZPL and first two phonon modes of the broader phonon side band captured by the second dimension, while physically significant, constitute a comparatively smaller perturbation. ODIN's hierarchy faithfully respects this structure.

\begin{figure}
    \centering
    \includegraphics[width = 0.8\linewidth]{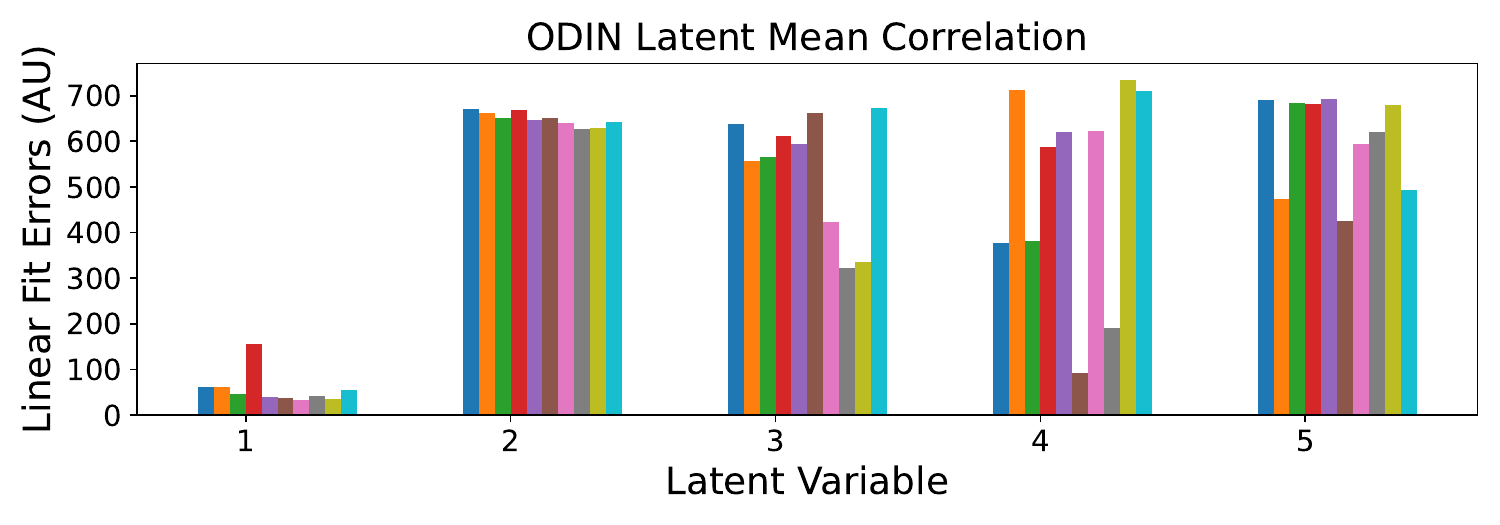}
    \includegraphics[width = 0.8\linewidth]{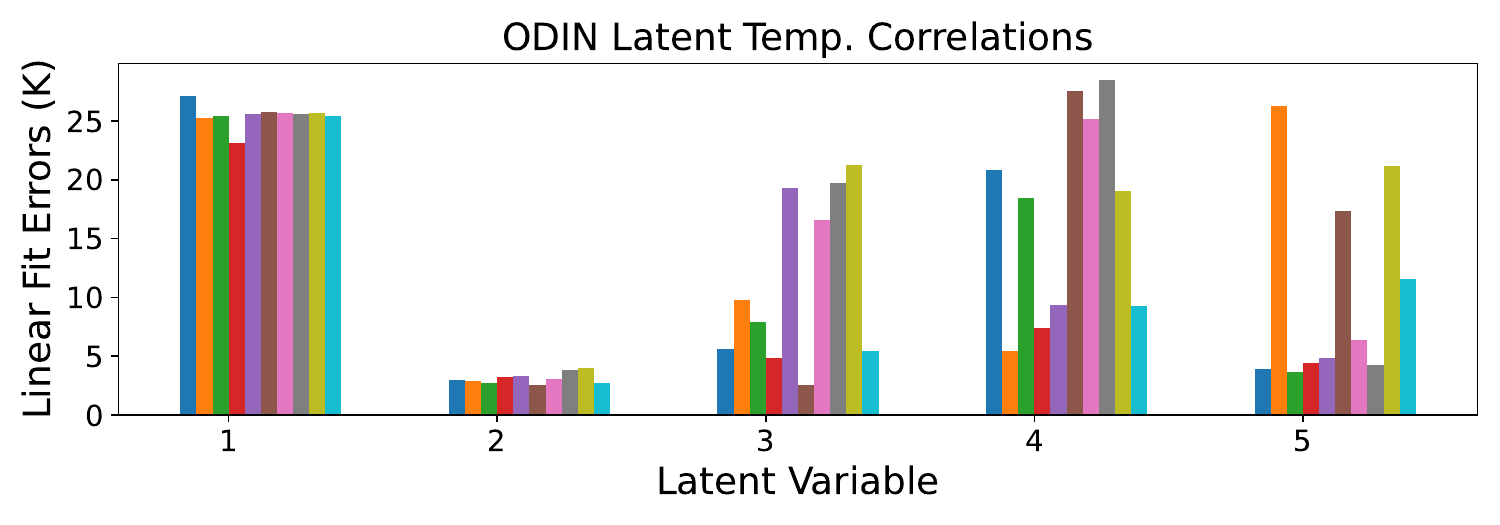}
    \caption{Bar plots showing the linear fit correlation of individual ODIN latent dimensions with the dataset mean spectrum (top) and measured temperature (bottom), for each of several independent training runs (distinguished by color). Dimension one is consistently and strongly correlated with the mean spectrum across all runs, while dimension two consistently carries the dominant correlation with temperature. Higher-order dimensions show weak and variable correlations with both quantities.}
    \label{fig:odin correlations}
\end{figure}

Visualization of the latent space confirmed this interpretation: projecting the data onto ODIN latent dimensions revealed a clear temperature gradient along dimension two, while dimension one captured measurement-to-measurement intensity variations, reflecting the variation in total collected photoluminescence (see Figure~\ref{fig:odin_vars}), although it does over short temperature intervals appear to be correlated with temperature.

\begin{figure}
    \centering
    \includegraphics[width=0.8\linewidth]{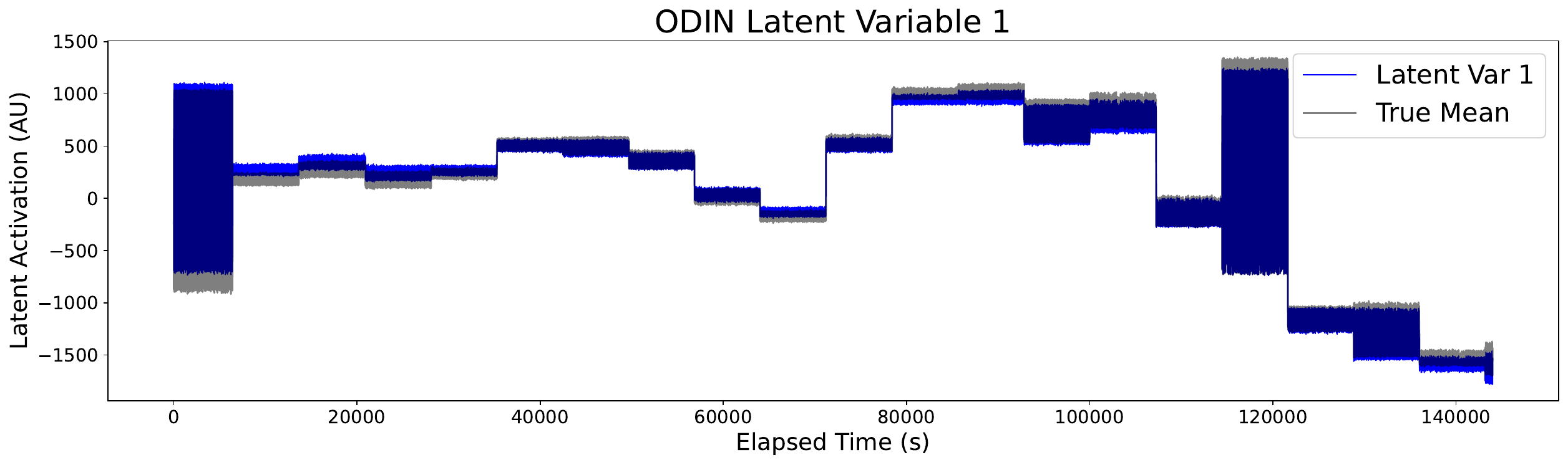}
    \includegraphics[width=0.8\linewidth]{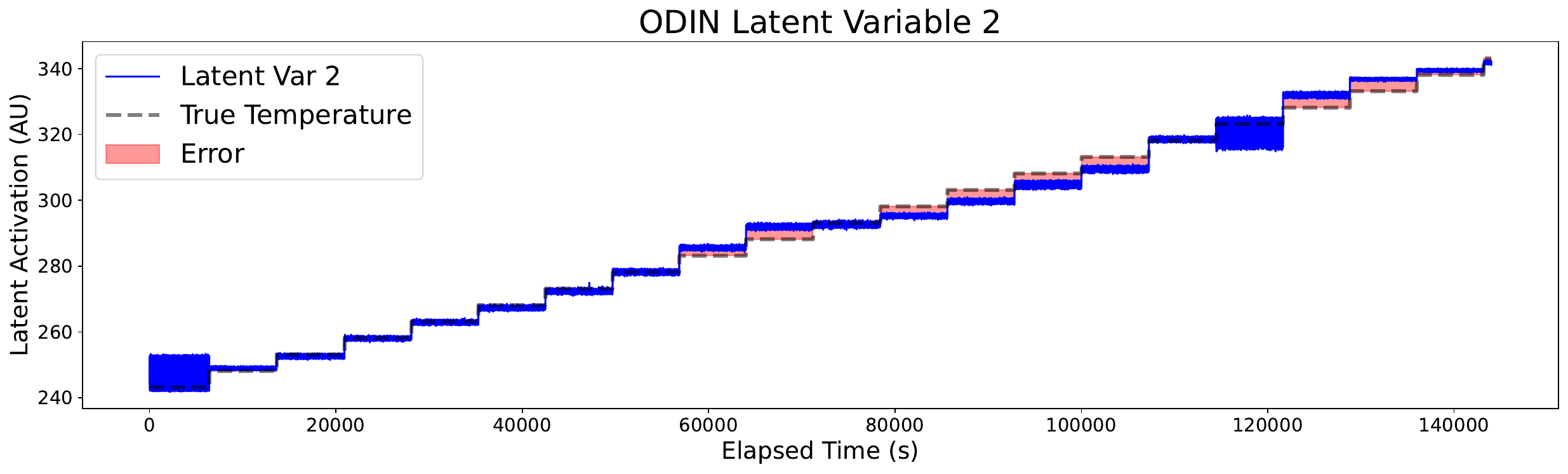}
    \caption{The first two latent dimensions learned by ODIN, shown for a representative training run, alongside the dataset true mean spectrum (top) and temperature (bottom) for comparison. Latent dimension one (top) is strongly correlated with the true mean spectrum, capturing the baseline photoluminescence profile common to all measurements. Latent dimension two (bottom) is strongly correlated with temperature, reflecting the thermally driven shift and broadening of the zero-phonon line and phonon sideband. Both latent modes are displayed after affine rescaling according to the coefficients obtained by regressing each mode onto its respective physical quantity. This organization was consistent across all independent training runs.}
    \label{fig:odin_vars}
\end{figure}

Standard autoencoders, by contrast, do not exhibit this clean separation. Across training runs, most (and in some cases, all) latent dimensions show substantial correlation with the mean spectrum, with additional dimensions appearing to encode corrective factors on top of this dominant signal (see Figure~\ref{fig:standard correlations}). Rather than isolating independent sources of variation, the standard network spreads mean latent mode-related information diffusely across the latent space, with temperature information entangled within this distributed representation. This behavior is consistent with the broader paradigm of unconstrained autoencoders: without an explicit mechanism to overcome the symmetry of the solution space, the network finds no reason to avoid redundant encodings, and the resulting latent space reflects arbitrary superpositions of the true underlying factors rather than a clean decomposition.

\begin{figure}
    \centering
    \includegraphics[width = 0.8\linewidth]{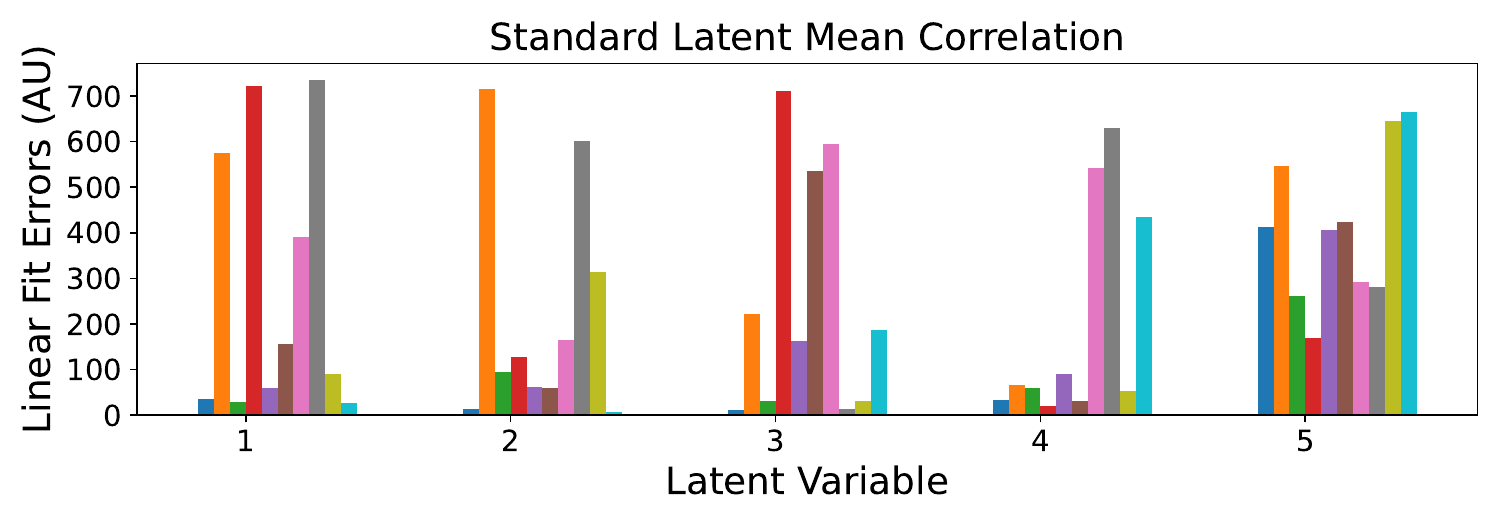}
    \includegraphics[width = 0.8\linewidth]{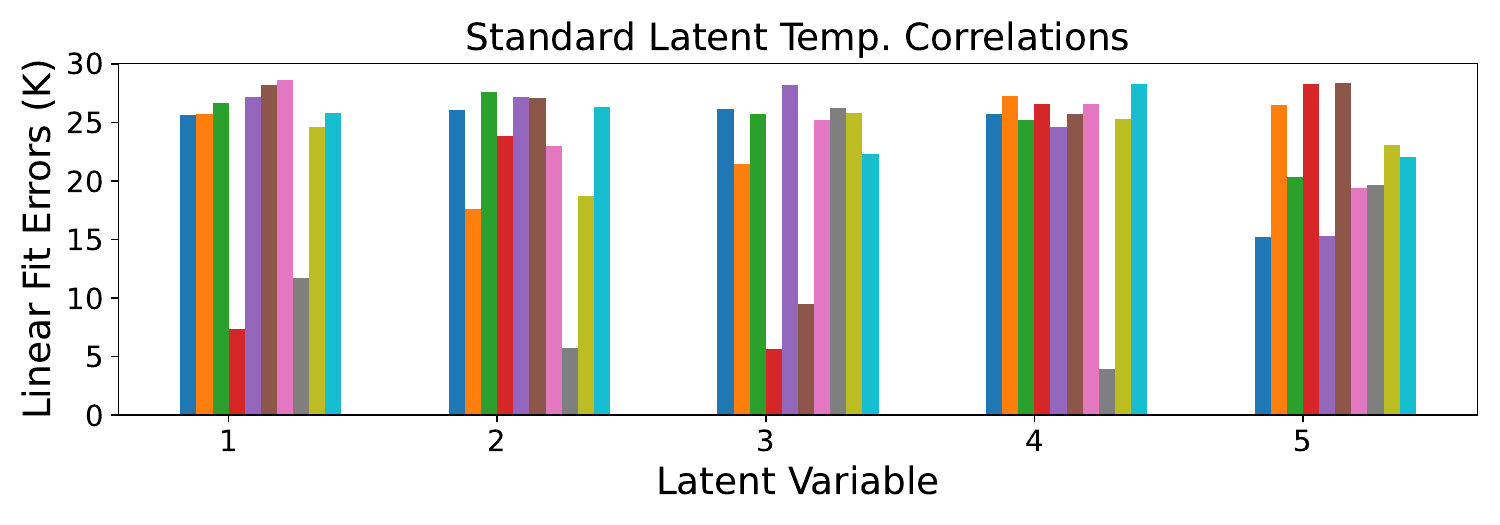}
    \caption{Linear fit correlation of standard-AE latent dimensions with the dataset mean spectrum (top) and with temperature (bottom), shown for multiple independent training runs (colored bars). In contrast to ODIN, mean-spectrum information is spread across multiple dimensions and varies substantially from run to run, reflecting the absence of any ordering in the learned latent space. Temperature correlation similarly fails to localize to a consistent dimension, appearing in different positions depending on the random initialization.}
    \label{fig:standard correlations}
\end{figure}

The relationship between the mean spectrum and temperature prediction deserves elaboration. When the mean of each observation is subtracted from the dataset prior to training, ODIN correctly promotes temperature to its first latent dimension. However, this mean-centered configuration yields slightly worse regression performance onto temperature (while maintaining comparable reconstruction quality), suggesting that the mean spectrum carries some  temperature-predictive information. This is physically plausible: in fiber-coupled NV sensing configurations, the measured photoluminescence spectrum is a composite signal mainly dominated by NV-center fluorescence, but also containing a nontrivial background contribution from the optical fiber itself and the epoxy and other materials used in fiber sensor assembly. Both the NV fluorescence and fiber contributions are proportional to the incident laser power. Hence random fluctuations or long-term drift in laser power can lead to spectrum-to-spectrum variations observed for mean latent mode projections that have temporal correlations but no temperature correlation.  The correlation between the latent mean mode and  temperature likely arises due to temperature dependent changes in the fiber's (and associated material's) background florescence intensity and fiber assembly's transmission properties. Other likely, though weaker, contributions could arise from temperature dependent changes in NV$^{-}$ to NV$^{0}$ charge state conversion

In a recent study, \cite{johansson2026characterization} have carefully examined the contribution of various types of optical fiber's background florescence (sometime referenced as autoflorescence) in NV sensing application. The authors demonstrate that optical fibers show a broad fluorescence band above $600$ nm centered near $644–645$ nm region that is associated with non-bridging oxygen hole center (NBOHC) defects in fused-silica glass. It significantly overlaps with the NV florescence spectra. Crucially, the NBOHC fluorescence intensity is itself temperature-dependent due to phonon-electron coupling of the NBOHC center to the silica matrix. This suggests that the overall spectral weight and baseline intensity (what ODIN captures as its first latent mode) is not a temperature-independent nuisance variable, but rather a partially informative proxy for thermal state of the sensor assembly.    
Higher-order ODIN dimensions, those beyond the first two, capture structured residual variation that is neither mean-like nor purely temperature-dependent. The fact that these dimensions do not contribute substantially to improved regression suggests they are not as critical for temperature prediction, but may nonetheless encode subtle experimental covariates or higher-order effects that are physically meaningful to understand the evolution of PL spectra over time. These effects may include slight changes in the fiber bundle's light coupling and transmission properties due to either temperature history, laser power exposure strain due to bending, changes in optical properties of the epoxy used in sensor and or thermo-mechanical changes in the epoxy/fiber sensor assembly.

When such degradation accumulates over time, as such it constitutes an archetypal dark variable; structured, physically motivated, but uncontrolled. Because the dominant signals have been cleanly extracted into the leading modes, these residual dimensions are far easier to interrogate than the analogous dimensions of a standard autoencoder, where mean and temperature information has not been fully separated from whatever else the network has learned. In this sense, ODIN's ordered decomposition transforms latent space exploration from an arbitrary search into a principled, progressive investigation of the data's underlying structure. For the NV spectroscopy application, this provides a stable coordinate system for analyzing how spectral features evolve with thermal excitation. Given the evidence presented in~\cite{johansson2026characterization}, a natural first hypothesis to test is whether slow drift in the fiber's autofluorescence background manifests as structured variation in the higher-order ODIN dimensions. This could be directly probed by acquiring long-duration fiber emission spectra under controlled, temperature-stable conditions and examining whether the resulting drift signature projects preferentially onto the residual latent modes.

\subsubsection*{Cross Encoder Validation}

\begin{figure}
    \centering
    \includegraphics[width=0.8\linewidth]{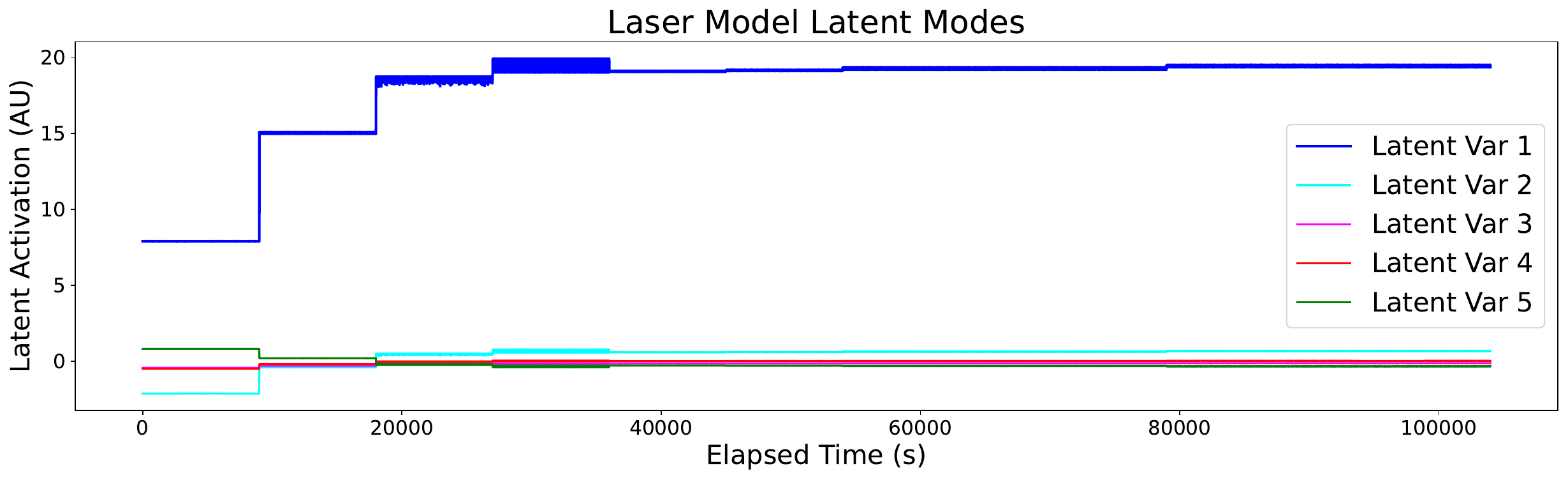}
    \includegraphics[width=0.8\linewidth]{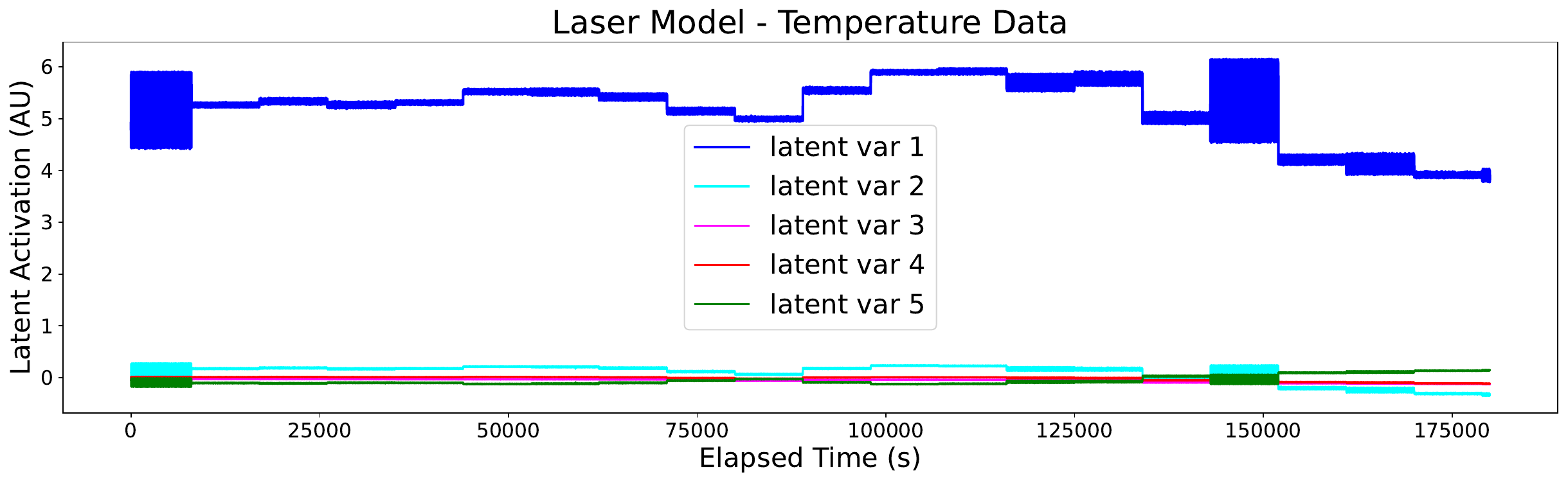}

    \caption{(Top) Latent modes of the laser-trained encoder layer of ODIN for in-distribution laser power data: one dominant mean-like mode, with higher-order modes flat and near-zero. (Bottom) Cross-passed temperature data through the laser encoder of ODIN: mean mode active, higher-order modes remain inert. The dominance of the mean mode across all conditions confirms that laser power variation occupies a single intensity-scaling dimension in spectral space.}
    \label{fig:laser_modes}
\end{figure}

A central claim in the preceding analysis is that ODIN's first latent mode encodes overall spectral weight rather than a temperature-specific signal. Under this hypothesis, any physical perturbation that modulates total photon flux should drive the first latent mode, regardless of whether it also modulates spectral shape. To test this directly, we collected laser power variation data under controlled, fixed-temperature conditions and examined the response of both the temperature-trained encoder and the laser power-trained encoder when presented with either dataset.

Training ODIN on laser power variation data at fixed temperature produces a particularly clean latent structure. One dominant mode emerges, corresponding to the row mean of the data, which is expected given that laser power and total emission intensity are approximately proportional and the spectral shape is largely invariant to laser power under fixed-temperature conditions. The higher-order modes learned by the laser encoder are notably flat, carrying negligible residual structure regardless of whether laser data or temperature data are passed through them (see Figure~\ref{fig:laser_modes}). This flatness indicates that laser power variation, at fixed temperature, exhausts its spectral degrees of freedom in a single dominant direction, with no meaningful higher-order content remaining once the mean scaling is accounted for. The same flatness persists when temperature data are passed through the laser encoder, confirming that the laser encoder has no coordinate axes onto which the richer spectral structure of temperature variation can project. In both directions, the shared signal between the two datasets is exactly and only the row mean. 

Passing laser power data through the temperature encoder predominantly activates the mean-related mode in a manner consistent with the known proportionality between incident laser power and total emission intensity, while the temperature-sensitive mode and higher-order dimensions exhibit only weak, mean-like residual activations. This is a strong indicator that the first latent mode of the temperature encoder is not a specialist temperature coordinate, but a general intensity coordinate. Higher order modes carry information that is genuinely absent from fixed temperature, variable laser-power observations.

The activation of the mean mode across both encoder-dataset combinations identifies it as the unique latent coordinate shared by all physical perturbations. The mean-like residual activations produced when laser power data are passed through the temperature encoder suggests that the temperature encoder's higher-order modes are not completely orthogonal to intensity variation in the cross-passed setting, but that the residual response they do produce carries no structured spectral content beyond a broad scaling. This is distinct from the structured phonon-sideband and ZPL character seen in those modes during temperature data encoding, and it is consistent with the interpretation that those modes are specialist coordinates for spectral shape variation that simply find no matching structure in the laser dataset, leaving only a weak, unstructured mean-like residual.

This interpretation has practical consequences for preprocessing choices. Mean-subtracting each spectrum prior to training removes the thermal information carried by the mean mode along with the laser-power-driven intensity variance. When the goal is temperature inference, this tradeoff is unfavorable, and the experimental results reported here are consistent with the observation that training on un-mean-subtracted spectra yields marginally improved temperature regression performance.

The present results, obtained entirely without labels, raise a natural question about what could be achieved with access to even partial supervision. The current ODIN encoders are trained on single experimental conditions and learn modes that reflect the dominant sources of variance within each condition. A supervised or semi-supervised extension of ODIN, in which known physical labels such as laser power and temperature are incorporated as auxiliary targets on selected latent dimensions, could in principle further factorize the latent distribution. Supervision onto known physical variables anchors those dimensions explicitly, leaving the residual modes free to absorb any remaining structured variation. Subsequent targeted data collection under controlled conditions would then provide the labeled data needed to either confirm or rule out candidate dark-variable hypotheses, and potentially to introduce those variables as additional supervisory targets. Exploring this direction, and in particular whether ODIN's orthogonality structure generalizes reliably across experimental conditions when extended to joint training on multiple specialist datasets, remains an important avenue for future work.

\section{Conclusions}

In this work, we introduced ODIN, a novel autoencoder architecture that addresses the rotational ambiguity and feature entanglement inherent to standard autoencoder optimization. By enforcing orthogonality and ordered variance decomposition through dendritic decoding, ODIN produces latent representations that are statistically independent, sorted by explained variance, and reproducible across experiments. Unlike $\beta$-VAEs, which require careful, data-dependent tuning of a balancing hyperparameter, ODIN achieves disentangled representations and faithful reconstructions simultaneously, without delicate trade-offs.

A central theoretical contribution of this work is the result that, in the linear regime, ODIN provably recovers PCA via the reconstruction pathway, establishing a rigorous foundation that connects the architecture to a well-understood and optimal solution. Rather than indirectly encouraging ordering structure through auxiliary losses or hyperparameter schedules, ODIN inherits its organizational principles directly from its architecture, making the nonlinear generalization both natural and theoretically motivated. Each latent component maintains a stable semantic correspondence to a direction of data variation, empowering researchers to identify what the model has learned, in what order of priority, and with what degree of statistical independence.

This transparency supports an iterative workflow that bridges machine learning and the traditional scientific method. Researchers can train ODIN on initial data, interrogate the latent structure to surface dominant factors and unexpected patterns, and design targeted follow-up experiments guided by those findings. For the broader goal of integrating machine learning into scientific workflows, ODIN exemplifies the importance of model interpretability as a prerequisite for scientific utility. By transforming the autoencoder from a purely predictive tool into an instrument for hypothesis testing and exploratory analysis, ODIN provides a principled and flexible foundation for dimensionality reduction in complex, high-dimensional domains. We hope it serves as a useful framework for researchers who require not just accurate representations, but ones they can understand, trust, and build upon.


\acks{Tyrus Berry and Jeanie Schreiber acknowledge support from NIST Award 70NANB24H232 and Zeeshan Ahmed acknowledges support from AFMETCAL Award R24-685-0005.}


\vskip 0.2in
\bibliography{bibliography}

\vskip 0.2in
\appendix
\section{}
\label{app:PCA}


This appendix collects proofs of the key results about PCA stated in the main text. We begin in section~\ref{subsec:no_order} by characterizing the solution set of the standard (unordered) PCA objective, showing that any projection onto the subspace spanned by the top $k$ right singular vectors is a maximizer. We then show in section~\ref{subsec:with_order} that this rotational ambiguity is resolved by the ordered objective whose minimizer recovers the principal components in their natural order of importance.

\subsection{Review of Standard PCA Results}
\label{subsec:no_order}

The PCA projection is typically characterized by the following maximum variance projection problem,
\begin{equation}\label{maxvar} \max_W  \text{Tr}(W^T X^T X W), \quad \text{subject to  } W^T W = I_{k\times k}\end{equation}
where $\text{Tr}(W^T X^T X W) = ||XW||_F^2$ is the variance of the projection into the latent space.  
The orthogonality constraint is $W^TW - I = 0$, so we have the following Lagrangian formulation:
  \[\mathcal{L}(W,\Lambda) = \text{Tr}(W^TX^TXW) - \text{Tr}(\Lambda(W^TW-I)).\]  
To characterize solutions of the above maximization, first note the following:
\begin{lemma}
    Only the symmetric part of $\Lambda$ affects the Lagrangian $\mathcal{L}(W,\Lambda)$.  The Lagrangian is also invariant to orthogonal transformations applied to the right side of $W$.
\end{lemma}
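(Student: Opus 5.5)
The first claim follows by splitting $\Lambda$ into symmetric and antisymmetric parts, $\Lambda = \Lambda_s + \Lambda_a$ with $\Lambda_s = \tfrac12(\Lambda+\Lambda^\top)$ and $\Lambda_a = \tfrac12(\Lambda-\Lambda^\top)$. The matrix $C := W^\top W - I$ is symmetric for every $W$. So it suffices to show that $\text{Tr}(\Lambda_a C) = 0$ whenever $C$ is symmetric. Using invariance of the trace under transposition and cyclic permutation,
\[\text{Tr}(\Lambda_a C) = \text{Tr}\big((\Lambda_a C)^\top\big) = \text{Tr}(C^\top \Lambda_a^\top) = -\text{Tr}(C\Lambda_a) = -\text{Tr}(\Lambda_a C),\]
hence $\text{Tr}(\Lambda_a C)=0$. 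Consequently $\mathcal{L}(W,\Lambda) = \mathcal{L}(W,\Lambda_s)$, and we may restrict attention to symmetric multipliers throughout.

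For the second claim, fix an orthogonal $Q\in\mathbb{R}^{k\times k}$ and replace $W$ by $WQ$. The variance term is $\text{Tr}(Q^\top W^\top X^\top X W Q) = \text{Tr}(W^\top X^\top X W)$ by cyclicity and $QQ^\top = I$. The constraint matrix transforms as $(WQ)^\top(WQ) - I = Q^\top (W^\top W - I) Q$, so the constraint $W^\top W = I$ is preserved exactly when $Q$ is orthogonal. The penalty becomes
\[\text{Tr}\big(\Lambda Q^\top C Q\big) = \text{Tr}\big((Q\Lambda Q^\top)\, C\big).\]
Thus $\mathcal{L}(WQ,\Lambda) = \mathcal{L}(W, Q\Lambda Q^\top)$. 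The conjugated multiplier $Q\Lambda Q^\top$ is still symmetric whenever $\Lambda$ is. Read this way, invariance means the following. The objective and the feasible set are invariant under $W\mapsto WQ$. The Lagrangian at $(WQ,\Lambda)$ equals the Lagrangian at $(W,Q\Lambda Q^\top)$. In particular, stationary pairs map to stationary pairs, and at feasible points the Lagrangian value is unchanged.

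The only step needing care is stating this second claim precisely, since the Lagrangian is not literally invariant for a fixed $\Lambda$ off the constraint set. I would state it in the form above, as invariance of the pair up to the compatible conjugation of the multiplier, together with the fact that on the feasible set the penalty vanishes. There it reduces to $\text{Tr}(W^\top X^\top X W)$, which is exactly invariant. This is what is needed later: we may diagonalize the symmetric $\Lambda$ by a choice of $Q$, which yields the eigen-equation $X^\top X W = W\Lambda$ with diagonal $\Lambda$ and leads to Theorem~\ref{PCA1}.
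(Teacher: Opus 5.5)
Your proof is correct and follows the paper's argument. The first claim uses the same symmetric/skew-symmetric split and the same transpose-plus-cyclicity identity $\text{Tr}(\Lambda_a C)=-\text{Tr}(\Lambda_a C)$. For the second claim the paper only checks that $\text{Tr}(W^\top X^\top X W)$ and the feasible set $W^\top W=I$ are invariant under $W\mapsto WQ$, so your identity $\mathcal{L}(WQ,\Lambda)=\mathcal{L}(W,Q\Lambda Q^\top)$ is a sharper reading of the statement, and you are right that for fixed $\Lambda$ the Lagrangian itself is not invariant off the constraint set.
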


\begin{proof}
    Any square matrix $\Lambda$ can be decomposed uniquely as $\Lambda = \Lambda_s+\Lambda_a$, where $\Lambda_s = \frac{1}{2}(\Lambda+\Lambda^T)$ and $\Lambda_a = \frac{1}{2}(\Lambda-\Lambda^T)$, with $\Lambda_s^T = \Lambda_s$ (symmetric) and $\Lambda_a^T = -\Lambda_a$ (skew-symmetric). Since $S = W^TW-I$ is symmetric and $\Lambda_a$ is skew symmetric, we have
    \begin{align*}
        \text{Tr}(\Lambda_a(W^TW-I)) &= \text{Tr}((\Lambda_aS)^T)\\&= \text{Tr}(S^T\Lambda_a^T)\\&=
        \text{Tr}(S(-\Lambda_a))\\&=
        -\text{Tr}(\Lambda_aS),
    \end{align*}
so the trace must be zero. Thus the Lagrangian only depends on $\Lambda_s$.

Next, consider an orthogonal transformation $Q$. For any $W$ satisfying $W^TW = I$, we have that $\tilde W = WQ$, also satisfies $\tilde W^T\tilde W = I$. Moreover, the PCA objective \eqref{maxvar} satisfies
\[\text{Tr}(W^TX^TXW) = \text{Tr}(\tilde W^TX^TX\tilde W).\]
Thus PCA as characterized by \eqref{maxvar} is right-orthogonally invariant and solutions are only unique up to rotations and reflections.
\end{proof}

Now we can derive the following Lagrangian formulation of the PCA subspace defined up to orthogonal transformation:
\begin{theorem}Let $X \in \mathbb{R}^{N \times n}$, with singular value decomposition $X=USV^\top$, then 
\[ \sum_{i=1}^k s_i^2 = \max_{W \in \mathbb{R}^{n \times k}} {\rm Tr}(W^\top X^\top X W) + {\rm Tr}(\Lambda(W^\top W - I)) \]
where $s_i$ are the singular values of $X$ in decreasing order, and the maximum is achieved when $W=V_{[k]}Q=VI_{n\times k}Q$ where $Q \in \mathbb{R}^{k\times k}$ is any orthogonal matrix.
\end{theorem}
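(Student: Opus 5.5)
I read the statement as the constrained problem \eqref{maxvar} written in Lagrangian form: the maximum is over $W$ with $W^\top W=I$, and $\Lambda$ is the multiplier. Only its symmetric part matters, by the preceding lemma. The proof has two parts. First, the Lagrangian stationarity conditions identify the critical points. Second, a short weighting argument shows which critical points are global maxima and gives the value $\sum_{i=1}^k s_i^2$.

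\emph{Stationarity.} Differentiating $\mathcal{L}(W,\Lambda)$ in $W$ gives $2X^\top X W - W(\Lambda+\Lambda^\top)=0$. Replacing $\Lambda$ by its symmetric part $\Lambda_s$, this reads $X^\top X W = W\Lambda_s$. Diagonalize $\Lambda_s = PDP^\top$ with $P$ orthogonal and set $W' = WP$. By the invariance part of the lemma, $W'$ is again feasible, has the same objective value, and satisfies $X^\top X W' = W'D$. Thus each column of $W'$ is a unit eigenvector of $X^\top X = VS^\top S V^\top$, with eigenvalue $d_j$. The columns are mutually orthonormal, and the objective equals $\operatorname{Tr}(D)=\sum_j d_j$, which is a sum of $k$ eigenvalues $s_{i_1}^2,\dots,s_{i_k}^2$ of $X^\top X$ taken with multiplicity.

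\emph{Global maximum.} To see that the top $k$ eigenvalues are optimal, and to avoid relying on second-order conditions, I would argue directly. Write $Y = V^\top W$, which has orthonormal columns. Then
\[
\operatorname{Tr}(W^\top X^\top X W) = \operatorname{Tr}\!\left(Y^\top S^\top S\, Y\right) = \sum_{i=1}^n s_i^2\, p_i, \qquad p_i = \|Y_{i,:}\|^2 .
\]
Here $\sum_i p_i = \|Y\|_F^2 = k$, and $0\le p_i\le 1$ because $Y$ extends to an orthogonal $n\times n$ matrix, so each row of $Y$ has norm at most one. Maximizing a linear function over this set of weights puts all weight on the $k$ largest coefficients. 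The maximum is therefore $\sum_{i=1}^k s_i^2$, attained at $W=V_{[k]}$.

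\emph{Characterizing the maximizers.} Assume $s_k>s_{k+1}$, as the theorem implicitly does. Then equality forces $p_i=1$ for $i\le k$ and $p_i=0$ for $i>k$, so $Y = I_{n\times k}Q$ for some $k\times k$ matrix $Q$ with $Q^\top Q=I$. Hence $W = VI_{n\times k}Q = V_{[k]}Q$. Conversely, every such $W$ attains the value, by the invariance in the lemma.

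The main obstacle is interpretive rather than technical. Read literally as an unconstrained maximum over $W$ for a fixed $\Lambda$, the statement is not well posed, so I would state explicitly the constrained reading used above. The second subtle point is the degenerate case $s_k=s_{k+1}$. There the maximum value is unchanged, but the maximizers are $k$-dimensional subspaces of the sum of the top eigenspaces, not only $\operatorname{span}(V_{[k]})$. I would note this and impose $s_k>s_{k+1}$, which is consistent with the later distinct-spectrum assumptions in the paper.
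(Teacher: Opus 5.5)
Your proof is correct. Your constrained reading of the statement is the right one: for a fixed $\Lambda$ the displayed unconstrained maximum can be $+\infty$, e.g.\ for $\Lambda=0$.

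Your stationarity step is the same as the paper's. You differentiate, replace $\Lambda$ by its symmetric part, diagonalize it, and rotate $W$ so that its columns become orthonormal eigenvectors of $X^\top X$.

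The difference is in how you establish global optimality. The paper asserts that ``in order to maximize the trace'' the columns must be the leading $k$ right singular vectors, which amounts to comparing the values $\operatorname{Tr}(D)$ across critical points. Making that rigorous needs two facts the paper leaves implicit: a maximizer exists (compactness of $\{W : W^\top W = I\}$), and it is a stationary point of the Lagrangian (a constraint qualification on the Stiefel manifold).

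Your argument writes the objective as $\sum_i s_i^2 p_i$, where $p_i$ are the diagonal entries of the projector $YY^\top$, so $0\le p_i\le 1$ and $\sum_i p_i = k$. This is a direct Ky Fan--type bound that needs neither fact. It also makes the Lagrangian step logically superfluous. On its own it gives the value $\sum_{i\le k}s_i^2$ and shows every $V_{[k]}Q$ attains it. Under $s_k>s_{k+1}$ it further shows these are the only maximizers, which the paper's proof (``$\tilde W = V_{[k]}$'') passes over in the degenerate case.

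What each route buys: the paper's version ties the result to the eigenproblem $X^\top X W = W\Lambda$ and the Lagrangian picture. Yours is self-contained and handles the ``only if'' direction correctly.
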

\begin{proof}
Taking the derivative of the Lagrangian we have:
\[\frac{\partial\mathcal{L}}{\partial W} = 2X^TXW - 2W\Lambda= 0\implies X^TXW = W\Lambda.\]
 By the previous lemma we may assume $\Lambda$ is symmetric, in particular it can be diagonalized as $\Lambda = QDQ^T$ ($D$ diagonal), and we may freely choose the basis where $\Lambda$ is diagonal. In that case, the stationarity condition becomes:
\begin{align*}
    X^TX\tilde W &= X^TX(WQ)
    \\&= (X^TXW)Q\\&=
    (W\Lambda)Q\\&=
    W(QDQ^T)Q\\&=
    (WQ)D\\&=
    \tilde WD
\end{align*}
which is an eigenproblem where each column of $\tilde W$ is an eigenvector of $X^TX$, \emph{i.e.} the right singular vectors of $X$. In order to maximize the trace objective, the columns of $\tilde W\in\mathbb{R}^{n\times k}$ should contain the leading $k$ singular vectors of $X$, which is the classical result from PCA. In that case we have \[\tilde W= WQ = V_{[k]}\implies W = V_{[k]}Q^\top = VI_{n\times k}Q^\top\]
and \[\max_{W^\top W = I_{k\times k}} ||X W||_F^2 = \|XV_{[k]}Q^\top\|_F^2 = \|USI_{n\times k}Q^\top\|_F^2 = \sum_{i=1}^ks_i^2\]
\end{proof}
Note that the above Lagrangian gives a projection $W$ into the PCA subspace spanned by the leading $k$ right singular vectors. However, this optimization problem only recovers the PCA subspace (since $W$ is only defined up to an orthogonal transformation $Q$), and does not order the the intrinsic coordinates in the latent space.

\subsection{PCA With Ordering}
\label{subsec:with_order}
To recover the principal components in their natural order, we augment the reconstruction objective with a sequence of nested terms indexed by the columns of $W$. The resulting objective breaks the rotational symmetry of the unordered problem and, as we show below, admits a minimizer equal to the ordered principal components $v_1, v_2, \dots, v_k$ up to sign change. Consider first the following important lemma:

\begin{lemma}
\label{lem:orth_projector}
    If $W \in \mathbb{R}^{n\times k}$ is a global minimizer of the function
\[
    F(W)=\|X^\top-WW^\top X^\top\|_F^2
\] then $WW^\top$ is the orthogonal projector onto $\mathrm{range}(W)$. If additionally $W$ is full rank, then it's columns are orthonormal.
\label{lem:projection}
\end{lemma}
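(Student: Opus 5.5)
The plan is to rewrite $F$ in terms of the symmetric positive semidefinite matrix $P = WW^\top$ and the covariance $C = X^\top X$, and then reduce the problem to a scalar inequality applied to each eigenvalue of $P$. Since $\|A\|_F^2 = \mathrm{Tr}(AA^\top)$ and $P$ is symmetric,
\[
F(W) = \mathrm{Tr}\big((I-P)C(I-P)\big) = \mathrm{Tr}\big((I-P)^2 C\big).
\]
I would diagonalize $P = \sum_{i=1}^r \lambda_i u_i u_i^\top$, where $r = \mathrm{rank}(W)$, the $\lambda_i > 0$ are the nonzero eigenvalues, and $u_1,\dots,u_r$ is an orthonormal basis of $\mathrm{range}(W)$. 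Then $(I-P)^2 = I - \sum_i (2\lambda_i - \lambda_i^2)\, u_i u_i^\top$, so
\[
F(W) = \mathrm{Tr}(C) - \sum_{i=1}^r (2\lambda_i - \lambda_i^2)\, u_i^\top C u_i .
\]

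The key observation is the elementary bound $2\lambda - \lambda^2 = 1 - (1-\lambda)^2 \le 1$, with equality iff $\lambda = 1$. Since $u_i^\top C u_i = \|Xu_i\|^2 \ge 0$, the following comparison holds. Replace $W$ by $\tilde W = [u_1,\dots,u_r]$, so that $\tilde W\tilde W^\top = P_0$ is the orthogonal projector onto $\mathrm{range}(W)$. Then
\[
F(\tilde W) = \mathrm{Tr}(C) - \sum_i u_i^\top C u_i \le F(W),
\]
with strict inequality as soon as some $\lambda_i \neq 1$ has $Xu_i \neq 0$. Hence at a global minimizer every eigen-direction with $Xu_i \neq 0$ must have $\lambda_i = 1$.

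The main obstacle is the degenerate directions with $Xu_i = 0$. There $\lambda_i$ is unconstrained; indeed for $X = 0$ every $W$ is a global minimizer, so the lemma cannot hold verbatim without a nondegeneracy assumption. I would use the hypothesis already in force in Theorems~\ref{thm:dend_loss} and~\ref{thm:odin_loss}, namely $s_k > 0$, and argue as follows.

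First, the global minimum value is $\mathrm{Tr}(C) - \sum_{i\le k} s_i^2$. It is attained by $V_{[k]}$, and it is a lower bound by the comparison above combined with the Ky Fan inequality $\sum_{i\le r} u_i^\top C u_i \le \sum_{i \le r} s_i^2$ for orthonormal $u_i$, which is essentially the content of Theorem~\ref{PCA1}.

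Second, suppose fewer than $k$ of the $u_i$ satisfy $\lambda_i = 1$ and $Xu_i \neq 0$. Then the subtracted sum is at most $\sum_{i\le k-1} s_i^2$. This is strictly less than $\sum_{i \le k} s_i^2$ because $s_k > 0$, which contradicts minimality.

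Therefore $r = k$ and all $\lambda_i = 1$. So $P = P_0$ is the orthogonal projector onto $\mathrm{range}(W)$, and $W$ is full rank. Finally, $P^2 = P$ gives $W W^\top W W^\top = WW^\top$. When $W$ has full column rank, $W^\top W$ is invertible, and multiplying on the left by $(W^\top W)^{-1}W^\top$ and on the right by $W(W^\top W)^{-1}$ yields $W^\top W = I_{k\times k}$, i.e.\ orthonormal columns.

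If one prefers to avoid the assumption $s_k > 0$, I would instead state the conclusion as follows: $WW^\top$ restricted to $\mathrm{range}(W)\cap \ker(X)^\perp$ acts as the identity, and the "full rank" clause then holds exactly when no column direction lies in $\ker X$.
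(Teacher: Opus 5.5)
Your proof is correct, and its skeleton matches the paper's. Both write $F(W)=\mathrm{tr}((I-A)^2S)$ with $A=WW^\top$ and $S=X^\top X$, then compare $W$ with a matrix realizing the orthogonal projector $P$ onto $\mathrm{range}(W)$. The paper does this through the identity $(I-A)^2=(I-P)+(P-A)^2$ together with $\mathrm{tr}((P-A)^2S)\ge 0$. Your spectral expansion computes the same gap explicitly: $F(W)-F(\tilde W)=\sum_i(1-\lambda_i)^2\|Xu_i\|^2$. One small fix: pad $[u_1,\dots,u_r]$ with $k-r$ zero columns so that $\tilde W\in\mathbb{R}^{n\times k}$.

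The real difference lies in the degenerate directions. The paper only says that equality ``forces $A=P$'' if $S$ is positive definite on $\mathcal U$, and it never verifies this. So its argument does not prove the lemma as stated, and your $X=0$ example shows that no argument could. Your Ky Fan counting argument under $s_k>0$ fills exactly this hole. It also derives full rank rather than assuming it. That is precisely what the proof of Theorem~\ref{thm:dend_loss} tacitly uses when it asserts that every optimal prefix $W^*_{[i]}$ has orthonormal columns. Your version applies there because $s_i\ge s_k>0$ for $i\le k$.

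One correction to your closing remark. The first half is right: at a global minimizer, every eigenspace of $WW^\top$ with eigenvalue $\lambda\notin\{0,1\}$ lies in $\ker X$, so $WW^\top$ fixes $\mathrm{range}(W)\cap\ker(X)^\perp$. The second half is wrong as written: orthonormality does not hold exactly when no column lies in $\ker X$.

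For a counterexample, take $X$ of rank one with right singular vector $v_1$, a unit vector $v_2\in\ker X$, $k=2$, and $W=\frac{1}{\sqrt2}[\,v_1+2v_2,\; -v_1+2v_2\,]$. Then $WW^\top=v_1v_1^\top+4v_2v_2^\top$ fixes $v_1$, so $F(W)=0$ is minimal and $W$ has full rank. Neither column lies in $\ker X$, yet the columns are not orthonormal.

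The right condition is $\mathrm{range}(W)\cap\ker X=\{0\}$. It is sufficient but not necessary: $W=[v_1,v_2]$ is an orthonormal minimizer that violates it.
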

\begin{proof}
Note that $\|X^\top-WW^\top X^\top\|_F^2
=\|(I-WW^\top)X^\top\|_F^2$. Using the identity \(\|M\|_F^2=\mathrm{tr}(MM^\top)\) with $M:=(I-WW^\top)X^\top,$
then
\begin{align*}
\|(I-WW^\top)X^\top\|_F^2
&=\mathrm{tr}( M M^\top) \\
&=\mathrm{tr}((I-WW^\top)X^\top\,((I-WW^\top)X^\top)^\top).
\end{align*}
Compute the transpose:
\[
((I-WW^\top)X^\top)^{\top}
= X (I-WW^\top)^\top
= X (I-WW^\top),
\]
since \(I-WW^{\top}\) is symmetric. Letting $S:=X^\top X\in \mathbb{R}^{n\times n}$, we have
\begin{align*}
\|(I-WW^{\top})X^\top\|_F^2
&=\mathrm{tr}((I-WW^\top)X^\top X(I-WW^\top)) \\
&=\mathrm{tr}((I-WW^\top)S(I-WW^\top)).
\end{align*}
Then use cyclicity of the trace to move the left factor to the right:
\[
\mathrm{tr}((I-WW^{\top})S(I-WW^{\top}))
=
\mathrm{tr}((I-WW^\top)(I-WW^\top)S)
=
\mathrm{tr}((I-WW^\top)^2S).
\]

Thus
\begin{align*}
\|X^{\top}-WW^{\top}X^{\top}\|_F^2
&=\|(I-WW^{\top})X^{\top}\|_F^2 \\
&=\mathrm{tr}((I-WW^{\top})^2\,S).
\end{align*}


For any solution $W$ consider $\mathcal U:=\operatorname{range}(W)$. Let $P = P_\mathcal{U}$ be the orthogonal projector onto $\mathcal U$, \emph{i.e.} $P^2 = P$ and $Px = x$ for all $x\in \mathcal{U}$. Since $A:=WW^\top$ maps into $\mathcal U$, we have $(I-P)A=0$, and since $A$ is symmetric we also have $A(I-P)=0$. Hence
\[
I-A=(I-P)+(P-A)
\quad\text{with}\quad
(I-P)(P-A)=0=(P-A)(I-P).
\]
So the cross terms in $(I-A)^2$ vanish and we have 
\[
(I-A)^2=(I-P)^2+(P-A)^2=(I-P)+(P-A)^2,
\]
and so \[\mathrm{tr}((I-A)^2S)=
\mathrm{tr}((I-P)S)
+
\mathrm{tr}((P-A)^2S).\]

It is not hard to show that $\mathrm{tr}(MN)\geq 0$ for two symmetric PSD matrices $M,N$ (since $\mathrm{tr}(MN) = \mathrm{tr}(M^{1/2}NM^{1/2})$ and $x^\top (M^{1/2}NM^{1/2})x = (M^{1/2}x)^\top N(M^{1/2}x)\geq 0$). Then since $S = X^TX$ is symmetric PSD and $(P-A)^2$ is symmetric PSD (since $x^\top(P-A)^2x = ((P-A)x)^\top ((P-A)x) = \|(P-A)x\|^2\geq 0$), we have that $\mathrm{tr}((P-A)^2S)\geq 0$ and hence \begin{align*}
\mathrm{tr}((I-A)^2S)
&=
\mathrm{tr}((I-P)S)
+
\mathrm{tr}((P-A)^2S) \\
&\geq
\mathrm{tr}((I-P)S).
\end{align*}
Equality holds iff $\mathrm{tr}((P-A)^2S)=0$. In particular, if $S$ is positive definite on $\mathcal U$ this forces $A=P$. In other words, for a fixed subspace $\mathcal{U}$, the first term is minimized when $WW^\top$ is exactly the orthogonal projector $P$. 

Now consider the columns of $W$. Each column $w_j\in \mathcal{U}$ is fixed by $WW^\top = P_\mathcal{U}$, \emph{i.e.} $WW^\top w_j = w_j$ for each $j$. Collecting these identities column-wise gives 
    \[WW^\top W = W\implies W(W^\top W-I_k) = 0.\]
    If $W$ is full rank, it's null space is trivial and we obtain $W^\top W-I_k = 0$, and thus $W^\top W = I_k$. 

\end{proof}

With lemma~\ref{lem:orth_projector} in hand, the equivalence between PCA maximum trace and minimum reconstruction objectives is straightforward:
\begin{theorem}
    For any $X\in \mathbb{R}^{N\times n}, W\in \mathbb{R}^{n\times k}$ we have the following equivalence:\[\max_{W^\top W = I}  \text{Tr}(W^\top X^\top X W) = \min_W \| X^\top - W W^\top X^\top \|_F^2\]
\end{theorem}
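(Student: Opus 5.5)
We read the statement as asserting that the two extremal problems are linked: the minimal reconstruction error equals $\|X\|_F^2$ minus the maximal projected variance, and both are attained at the same $W$, namely $W=V_{[k]}Q$. The plan is to prove the quantitative identity
\[
\min_W \|X^\top - WW^\top X^\top\|_F^2 \;=\; \|X\|_F^2 - \max_{W^\top W=I}\mathrm{Tr}(W^\top X^\top X W),
\]
and then identify the common optimizers.

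\textbf{Step 1: reduce to orthonormal $W$.} By Lemma~\ref{lem:orth_projector}, at a global minimizer $WW^\top$ is the orthogonal projector $P$ onto $\mathcal U=\mathrm{range}(W)$. The proof of that lemma also gives a bound valid for every $W$: $\mathrm{tr}((I-WW^\top)^2S)\ge \mathrm{tr}((I-P)S)$, where $S=X^\top X$. So it suffices to minimize over orthogonal projectors $P$ of rank at most $k$.

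Any such projector can be written as $P=\tilde W\tilde W^\top$ with $\tilde W^\top\tilde W=I_k$. If $\mathrm{rank}\,P<k$, pad $\mathcal U$ with extra orthonormal directions; this enlarges $P$ and, since $S\succeq 0$, can only decrease $\mathrm{tr}((I-P)S)$. Hence
\[
\min_W F(W)=\min_{\tilde W^\top\tilde W=I}\mathrm{tr}((I-\tilde W\tilde W^\top)S).
\]
This step is the main obstacle. Lemma~\ref{lem:orth_projector} only yields orthonormality when $W$ is full rank, so the rank-deficient case must be handled by this padding argument rather than taken for granted.

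\textbf{Step 2: expand the trace for orthonormal $W$.} When $W^\top W=I$, the matrix $I-WW^\top$ is idempotent, and by cyclicity of the trace
\[
\|X^\top-WW^\top X^\top\|_F^2=\mathrm{tr}(S)-\mathrm{tr}(WW^\top S)=\|X\|_F^2-\mathrm{Tr}(W^\top X^\top XW).
\]
The term $\|X\|_F^2$ does not depend on $W$, so minimizing the left side over the constraint set is the same as maximizing $\mathrm{Tr}(W^\top X^\top XW)$ over it. Combined with Step 1, this gives the displayed identity.

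\textbf{Step 3: identify the optimizers and the optimal value.} By the Lagrangian theorem of the previous subsection, the maximizers are exactly $W=V_{[k]}Q$ with $Q$ orthogonal, and the maximum value is $\sum_{i\le k}s_i^2$. Substituting into the identity, the minimum reconstruction error is $\sum_{i>k}s_i^2$, attained at the same $W=V_{[k]}Q$. This is the precise sense of the stated "$=$": the two problems share their optimizers, and their optimal values sum to $\|X\|_F^2$.
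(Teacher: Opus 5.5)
Your argument is correct and follows the paper's route: reduce to orthogonal projectors via Lemma~\ref{lem:orth_projector}, then expand $\|X^\top - WW^\top X^\top\|_F^2 = \|X\|_F^2 - \mathrm{Tr}(W^\top X^\top X W)$, so that minimizing the reconstruction error is the same as maximizing the trace. Your version is tighter than the paper's: you apply the lemma's inequality to every $W$, so no minimizer has to be assumed to exist; you handle rank-deficient projectors explicitly by padding; and you state the precise sense of the ``$=$'' (the two optimal values sum to $\|X\|_F^2$). The one overreach is the word ``exactly'' in Step 3, which requires $s_k > s_{k+1}$, an assumption the statement does not make.
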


\begin{proof}
    Consider the Frobenius norm of $X^\top - WW^\top X^\top$. By lemma~\ref{lem:orth_projector}, this is minimized when $WW^\top$ is the orthogonal projector onto the range of $W$, \emph{i.e.} $(WW^\top)^2 = WW^\top$. Hence
    \begin{align*}
        \|X^\top - WW^\top X^\top\|_F^2 &= \text{Tr}\left[\left(X^\top-WW^\top X^\top\right)^\top\left(X^\top-WW^\top X^\top\right)\right]\\&=
        \text{Tr}\left[\left(X-XWW^\top\right)\left(X^\top-WW^\top X^\top\right)\right]
        \\&=
        \text{Tr}\left[XX^\top - XWW^\top X^\top - XWW^\top X^\top + X(WW^\top)^{\cancel{2}} X^\top\right]
        \\&= \text{Tr}(X^TX) - 2\text{Tr}(W^\top X^\top XW) + \text{Tr}(W^\top X^\top XW)\\&=
        \|X\|_F^2 - \text{Tr}(W^\top X^\top XW)
    \end{align*}
    Using the fact that the trace of a product of matrices is invariant under cyclic permutations. Since $\|X\|_F^2$ is constant with respect to $W$, we thus have \[\min_{W} \| X^\top - W W^\top X^\top \|_F^2 \implies \max_{W^\top W = I}  \text{Tr}(W^\top X^\top X W).\] Starting from $\max_{W^\top W = I}  \text{Tr}(W^\top X^\top X W)$, a similar argument follows in reverse using $W^\top W = I$. Thus we have 
    \[\min_{W} \| X^\top - W W^\top X^\top \|_F^2 \iff \max_{W^\top W = I}  \text{Tr}(W^\top X^\top X W).\]
\end{proof}

Now we are ready to prove the main theorem for ordered PCA:
    
\begin{theorem}
\label{thm:dend_loss}
    Let $X \in \mathbb{R}^{N \times n}$, with singular value decomposition $X=USV^\top$ and distinct singular values $s_1>s_2>\cdots>s_k>0$, then 
\[ k s_1^2 + (k-1)s_2^2 + \cdots + 2 s_{k-1}^2 + s_k^2 = \min_{W \in \mathbb{R}^{n \times k}}\sum_{i=1}^k  \|X^\top-W_{[i]}W_{[i]}^\top X^\top\|_F^2, \]
and the minimum is achieved when $W=V_{[k]}\Sigma=VI_{n\times k}\Sigma$, where $\Sigma = \mathrm{diag}(\varepsilon_1, \dots, \varepsilon_k)$, with $\varepsilon_j = \pm 1$.
\end{theorem}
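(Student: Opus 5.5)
The plan is to decouple the nested objective term by term. Each summand $\|X^\top - W_{[i]}W_{[i]}^\top X^\top\|_F^2$ is bounded below by its own minimum over all $W_{[i]}\in\mathbb{R}^{n\times i}$, and then I show that a single $W$ attains all $k$ termwise minima simultaneously.

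\emph{The value.} By Lemma~\ref{lem:orth_projector} and the max-trace/min-reconstruction equivalence, applied with $k$ replaced by $i$, the minimum of the $i$-th term is
\[
\|X\|_F^2-\max_{W_{[i]}^\top W_{[i]}=I}\mathrm{Tr}(W_{[i]}^\top X^\top X W_{[i]})=\|X\|_F^2-\sum_{j\le i}s_j^2 .
\]
The last equality uses the earlier theorem characterizing the PCA subspace.

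Summing over $i=1,\dots,k$ gives the lower bound
\[
k\|X\|_F^2-\bigl(ks_1^2+(k-1)s_2^2+\cdots+s_k^2\bigr).
\]
This shows that the weighted sum $ks_1^2+(k-1)s_2^2+\cdots+s_k^2$ in the statement is the optimal value of the equivalent \emph{maximization} $\max_{W^\top W=I}\sum_i\|XW_{[i]}\|_F^2$. The reconstruction minimum equals $k\|X\|_F^2$ minus it. I would prove the statement in this corrected form and note the discrepancy explicitly.

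\emph{Attainment.} For $W=V_{[k]}\Sigma$ we have $W_{[i]}=V_{[i]}\Sigma_{[i]}$, so $W_{[i]}W_{[i]}^\top=V_{[i]}V_{[i]}^\top$. A direct computation with $X=USV^\top$ then shows that the $i$-th residual is $\sum_{j>i}s_j^2=\|X\|_F^2-\sum_{j\le i}s_j^2$. Thus every term attains its own minimum, and the lower bound is achieved.

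\emph{Characterizing all minimizers.} If $W$ attains the sum of termwise minima, then each term must be individually minimal, since each is bounded below by its own minimum. For the $i$-th term, $s_i>0$ means a rank-deficient $W_{[i]}$ loses at least $s_i^2$, so $W_{[i]}$ has full rank. Lemma~\ref{lem:orth_projector} then gives that $W_{[i]}$ has orthonormal columns and $W_{[i]}W_{[i]}^\top$ is the projector onto an $i$-dimensional subspace maximizing captured variance.

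Because $s_i>s_{i+1}$, that subspace is uniquely $\mathrm{span}(V_{[i]})$, so $W_{[i]}W_{[i]}^\top=V_{[i]}V_{[i]}^\top$ for every $i$. Subtracting consecutive identities gives $w_iw_i^\top=v_iv_i^\top$. Hence $w_i=\varepsilon_i v_i$ with $\varepsilon_i=\pm1$, that is, $W=V_{[k]}\Sigma$.

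\emph{Main obstacle.} The hardest step is the uniqueness of the top-$i$ variance subspace. It needs a strict gap $s_i>s_{i+1}$ for every $i\le k$, including $s_k>s_{k+1}$, which the hypothesis $s_1>\cdots>s_k>0$ does not literally guarantee. I would argue uniqueness via Ky Fan's inequality: any orthonormal $W_{[i]}$ with $\mathrm{Tr}(W_{[i]}^\top X^\top X W_{[i]})=\sum_{j\le i}s_j^2$ must have range invariant under $X^\top X$ and spanned by eigenvectors with the top $i$ eigenvalues. I would add the gap $s_k>s_{k+1}$ as an assumption, or else conclude only for $i<k$ and treat the last column's sign and direction ambiguity separately.
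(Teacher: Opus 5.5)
Your proposal is correct and follows the same route as the paper's proof. Both bound $J(W)$ below by the sum of the termwise minima $m_i$, show that $V_{[k]}$ attains every $m_i$ at once, force each prefix of a minimizer onto $\mathrm{span}(V_{[i]})$, and recover the columns one at a time; your subtraction $w_iw_i^\top=v_iv_i^\top$ replaces the paper's induction.

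Your explicit evaluation of $m_i$ improves on the paper, which never computes $\sum_i m_i$. You are right that the displayed $ks_1^2+\cdots+s_k^2$ is the optimum of the equivalent maximization, while the true minimum of the reconstruction sum is $k\|X\|_F^2$ minus it. This agrees with the paper's own dendritic lower bound in its second appendix. Your full-rank argument also fills a step the paper only asserts. So does your remark that uniqueness at $i=k$ needs $s_k>s_{k+1}$, which follows from the main text's assumption that all nonzero singular values are distinct.
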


\begin{proof}
    Let $J(W):= \sum_{i=1}^k  \|X^\top-W_{[i]}W_{[i]}^\top X^\top\|_F^2$ be the error of the full summed prefix objective and $m_i:=\mathrm{min}_{W_{[i]}}\|X^\top-W_{[i]}W_{[i]}^\top X^\top\|_F^2$ the minimum of individual $i$-dimensional PCA objectives. For each fixed $i$, we know that $m_i$ is attainted when $W_{[i]} = V_{[i]}Q_i$, however, this does not guarantee the optimality of any earlier prefixes.

    Instead of optimizing over each $i$-dimensional subspace independently, the minimizer of $J(W)$ is a single shared matrix $W$ whose prefixes must simultaneously define optimal subspaces of dimensions $1, \ldots,
    k$. This means that instead of choosing the rotations $Q_1, \ldots, Q_k$ independently, they must be selected simultaneously so that optimality is preserved at every prefix level.

    In general $J(W)\geq \sum_{i=1}^k m_i$, and this coupling may initially appear to severely restrict the set of viable solutions. However, we already know that such a solution exists, namely $W = V_{[k]}$. In that case, $W_{[i]} = V_{[i]} \quad \text{for all } i = 1, \ldots, k,$ and since $\mathrm{span}(V_{[i]})$ is the unique optimal $i$-dimensional principal subspace under the distinct singular value assumption, each prefix $W_{[i]}$ attains $m_i$ individually. Therefore \[J(V_{[k]}) = \sum_{i=1}^k m_i,\] and combined with the universal lower bound $J(W) \geq \sum_{i=1}^k m_i$, this establishes that $V_{[k]}$ is a global minimizer of $J$.
    
    Thus, any global minimizer $W^*$ of the full $J$ must satisfy that every prefix $W^*_{[i]}$ solves the $i$-dimensional PCA problem. Since $s_1>\dots >s_k>0$, the top $k$ right singular subspaces are distinct and we obtain $\mathrm{span}(W^*_{[i]}) = \mathrm{span}(V_{[i]})$ for all $i=1,\dots, k$. Moreover, each $W^*_{[i]}$ is a global minimizer of the $i$-th term, so by lemma~\ref{lem:orth_projector} $W^*_{[i]}(W^*_{[i]})^\top$ is an orthogonal projector onto $\mathrm{range}(W^*_{[i]})$; in particular, $W^*_{[i]}$ has orthonormal columns.
    
    To show exactness of solutions $V_{[k]}\Sigma$, we argue inductively on $i$. For $i = 1$, $\mathrm{span}(w_1^*) = \mathrm{span}(v_1)$, hence $w_1^* = \pm v_1$. Assume $w_j^*= \pm v_j$ for $j<i$. Since $\mathrm{span}(W^*_{[i]}) = \mathrm{span}(V_{[i]})$ and $\mathrm{span}(W^*_{[i-1]}) = \mathrm{span}(V_{[i-1]})$, orthonormality implies \[w_i^*\in \mathrm{span(V_{[i]})\cap\mathrm{span(V_{[i-1]})^\perp}} = \mathrm{span}(v_i),\]
    so $w_i^* = \pm v_i$. Thus $W^* = V_{[k]}\Sigma$, where $\Sigma = \mathrm{diag}(\varepsilon_1, \dots, \varepsilon_k)$, with $\varepsilon_j = \pm 1$.
\end{proof}


\section{}
\label{app:ODIN}
This appendix establishes the theoretical behavior of the ODIN loss functions in the linear regime. In general we have two different ways to proceed: either we can augment standard reconstruction minimization with an orthogonality term to recover an unordered version of $V$, or we can break the symmetry with dendrites to recover $V$ with the correct ordering. 

Section~\ref{subsec:no_dend} treats the baseline case of plain reconstruction combined with an orthogonality penalty, showing that this recovers the principal components but without any guaranteed ordering of components. Section~\ref{subsec:with_dend} then shows that replacing the reconstruction term with a dendritic reconstruction (even without any orthogonality constraint) is sufficient to recover the principal components in order of importance. Finally, section~\ref{subsec:combined} considers the full ODIN objective combining both dendritic reconstruction and orthogonality regularization, confirming that ordered recovery is preserved under the complete loss. See table~\ref{tab:solutions} for an overview of closed-form optimal solution sets in the linear regime.

\begin{table}[]
    \centering
    \begin{tabular}{|c|c|c|c|}
    \hline
        PCA/MSE & MSE + Orth & Ordered PCA/Dendrites & Dendrites + Orth\\
        \hline
         $V_{[k]}Q$& $V_{[k]}P\Sigma $& 
         $V_{[k]}\Sigma$ &$V_{[k]}\Sigma$\\ 
    \hline
    \end{tabular}
    \caption{Optimal solutions for considered loss objectives in the linear regime. Solutions progress from under-constrained to fully identified: standard PCA recovers any orthogonal rotation of $V_{[k]}$, adding an orthogonality penalty narrows this to any column permutation of $V_{[k]}$, the dendrite loss alone (and combined with orthogonality) uniquely identifies $V_{[k]}$ up to sign.}
    \label{tab:solutions}
\end{table}

\subsection{Plain MSE with Orthogonality}
\label{subsec:no_dend}
We begin with the simpler linear setup consisting of a single reconstruction term paired with an orthogonality penalty on latent correlations. This combination is the most natural starting point, as the orthogonality constraint alone might be expected to pin down the solution. However, as we show below, the resulting objective recovers only the principal components and does not enforce any ordering among solutions.

In this setting, the loss function takes the form:
\begin{equation}
\label{eq:no_dend}
    \mathcal{L}_{\text{MSE}} + \mathcal{L}_{\text{orth}}= \min_{W}\|X^\top-WW^\top X^\top\|_F^2+\sum_{i<j}(W^\top X^\top XW)_{ij}^2
\end{equation} and is minimized by some permutation of $V$.

First we show that this combined loss obtained by adding the orthogonality term is also minimized by an `orthogonal projector'.

\begin{lemma}
    If $W \in \mathbb{R}^{n\times k}$ is a global minimizer of the function
\[
    F(W)=\|X^\top-WW^\top X^\top\|_F^2+\sum_{i<j}(W^\top X^\top XW)_{ij}^2
\] then $WW^\top$ is the orthogonal projector onto $\mathrm{range}(W)$. 
\label{lem:projection_orth}
\end{lemma}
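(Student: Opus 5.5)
The plan is a sandwich argument. We compare $F$ with its reconstruction part alone, $G(W):=\|X^\top-WW^\top X^\top\|_F^2$, and show that a global minimizer of $F$ must in fact be a global minimizer of $G$. Lemma~\ref{lem:orth_projector} then applies directly.

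\emph{Step 1: lower bound.} The orthogonality term $\sum_{i<j}(W^\top X^\top XW)_{ij}^2$ is a sum of squares, so $F(W)\ge G(W)$ for every $W$. Hence $\inf_W F\ge \inf_W G$. By the equivalence theorem proved above (maximum trace versus minimum reconstruction), together with the characterization of the PCA maximizers, we have
\[
\min_W G(W)=\|X\|_F^2-\sum_{i=1}^k s_i^2=\sum_{i>k}s_i^2,
\]
and this value is attained at $W=V_{[k]}$.

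\emph{Step 2: the bound is attained by $F$.} At $W=V_{[k]}$ the orthogonality term vanishes. Indeed,
\[
V_{[k]}^\top X^\top X V_{[k]}=I_{n\times k}^\top S^\top S\, I_{n\times k}=\mathrm{diag}(s_1^2,\dots,s_k^2),
\]
which is diagonal. So $F(V_{[k]})=G(V_{[k]})=\min_W G$. Combined with Step 1, this gives $\min_W F=\min_W G$, and $V_{[k]}$ attains it.

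\emph{Step 3: transfer to an arbitrary minimizer.} Let $W$ be any global minimizer of $F$. Then
\[
\min G=F(W)=G(W)+\sum_{i<j}(W^\top X^\top XW)_{ij}^2\;\ge\; G(W)\;\ge\;\min G .
\]
So both inequalities are equalities. Therefore $G(W)=\min G$, i.e. $W$ is a global minimizer of $G$, and as a byproduct the off-diagonal part of $W^\top X^\top XW$ vanishes. Applying Lemma~\ref{lem:orth_projector} to $W$ as a global minimizer of $G$ yields that $WW^\top$ is the orthogonal projector onto $\mathrm{range}(W)$. The same lemma also gives orthonormal columns when $W$ has full rank.

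\emph{Main obstacle.} The only delicate point is inherited from Lemma~\ref{lem:orth_projector}. Its conclusion $A=P$ requires $\mathrm{tr}((P-A)^2S)=0$ to force $P=A$, which relies on $S=X^\top X$ being positive definite on $\mathcal U=\mathrm{range}(W)$. I would handle this exactly as that lemma does. At a minimizer, $\mathrm{span}(W)$ must be a top-$k$ principal subspace, on which $S$ has eigenvalues $s_1^2,\dots,s_k^2>0$ under the standing assumption $s_k>0$. This makes the positive-definiteness condition hold automatically.

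The point of this route is that it avoids a direct perturbation argument. Such an argument (replacing $W$ by a column-orthonormalized $W'$ with the same range) would change $W^\top SW$ and could increase the orthogonality penalty, and that is hard to control.
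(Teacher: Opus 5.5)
Your proof is correct and rests on the same idea as the paper's: the orthogonality penalty can be driven to zero at a minimizer of the reconstruction term, so $\min F=\min G$. The paper does this by rotating an orthonormal minimizer by the eigenvectors of $W^\top X^\top XW$; you simply take $V_{[k]}$.

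Your Step~3 sandwich shows that \emph{every} global minimizer of $F$ minimizes $G$, so Lemma~\ref{lem:orth_projector} applies to it. Your use of $s_k>0$ gives positive definiteness of $X^\top X$ on $\mathrm{range}(W)$. Both steps are left implicit in the paper. The rank condition is genuinely needed: if $\mathrm{rank}(X)<k$, you can append columns $c\,v$ with $Xv=0$ and $c\neq 0,\pm 1$ to get a global minimizer for which $WW^\top$ is not a projector.
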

\begin{proof}

As in lemma~\ref{lem:projection}, choose $ W\in\mathbb{R}^{n\times k}$ with orthonormal columns spanning $\mathcal{U}=\mathrm{range}(W)$ to minimize the first term. Now, for the second term, set $B:= W^{\top}S W$, which is symmetric. By the spectral theorem, there exists orthogonal $Q$ such that
\[
Q^\top BQ=\Lambda
\quad\text{is diagonal}.
\] Then we have
\[
(Q^\top W^\top) S (WQ) = W'^\top S W' = \Lambda,
\]
so all off-diagonal entries vanish and the second term becomes $0$, i.e.
\[
\sum_{i<j}(W'^\top S W')_{ij}^2=0.
\]
Moreover, the orthogonal transformation $W\mapsto W'= WQ$ does not affect the first term since, for any $Q\in \mathbb{R}^{k\times k}$ orthogonal, we have $\|X^T-WW^TX^T\|_F^2 = \|X^\top-(WQ)(WQ)^\top X^\top\|_F^2$. Thus, the ``orthogonal projector'' does not increase the first term, and the subsequent orthogonal rotation $ W\mapsto W'= WQ$ keeps the first term fixed while making the second term as small as possible (indeed $0$).
\end{proof}

The following lemma will be useful in showing that the solution to (\ref{eq:no_dend}) is any permutation of $V$:
\begin{lemma}
    Let $S = \text{{Diag}}(s_1, \dots, s_n)$ be a diagonal matrix with distinct $s_i\ne0$ and suppose that $Q$ is any orthogonal matrix satisfying $Q^\top SQ = D$, where $D$ is any full rank diagonal matrix. Then $Q$ is a permutation matrix.
    \label{lem:diagonals}
\end{lemma}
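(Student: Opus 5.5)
The plan is to turn the diagonalization condition into an eigenvector condition and then use the distinctness of the diagonal entries of $S$ to pin down each column of $Q$. Since $Q$ is orthogonal, $Q^\top S Q = D$ is equivalent to $SQ = QD$. Reading this column by column, the $j$-th column $q_j$ of $Q$ satisfies
\[
S q_j = d_j q_j, \qquad j = 1,\dots,n,
\]
where $d_j$ is the $j$-th diagonal entry of $D$. So every column of $Q$ is an eigenvector of $S$ with eigenvalue $d_j$. Each $q_j \neq 0$, since $Q$ is orthogonal.

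Next we identify the eigenspaces of $S$. Because $S = \mathrm{Diag}(s_1,\dots,s_n)$, its spectrum is $\{s_1,\dots,s_n\}$. Since the $s_i$ are distinct, the eigenspace for $s_i$ is exactly $\mathrm{span}(e_i)$, where $e_i$ is the $i$-th standard basis vector. To see this, write $(S - s_i I)x = 0$ coordinatewise: $(s_l - s_i)x_l = 0$ for every $l$, which forces $x_l = 0$ whenever $l \neq i$. Hence each $d_j$ equals some $s_{\pi(j)}$, and $q_j \in \mathrm{span}(e_{\pi(j)})$. Since $\|q_j\| = 1$, this gives $q_j = \pm e_{\pi(j)}$.

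Finally we show that $\pi$ is a bijection. The columns of $Q$ are pairwise orthogonal, and $\pm e_i$ is not orthogonal to $\pm e_i$. Therefore $\pi(j) \neq \pi(j')$ for $j \neq j'$, so $\pi$ is injective on $\{1,\dots,n\}$ and hence a permutation. Consequently $Q = P\Sigma$, where $P$ is the permutation matrix of $\pi$ and $\Sigma$ is a diagonal matrix of signs. As a by-product, $D = \mathrm{Diag}(s_{\pi(1)},\dots,s_{\pi(n)})$.

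I do not expect a real obstacle here. The only delicate point is the conclusion itself: the argument yields a \emph{signed} permutation matrix rather than a permutation matrix. I would state the result that way, reading ``permutation matrix'' up to sign, which is consistent with the $V_{[k]}P\Sigma$ entry in Table~\ref{tab:solutions}. I would also remark that neither the full-rank assumption on $D$ nor $s_i \neq 0$ is actually needed; distinctness of the $s_i$ alone suffices.
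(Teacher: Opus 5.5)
Your proof is correct, but it takes a different route from the paper's.

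The paper argues entrywise. It writes the off-diagonal conditions $\sum_k s_k q_{ki}q_{kj}=0$ and $\sum_k q_{ki}q_{kj}=0$, subtracts, and asserts that ``repeating this process'' forces $q_{ki}q_{kj}=0$ for all $k$ and $i\neq j$. As written, though, only these two linear relations are available. For $n\ge 3$ the elimination therefore needs the further identities $\sum_k s_k^m q_{ki}q_{kj}=0$ for $m\le n-1$, which do hold because $Q^\top S^m Q=D^m$. It also needs invertibility of the Vandermonde matrix in the distinct $s_k$. Only then does invertibility of $Q$ give exactly one nonzero entry per row and column.

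Your reformulation $SQ=QD$ packages all of these moment conditions at once. It reduces the lemma to the fact that a diagonal matrix with distinct entries has the coordinate lines as its eigenspaces. The signed-permutation structure then drops out immediately. It also becomes clear that $s_i\neq 0$ and full rank of $D$ are superfluous; the paper uses the latter only to get invertibility of $Q$, which orthogonality already provides.

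Both arguments actually establish $Q=P\Sigma$ rather than a literal permutation matrix. That is the form the paper uses downstream, so your explicit flag is appropriate.

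A further benefit of your route is that it shows exactly where squareness of $Q$ enters, namely through $QQ^\top=I$. This matters because the paper later invokes the lemma for the $n\times k$ matrix $V^\top W$. For $n\times k$ matrices with orthonormal columns, the conclusion is false unless one first restricts to a square block.
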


\begin{proof}
    The hypothesis $Q^\top SQ = D$ is exactly the condition
    \[q_i^\top Sq_j = \sum_{k=1}^n s_kq_{ki}q_{kj} = 0, \quad\forall i\ne j.\]
    If $Q$ is orthogonal, we have $\sum_{k}q_{ki}q_{kj} = 0$ for $i\ne j$. Multiplying by $s_1$ and subtracting from the above equation we get
    \[\sum_{k=2}^n(s_k-s_1)q_{ki}q_{kj} = 0.\]
    We can repeat this process and since all $s_i$ are distinct, all $(s_k-s_1), (s_k-s_2), \dots$ are non-zero. This forces $q_{ki}q_{kj} = 0$ for all $k$ and $i\ne j$. This means that for each row $k$ of $Q$, at most one column has a nonzero entry in that row. If we additionally assume that $D$ is full rank, then it is invertible and $Q$ must be invertible also. Since invertible matrices cannot have all columns/rows completely zero, we conclude that $Q$ has exactly one non-zero entry per-column per-row. Since $Q$ is orthogonal this non-zero entry must be equal to $\pm1$, giving the form of a permutation matrix. 
\end{proof}

We are now ready to prove the result:
\begin{theorem}
    Let $X \in \mathbb{R}^{N \times n}$, with singular value decomposition $X=USV^\top$ and distinct singular values $s_1>s_2>\cdots>s_k>0$, then 
    \[\sum_{i=1}^k s_i^2 = \min_{W \in \mathbb{R}^{n \times k}}  \|X^\top - WW^\top X^\top\|+ \sum_{i<j}(W^\top X^\top XW)_{ij}^2 \]
    where $s_i$ are the singular values of $X$ in decreasing order, and the minimum is achieved when $W=V_{[k]}A$ for some permutation matrix $A$.
\end{theorem}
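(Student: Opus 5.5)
The plan is to split the objective into its two nonnegative pieces, bound each from its own optimum, and then show that one family of matrices attains both optima at the same time. Write $F(W)=\|X^\top-WW^\top X^\top\|_F^2+\sum_{i<j}(W^\top X^\top XW)_{ij}^2$. The orthogonality term is a sum of squares, so it is at least $0$. For the reconstruction term I will use the paper's Proposition, which identifies $\min_W\|X^\top-WW^\top X^\top\|_F^2$ with $\max_{W^\top W=I}\mathrm{Tr}(W^\top X^\top XW)$. The Lagrangian theorem of Appendix~\ref{app:PCA} then evaluates this common optimum as $\sum_{i=1}^k s_i^2$, attained exactly at $W=V_{[k]}Q$ with $Q$ orthogonal. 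Under the paper's identification of the two formulations, this gives the lower bound $\sum_{i=1}^k s_i^2$ for $\min_W F$.

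Next I will show the bound is attained. Take $W=V_{[k]}A$ with $A$ a (signed) permutation matrix. Then $W$ has orthonormal columns spanning $\mathrm{span}(V_{[k]})$, so it is of the form $V_{[k]}Q$ and is optimal for the reconstruction term. Moreover $W^\top X^\top XW=A^\top\,\mathrm{diag}(s_1^2,\dots,s_k^2)\,A$ is diagonal, so the orthogonality term is $0$. Hence both terms are simultaneously at their individual optima, the stated value is the minimum, and every global minimizer must make each term individually optimal. This is the same decoupling device used in the proof of Theorem~\ref{thm:dend_loss}.

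For the characterization, let $W^*$ be any global minimizer. Because its reconstruction term is optimal, Lemma~\ref{lem:projection_orth} (together with Lemma~\ref{lem:projection}) says $W^*W^{*\top}$ is the orthogonal projector onto $\mathrm{range}(W^*)$, and the Lagrangian characterization forces $W^*=V_{[k]}Q$ with $Q$ orthogonal. Because its orthogonality term is $0$, the matrix
\[
W^{*\top}X^\top XW^*=Q^\top\,\mathrm{diag}(s_1^2,\dots,s_k^2)\,Q
\]
must be diagonal. This matrix is full rank since $s_k>0$, and its diagonal entries $s_i^2$ are distinct. Lemma~\ref{lem:diagonals} then gives that $Q$ is a signed permutation matrix $A$, so $W^*=V_{[k]}A$.

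The step I expect to be delicate is the passage from ``$W^*$ is optimal for the reconstruction term'' to ``$W^*=V_{[k]}Q$ with orthonormal columns.'' Lemma~\ref{lem:projection} only yields orthonormality when $W^*$ is full rank. My first attempt is to rule out rank deficiency directly. If $\mathrm{rank}(W^*)<k$, then $W^*W^{*\top}$ projects onto a subspace of dimension less than $k$. Since $s_k>0$, such a projection leaves strictly more residual than the best rank-$k$ projection, contradicting optimality of the reconstruction term. Once full rank is secured, the rest follows from the lemmas above.
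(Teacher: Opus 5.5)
\textbf{The gap is the value of the minimum, and no proof can repair it, because the displayed identity is false.}

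The paper's Proposition equates the max-trace and min-reconstruction problems only in the sense that they have the same optimizers. Its own proof shows that for $W$ with orthonormal columns
\[
\|X^\top-WW^\top X^\top\|_F^2=\|X\|_F^2-\mathrm{Tr}(W^\top X^\top XW).
\]
So the smallest reconstruction error is $\|X\|_F^2-\sum_{i=1}^k s_i^2=\sum_{i>k}s_i^2$, not $\sum_{i=1}^k s_i^2$. Your attainment step repeats the slip: evaluating directly gives $F(V_{[k]}A)=\sum_{i>k}s_i^2+0$.

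Two counterexamples:
\begin{itemize}
\item Take $N=n=k=1$, $X=[x]$ with $x\neq 0$, and $W=[1]$. Then $F(W)=0<s_1^2$.
\item Take $X=\mathrm{diag}(2,1)$ and $k=1$. The minimum is $1$, not $4$.
\end{itemize}
So your lower bound $F\ge\sum_{i=1}^k s_i^2$ is false. Your hedge ``under the paper's identification of the two formulations'' is exactly where you should have stopped. The claim should be corrected to $\min_W F(W)=\sum_{i>k}s_i^2$, reading the norm as $\|\cdot\|_F^2$ as you did. This is also what the paper's own residual bound $\mathrm{MSE}(X,\hat X_k)\ge\sum_{i>k}\lambda_i$ in Appendix~\ref{app:ODIN} implies. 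The paper's proof does not rescue the value either: it never evaluates the minimum and only argues about the form of the minimizer.

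\textbf{With the value corrected, your characterization of the minimizers goes through, and it is more careful than the paper's.}

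Your decoupling argument has three steps: split $F$ into two nonnegative terms, exhibit $V_{[k]}A$ as a common minimizer of both, and conclude that every global minimizer is optimal for each term separately. This characterizes all minimizers. The paper's proof is weaker in three places:
\begin{itemize}
\item It only argues, via Lemma~\ref{lem:projection_orth}, that one may assume some minimizer has projector form.
\item It takes full rank as a hypothesis rather than proving it.
\item It applies Lemma~\ref{lem:diagonals} to the non-square matrix $V^\top W$.
\end{itemize}
Your application of that lemma to the square matrix $Q^\top\mathrm{diag}(s_1^2,\dots,s_k^2)Q$ is the right one, and ``signed permutation'' is the accurate conclusion.

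One hypothesis is still missing from both arguments. Concluding $W^*=V_{[k]}Q$ from optimality of the reconstruction term requires the optimal $k$-dimensional subspace to be unique, i.e.\ $s_k>s_{k+1}$. The hypothesis $s_1>\cdots>s_k>0$ alone does not give this.
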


\begin{proof}
    Note that the second term is minimized when $W^\top X^\top XW$ is diagonal, \emph{i.e.} \[W^\top X^\top XW = W^\top VS^2V^\top W = D,\] for some diagonal matrix $D$. By lemma~\ref{lem:diagonals}, $A = V^\top W$ is a permutation matrix precisely when $W$ is orthogonal. However, by lemma~\ref{lem:projection_orth}, we may assume that any global minimizer $W$ may take the form that yields the orthogonal projector $W W^\top = P_{\mathcal{U}}$, with $ \mathcal{U} = \mathrm{range}(W)$. In particular, if $W$ is full rank then $W W^\top = P_{\mathcal{U}}$ necessarily implies that it's columns are an orthonormal basis for $\mathcal{U}$. Thus $W = V_{[k]}A$, where $A$ is a permutation matrix. This is the PCA projection onto the first $k$ principal components, though not in any particular order. 
\end{proof}

\subsection{Dendrites with no Orthogonality}
\label{subsec:with_dend}

Having seen that orthogonality alone is insufficient to order the principal components, we now show that the dendritic reconstruction objective achieves this ordering on its own, even in the absence of any orthogonality constraint. Recall: 
    \[\mathcal{L}_{\text{Dend}}= \sum_{i=1}^k\mathcal{L}_{\text{MSE}}(X,\hat{X}_i)= \min_W \sum_{i=1}^k \|X^\top-W_{[i]}W_{[i]}^\top X^\top\|_F^2.\]
    
This is the same loss function as in theorem~\ref{thm:dend_loss}, which was shown to recover the true PCA solution. Unlike the unordered objective considered above, the nested structure of the dendritic terms breaks the rotational symmetry of the solution space, and the minimizer in the linear regime coincides with $V_{[k]}$.

\subsubsection{Lower Bound for Dendritic MSE}

One practical consequence of a neural network implementation of dendritic structure is the behavior of the loss during training. The nested structure of the reconstruction terms imposes a sequence of lower bounds on the achievable error at each stage, bounding the total aggregate MSE away from zero.

Let $X \in \mathbb{R}^{N\times n}$ with covariance matrix $\Sigma_X$ and eigenvalue decomposition
\[
\Sigma_X = P \Lambda P^\top,
\]
where $\Lambda = \operatorname{diag}(\lambda_1, \lambda_2, \ldots, \lambda_n)$ and $\lambda_1 \geq \lambda_2 \geq \cdots \geq \lambda_n \geq 0$ denote the variances along the principal components of $X$. Since the principal components contribute additively to total variance, any $k$-dimensional reconstruction $\hat X_k$ satisfies
\[
\mathrm{MSE}(X, \hat{X}_k) \geq \sum_{i=k+1}^{n} \lambda_i.
\]
Unless the data lie entirely within the span of some $k$-dimensional subspace, this bound is strictly positive. Otherwise, any method that reconstructs $X$ using a reduced latent representation of dimension $k<n$ is inevitably unable to capture the residual variance contained in the remaining components.

Applying this reasoning to the ODIN architecture, the partial reconstruction errors $L_1, L_2, \ldots, L_n$ satisfy analogous lower bounds. Specifically, $L_i\geq \sum_{j=i+1}^n \lambda_j$, and the total dendritic loss is therefore bounded below by:
\[
\mathcal{L}_{\text{Dend}} = \sum_{i=1}^n L_i \geq \sum_{i=1}^n\sum_{j=i+1}^n \lambda_j,
\]
reflecting the diminishing contribution of residual variance as successive latent dimensions are incorporated.

\subsection{Combined Loss Objective}
\label{subsec:combined}

We now analyze the full ODIN objective, combining dendritic reconstruction with orthogonality regularization. The results of the previous two sections suggest these terms play similar roles, and we confirm that their combination does not interfere with ordered recovery. We show that the minimizer of the full loss in the linear regime is again $W = V_{[k]}$ with potential sign changes.
\begin{theorem}
    Let $X \in \mathbb{R}^{N \times n}$, with singular value decomposition $X=USV^\top$ and distinct singular values $s_1>s_2>\cdots>s_k>0$, then 
    \[ k s_1^2 + (k-1)s_2^2 + \cdots + 2 s_{k-1}^2 + s_k^2 = \min_{W \in \mathbb{R}^{n \times k}}  \mathcal{L}_{\text{Dend}}+ \mathcal{L}_{\text{orth}}, \]
    and the minimum is achieved when $W=V_{[k]}\Sigma=VI_{n\times k}\Sigma$, where $\Sigma = \mathrm{diag}(\varepsilon_1, \dots, \varepsilon_k)$, with $\varepsilon_j = \pm 1$.
\end{theorem}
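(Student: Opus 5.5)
The argument is a sandwich. The orthogonality term is nonnegative, so the dendritic term already supplies a lower bound. We then show the candidate minimizer attains that bound and kills the orthogonality term. Finally, any minimizer of the sum must separately minimize each term.

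First, since $\mathcal{L}_{\text{orth}}=\sum_{i<j}(W^\top X^\top XW)_{ij}^2\ge 0$ for every $W$, we have
\[
\mathcal{L}_{\text{Dend}}(W)+\mathcal{L}_{\text{orth}}(W)\ \ge\ \mathcal{L}_{\text{Dend}}(W)\ \ge\ \min_{W}\mathcal{L}_{\text{Dend}} .
\]
By Theorem~\ref{thm:dend_loss}, that minimum is the stated weighted sum of the $s_i^2$, taken exactly as the value quoted there. This gives the lower bound on the combined objective.

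Second, evaluate the combined loss at $W=V_{[k]}\Sigma$. The dendritic term attains its minimum there by Theorem~\ref{thm:dend_loss}. For the orthogonality term, write $X^\top X=VS^\top SV^\top$. Then
\[
W^\top X^\top XW=\Sigma I_{n\times k}^\top S^\top S I_{n\times k}\Sigma=\mathrm{diag}(s_1^2,\dots,s_k^2),
\]
using $\Sigma^2=I$. This matrix is diagonal, so $\mathcal{L}_{\text{orth}}=0$. Hence the lower bound is attained, and the minimum value of the combined loss equals the minimum of $\mathcal{L}_{\text{Dend}}$, as claimed.

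Third, characterize all minimizers. Suppose $W^*$ attains the minimum of the sum. Then both inequalities in the chain above must be equalities, so $\mathcal{L}_{\text{Dend}}(W^*)=\min\mathcal{L}_{\text{Dend}}$ and $\mathcal{L}_{\text{orth}}(W^*)=0$. The first equality means $W^*$ is a global minimizer of the dendritic loss. The uniqueness part of the proof of Theorem~\ref{thm:dend_loss} then gives $W^*=V_{[k]}\Sigma$. That argument uses the prefix-optimality, Lemma~\ref{lem:orth_projector} for orthonormality of the prefixes, and the induction $w_i^*\in\mathrm{span}(V_{[i]})\cap\mathrm{span}(V_{[i-1]})^\perp$. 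The second equality holds automatically, so $\mathcal{L}_{\text{orth}}$ plays no role beyond being nonnegative.

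The main obstacle is the third step, which leans on the uniqueness claim in Theorem~\ref{thm:dend_loss}. That proof applies Lemma~\ref{lem:orth_projector} to each prefix. Its conclusion that $W^*_{[i]}$ has orthonormal columns needs $W^*_{[i]}$ to have full rank and $S$ to be positive definite on its range. Both follow because the prefix must span the $i$-dimensional space $\mathrm{span}(V_{[i]})$, where $s_1>\dots>s_i>0$. I would state this explicitly rather than rely on it implicitly.
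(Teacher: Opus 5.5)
Your argument is correct in structure, and it takes a genuinely different route from the paper.

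\textbf{Comparison with the paper.} The paper starts from the orthogonality term. It asserts that a minimizer must make $W^\top X^\top XW$ diagonal, and uses Lemma~\ref{lem:diagonals} to reduce $A=V^\top W$ to a signed permutation. It then rewrites the dendritic loss as $k\sum_j s_j^2-\sum_{\ell=1}^k(k-\ell+1)\,a_\ell^\top S^2a_\ell$ and lets the decreasing weights $k-\ell+1$ pick the ordering. You use only $\mathcal{L}_{\text{orth}}\ge 0$, exhibit a point that minimizes both terms at once, and import Theorem~\ref{thm:dend_loss}. This shows the orthogonality term is inert in the linear regime, and your argument holds verbatim for any weight $\lambda_{\text{orth}}\ge 0$.

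The paper's route makes the ordering mechanism explicit, but your sandwich closes two holes in it:
\begin{enumerate}
\item The phrase ``the second term is minimized'' presupposes that a common minimizer exists. That is exactly what your second step establishes.
\item Lemma~\ref{lem:diagonals} is about square orthogonal matrices. For $k<n$, orthonormal columns with $A^\top S^2A$ diagonal need not be coordinate vectors (take $a_1=(e_1+e_2)/\sqrt2$, $a_2=e_3$). So the reduction to signed permutations requires knowing that $\mathrm{range}(W)$ is spanned by singular vectors, and only the prefix argument provides that.
\end{enumerate}

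\textbf{Two things to fix, both inherited from the paper.}

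First, the value of the minimum is wrong. $\sum_{i=1}^k(k-i+1)s_i^2$ is the maximum of the captured variance $\sum_i\|XW_{[i]}\|_F^2$, not the minimum of the residual loss. At $V_{[k]}\Sigma$ the $i$-th dendritic term equals $\sum_{j>i}s_j^2$, so the minimum is $\sum_{i=1}^k\sum_{j>i}s_j^2=k\|X\|_F^2-\sum_{i=1}^k(k-i+1)s_i^2$. For $k=1$ this is $\|X\|_F^2-s_1^2$, not $s_1^2$. This is also what the paper's own computation in its proof yields. Your reduction does not depend on the value, so compute it directly rather than taking the value ``as quoted there.''

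Second, step three needs a spectral gap at $k$. Identifying $\mathrm{span}(W^*_{[i]})=\mathrm{span}(V_{[i]})$ requires $s_i>s_{i+1}$, which is the uniqueness condition in Eckart--Young. The condition $s_i>0$ only gives full column rank of the prefix. The hypothesis supplies the gap for $i<k$ but not for $i=k$. If $s_k=s_{k+1}$, then $[v_1,\dots,v_{k-1},\cos\theta\,v_k+\sin\theta\,v_{k+1}]$ minimizes both terms, so the minimizer is not unique. Either assume $s_k>s_{k+1}$, or claim only that the minimum is attained at $V_{[k]}\Sigma$, which is all the statement literally asserts.
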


\begin{proof}
    Note that the second term is minimized when $W^\top X^\top XW$ is diagonal, \emph{i.e.} \[W^\top X^\top XW = W^\top VS^2V^\top W = D,\] for some diagonal matrix $D$. By lemma~\ref{lem:diagonals}, $A = V^\top W$ is a permutation matrix precisely when $W$ is orthogonal. However, by lemma~\ref{lem:projection}, we may assume that any global minimizer $W$ may take the form that yields the orthogonal projector $W W^\top = P_{\mathcal{U}}$, with $ \mathcal{U} = \mathrm{range}(W)$. In particular, since $W$ is full rank then $W W^\top = P_{\mathcal{U}}$ necessarily implies that it's columns are orthonormal.

    So we may assume that $A$ is a permutation matrix and additionally that, for fixed $i$, $W_{[i]}W_{[i]}^\top$ is the orthogonal projector onto $\text{span}\{V a_1, \dots Va_i\}$. Thus \[\|X^\top-W_{[i]}W_{[i]}^\top X^\top\|_F^2 = \|X^\top\|_F^2 - \|W_{[i]}W_{[i]}^\top X^\top\|_F^2\]
    And since $W_{[i]}^\top X^\top = A_{[i]}^\top V^\top VSU^\top = A_{[i]}^\top SU^\top$, we have
    \begin{align*}
        \|W_{[i]}W_{[i]}^\top X^\top\|_F^2 &= \text{Tr}(XW_{[i]}\cancel{W_{[i]}^\top W_{[i]}}W_{[i]}^\top X^\top) \\&= \text{Tr}(XW_{[i]}W_{[i]}^\top X^\top)\\&= \text{Tr}(A_{[i]}^\top S^2 A_{[i]})
    \end{align*}
    Thus
    \begin{align*}
        \|X^\top-W_{[i]}W_{[i]}^\top X^\top\|_F^2 &= \text{Tr}(X^\top X) - \text{Tr}(A_{[i]}^\top S^2 A_{[i]})\\&=
        \sum_{j=1}^n s_j^2 - \sum_{\ell=1}^i a_{\ell}^\top S^2a_\ell
    \end{align*}
    Summing over $i$ we have
    \[\sum_{i=1}^k \|X^\top-W_{[i]}W_{[i]}^\top X^\top\|_F^2 = k\sum_{j=1}^ns_j^2 - \sum_{\ell=1}^k(k-\ell+1)a_\ell^\top S^2a_\ell.\]
    So minimizing the reconstruction loss is equivalent to maximizing $\sum_{\ell=1}^k(k-\ell+1)a_\ell^\top S^2a_\ell$. Since $S^2$ has distinct eigenvalues, then by Lemma \ref{lem:diagonals} we have $A = P\Sigma$, where $P$ is a permutation matrix and $\Sigma$ is a sign matrix. Since the weights are decreasing $k>k-1>\dots >1$, the sum is maximized when the $a_\ell$ vectors are those which pick the largest singular values (\emph{i.e.} $A = I_{n\times k}\Sigma\implies W = VI_{n\times k}\Sigma = V_{[k]}\Sigma$)
\end{proof}

\end{document}